\pgfplotsset{select coords between index/.style 2 args={
    x filter/.code={
        \ifnum\coordindex<#1\fi
        \ifnum\coordindex>#2\fi
    }
}}
\pgfplotsset{compat=1.15}
\newcommand{\gettikzcoordinates}[3]{%
	\tikz@scan@one@point\pgfutil@firstofone#1\relax
	\pgfmathsetmacro{\myx}{round(0.99626*\the\pgf@x/0.0283465)/1000}
	\pgfmathsetmacro{\myy}{round(0.99626*\the\pgf@y/0.0283465)/1000}
	\pgfmathsetmacro{\myz}{round(0.99626*\the\pgf@z/0.0283465)/1000}
	\global\edef#2{(\myx,\myy,\myz)}%
}
\tikzset{viewport/.style 2 args={
		x={({cos(-#1)*1cm},{sin(-#1)*sin(#2)*1cm})},
		y={({-sin(-#1)*1cm},{cos(-#1)*sin(#2)*1cm})},
		z={(0,{cos(#2)*1cm})}
}}
\pgfplotsset{only foreground/.style={
		restrict expr to domain={rawx*\CameraX + rawy*\CameraY + rawz*\CameraZ}{-0.05:100},
}}
\pgfplotsset{only background/.style={
		restrict expr to domain={rawx*\CameraX + rawy*\CameraY + rawz*\CameraZ}{-100:0.05}
}}
\def\addFGBGplot[#1]#2;{
	\addplot3[#1,only background, opacity=0.25] #2;
	\addplot3[#1,only foreground] #2;
}
\newcommand{\ViewAzimuth}{-10}
\newcommand{\ViewElevation}{18}
\def\rad{.9}
\newcommand{\drawsphere}[1]{
	\pgfmathsetmacro{\CameraX}{sin(\ViewAzimuth)*cos(\ViewElevation)}
	\pgfmathsetmacro{\CameraY}{-cos(\ViewAzimuth)*cos(\ViewElevation)}
	\pgfmathsetmacro{\CameraZ}{sin(\ViewElevation)}
	\path[use as bounding box] (-1,-1) rectangle (1,1); 
	\begin{scope}
		\clip (0,0) circle (\rad);
		\begin{scope}[transform canvas={rotate=-20}]
			\shade [ball color=white] (0,0.5*\rad) ellipse ({1.8*\rad} and {1.5*\rad});
		\end{scope}
	\end{scope}
	\begin{axis}[
		at = {(0,0)},
		hide axis,
		view={\ViewAzimuth}{\ViewElevation},     
		every axis plot/.style={very thin},
		disabledatascaling,                      
		anchor=origin,                           
		viewport={\ViewAzimuth}{\ViewElevation}, 
		]
		\foreach \t in {0,45,...,180}
		\addFGBGplot[domain=0:2*pi, samples=100, samples y=1, color=gray, line width=.5] (\rad*cos(\t)*sin(deg(x)), {\rad*sin(\t)*sin(deg(x))}, {\rad*cos(deg(x))});
		
		\addFGBGplot[domain=0:2*pi, samples=100, samples y=1, line width=.5] ({\rad*sin(#1)*cos(deg(x))}, {\rad*sin(#1)*sin(deg(x))}, {\rad*cos(#1)});
}
\newcommand{%
  \tikzsetnextfilename{}%
  \input{.tikz}%
}[1]{%
  \tikzsetnextfilename{#1}%
  \input{#1.tikz}%
}
\tikzstyle{border} = [line width = 1]
\tikzstyle{legendstyle} = [draw = gray, font = \small]
\definecolor{unigreen}{HTML}{1A4C39}
\definecolor{uniblue}{HTML}{405994}
\definecolor{unired}{HTML}{A6624C}
\definecolor{tabred}{HTML}{D62728}
\definecolor{tabgreen}{HTML}{2CA02C}
\definecolor{tabblue}{HTML}{1F77BE}
\definecolor{cecolor}{HTML}{3274A1}
\definecolor{cefixcolor}{HTML}{E1812C}
\definecolor{sccolor}{HTML}{3A923A}
\newcommand{\bbN}{\mathbb{N}}
\newcommand{\bbR}{\mathbb{R}}
\newcommand{\bbS}{\mathbb{S}}
\newcommand{\R}{\mathbb{R}}
\newcommand{\N}{\mathbb{N}}
\newcommand{\bbc}{\mathbb{c}}
\DeclareMathOperator{\bb1}{\mathbb{1}}
\newcommand{\mcB}{\mathcal{B}}
\newcommand{\mcC}{\mathcal{C}}
\newcommand{\mcL}{\mathcal{L}}
\newcommand{\mcP}{\mathcal{P}}
\newcommand{\mcW}{\mathcal{W}}
\newcommand{\mcX}{\mathcal{X}}
\newcommand{\mcY}{\mathcal{Y}}
\newcommand{\mcZ}{\mathcal{Z}}
\newcommand{\inprod}[2]{\langle #1 , #2 \rangle}
\newcommand\restr[2]{{
  \left.\kern-\nulldelimiterspace 
  #1 
  \vphantom{\big|} 
  \right|_{#2} 
  }}
\DeclareMathOperator*{\argmax}{\arg\!\max}
\DeclareMathOperator*{\softmax}{softmax}
\newcommand{\ie}{i.e.}
\newcommand{\eg}{e.g.}
\newcommand{\wrt}{w.r.t.~}
\newcommand{\supcon}{SC}
\newcommand{\ce}{CE}
\newcommand*{\lmset}{\{\mskip-4mu\{}
\newcommand*{\rmset}{\}\mskip-4mu\}}
\newcommand{\mset}[1]{\lmset #1 \rmset}
\newcommand{\set}[1]{\left\{ #1\right\}}
\newcommand{\mult}[1]{\operatorname{m}_{#1}}
\newcommand{\msetch}[2]{
\left.\mathchoice
  {\left(\kern-0.45em\binom{#1}{#2}\kern-0.45em\right)}
  {\big(\kern-0.10em\binom{\smash{#1}}{\smash{#2}}\kern-0.10em\big)}
  {\left(\kern-0.10em\binom{\smash{#1}}{\smash{#2}}\kern-0.10em\right)}
  {\left(\kern-0.10em\binom{\smash{#1}}{\smash{#2}}\kern-0.10em\right)}
\right.}
\newcommand{\By}[3]{B_{#1}}
\newcommand{\ymset}{\Upsilon}
\newcommand{\dm}[2]{\inprod{#1}{#2}}
\newcommand{\LCE}{\mcL_{\operatorname{CE}}}
\newcommand{\LSC}{\mcL_{\operatorname{SC}}}
\newcommand{\LLCE}{\ell_{\operatorname{CE}}}
\newcommand{\LLSC}{\ell_{\operatorname{SC}}}
\newcommand{\LSCB}{\ell_{\operatorname{SC}}}
\newcommand{\SCE}{S}
\newcommand{\SSCB}{S}
\newcommand{\enc}{\varphi}
\newcommand{\dec}{W}
\newcommand{\norm}[1]{\left\lVert#1\right\rVert}
\newcommand{\papertitle}{Dissecting Supervised Contrastive Learning}
\icmltitlerunning{\papertitle}
\theoremstyle{plain}
\newtheorem{lemma}{Lemma}
\newtheorem{definition}{Definition}
\newtheorem{remark}{Remark}
\begin{document}
\bookmarksetup{startatroot}
\twocolumn[
\icmltitle{Dissecting Supervised Contrastive Learning}



\icmlsetsymbol{equal}{*}

\begin{icmlauthorlist}
  \icmlauthor{Florian Graf}{sbg}
  \icmlauthor{Christoph D. Hofer}{sbg}
  \icmlauthor{Marc Niethammer}{unc}
  \icmlauthor{Roland Kwitt}{sbg} 
\end{icmlauthorlist}

\icmlaffiliation{sbg}{Department of Computer Science, University of Salzburg, Austria}
\icmlaffiliation{unc}{UNC Chapel Hill}

\icmlcorrespondingauthor{Florian Graf}{\href{mailto:florian.graf@sbg.ac.at}{\texttt{florian.graf@sbg.ac.at}}}

\icmlkeywords{Machine Learning, ICML}

\vskip 0.3in
]



\printAffiliationsAndNotice{}  

\begin{abstract}
Minimizing cross-entropy over the softmax scores of a linear map composed with a high-capacity encoder is arguably the most popular choice for training neural networks on supervised learning tasks. 
However, recent works show that one can \emph{directly} optimize the encoder instead, to obtain equally (or even more) discriminative representations via a supervised variant of a contrastive objective. 
In this work, we address the question whether there are fundamental differences in the sought-for representation geometry in the output space of the encoder at minimal loss.
Specifically, we prove, under mild assumptions, that both losses attain their minimum once the representations of each class collapse to the vertices of a regular simplex, inscribed in a hypersphere. 
We provide empirical evidence that this configuration is attained in practice and that reaching a close-to-optimal state typically indicates good generalization performance. 
Yet, the two losses show remarkably different optimization behavior. 
The number of iterations required to perfectly fit to data scales \emph{superlinearly} with the amount of randomly flipped labels for the supervised contrastive loss. 
This is in contrast to the approximately \emph{linear} scaling previously reported for networks trained with cross-entropy.
\end{abstract}

\section{Introduction}
\label{sec:introduction}
In modern machine learning, neural networks have become the prevalent choice to  parametrize maps from a complex input space $\mcX$ to some target space $\mcY$. In supervised learning tasks, where the output space is a set of discrete labels, $\mcY = \{1,\ldots,K\}$, it is common to implement predictors of the form
\begin{equation}
    f = \argmax\circ \,\dec \circ \enc \enspace.
    \label{eqn:predictor}
    \end{equation}
In this construction, $f$ is realized as the composition of an encoder $\enc: \mcX \to \mcZ \subseteq \bbR^h$, a linear map/classifier $\dec: \bbR^h \to \bbR^K$ and the $\argmax$ operation which handles the transition from continuous output to discrete label space.

Despite myriad advances in designing networks that implement $\enc$, such as \cite{Krizhevsky12a,He16a,Zagoruyko16a,Huang17a}, the training routine rarely deviates from minimizing the \emph{cross-entropy (CE)} between softmax scores of $W \circ \enc$ and one-hot encoded discrete labels.
Assuming sufficient encoder capacity, it is clear that at minimal loss, 
the representations of training instances, i.e., their images under $\enc$, are in a linearly separable configuration (as the classifier is implemented as a linear map).
Remarkably, this behavior is not only observed on real data with semantically meaningful labels, but also on real data with randomly flipped labels \cite{CZhang2017a}.

\begin{figure*}
    \centering{
    \begin{tikzpicture}
    \def \w{3.6cm}
    \def \shift{2.9cm}
    \def \ms{.75}
    \foreach \idx/\pos\loss in {
        {000.csv}/0/9.029,
        {002.csv}/1/8.800, 
        {004.csv}/2/7.513, 
        {010.csv}/3/4.608, 
        {020.csv}/4/4.598, 
        {180.csv}/5/4.596}{
        
        \begin{scope}
            \begin{axis}[
                at={(\pos*\shift,0)},
                width = \w,
                height = \w,
                legend pos = north west,
                legend cell align = left,
                axis equal,
                title ={\small \textbf{SC} = \loss},
                ticks = none,
                hide axis,
                xlabel = {},
                ylabel = {},
                ylabel style = {font=\small, yshift=-3pt},
                xlabel style = {font=\small},
                tick label style = {font=\small},
                legend style = {font=\small},
                ymin = -1.1,
                ymax=1.1,
                xmin=-1.1,
                xmax=1.1,
            ]
            \addplot[
                color=tabred,
                mark=*,
                only marks,
                mark size = \ms,
                line width = 2,
                select coords between index={0}{99}] table [col sep=comma]  {figures/sphere/raw/\idx};
            \addplot[
                color=tabblue,
                mark=*,
                only marks,
                mark size = \ms,
                line width = 2,
                select coords between index={100}{199}] table [col sep=comma]  {figures/sphere/raw/\idx};
            \addplot[
                color=tabgreen,
                mark=*,
                only marks,
                mark size = \ms,
                line width = 2,
                select coords between index={200}{299}] table [col sep=comma] {figures/sphere/raw/\idx};
                \addplot [domain=-180:180, samples=100, color=gray, line width = 1] ({cos(x)},{sin(x)});
        \end{axis}
        \end{scope}
    }  
    \draw [-{Latex[round,length=2mm]}, line width = .6] (0.005*\textwidth,-.5) -- ({.995\textwidth},-.5) node[above left] {\footnotesize Progress};

    \foreach \idx/\pos\loss in {
        {000.csv}/0/1.322,
        {100.csv}/1/1.015,
        {150.csv}/2/0.867, 
        {200.csv}/3/0.710, 
        {300.csv}/4/0.499, 
        {900.csv}/5/0.265
    }{            
        \begin{scope}
            \begin{axis}[
                clip mode=individual,
                at={(\pos*\shift,-4cm)},
                width = \w,
                height = \w,
                legend pos = north west,
                legend cell align = left,
                axis equal,
                title ={\small \textbf{CE} = \loss},
                ticks = none,
                hide axis,
                xlabel = {},
                ylabel = {},
                ylabel style = {font=\small, yshift=-3pt},
                xlabel style = {font=\small},
                tick label style = {font=\small},
                legend style = {font=\small},
            ]
            \addplot[
                color=tabblue,
                mark=*,
                only marks,
                mark size = \ms,
                line width = 2,
                select coords between index={1}{99}] table [col sep=comma]  {figures/ce/raw/\idx};
            \addplot[
                color=tabred,
                mark=*,
                only marks,
                mark size = \ms,
                line width = 2,
                select coords between index={100}{199}] table [col sep=comma]  {figures/ce/raw/\idx};
            \addplot[
                color=tabgreen,
                mark=*,
                only marks,
                mark size = \ms,
                line width = 2,
                select coords between index={200}{299}] table [col sep=comma] {figures/ce/raw/\idx};
            \addplot[
                color=black,                        
                line width = 1,
                select coords between index={300}{301}] table [col sep=comma] {figures/ce/raw/\idx};
            \addplot[
                color=black,                        
                line width = 1,
                select coords between index={302}{303}] table [col sep=comma] {figures/ce/raw/\idx};
            \addplot[
                color=black,                        
                line width = 1,
                select coords between index={304}{305}] table [col sep=comma] {figures/ce/raw/\idx};
            \addplot[
                only marks,
                color=black,                        
                line width = 1,
                select coords between index={300}{300}] table [col sep=comma] {figures/ce/raw/\idx};
            \addplot[
                only marks,
                color=black,                        
                line width = 1,
                select coords between index={302}{302}] table [col sep=comma] {figures/ce/raw/\idx};
            \addplot[
                only marks,
                color=black,                        
                line width = 1,
                select coords between index={304}{304}] table [col sep=comma] {figures/ce/raw/\idx};
        \end{axis}
        \end{scope}
}  
\end{tikzpicture}
    }
    \caption{Loss comparison on a three-class toy problem in 2D with 100 representations ($z_n$) per class. \emph{Left to right} indicates optimization progress. The \emph{top} row shows the point configurations while minimizing the supervised contrastive (SC) loss, \wrt $z$ drawn uniformly on $\bbS^1$. The \emph{bottom} row shows the point configurations when minimizing cross-entropy (CE) over $\softmax(Wz)$ scores (and an $L_2$ penalty $\lambda \|W\|_F^2$),
    \wrt $W$ and $z$ drawn uniformly within the unit disc. For the \ce~loss, black discs (\begin{tikzpicture}\fill (0,0) circle(2pt);\end{tikzpicture})
    indicate the weights and the rays show their directions. In both cases, the $z_n$ with equal label collapse to the vertices of a regular simplex. \emph{Best-viewed in color.}
    \label{fig:toy_sc_vs_ce}}
\end{figure*}
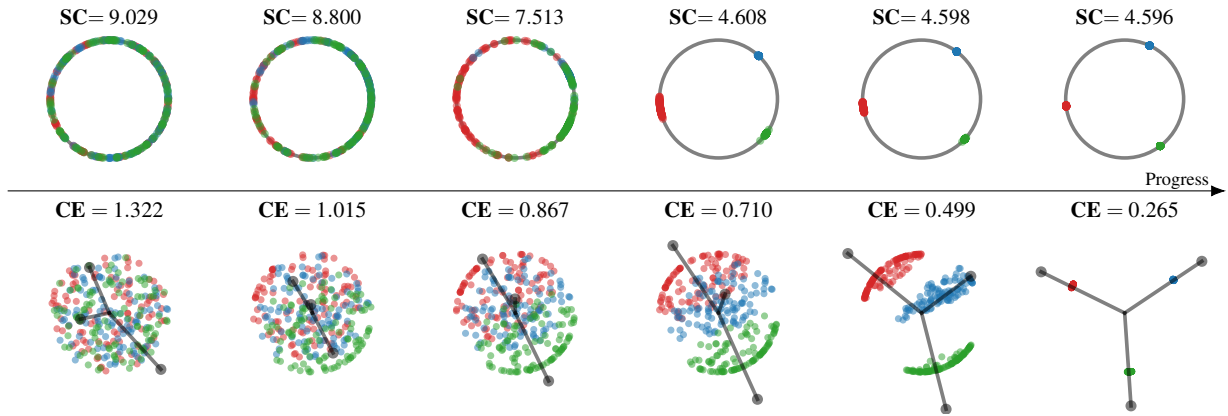

Alternatively, one could aim for \emph{directly} learning an encoder that is compatible with a linear classifier and the $\argmax$ decision rule. Recent works \cite{Koshla20a, Han20a} have shown that this is indeed possible via a supervised variant of a contrastive loss \cite{Chopra05a,Hadsell06a} that has full access to label information. 
Informally, this \emph{supervised contrastive (SC)} loss comprises two competing dynamics: an attraction and a repulsion force. 
The former pulls representations from the same class (positives) closer together, the latter pushes representations from different classes (negatives) away from each other. 
A similar mechanic underpins the triplet loss \cite{Weinberger09a}, the $N$-pairs loss \cite{Sohn16a}, or the soft nearest-neighbor loss \cite{Salakhutdinov07a,Frosst19a} and contributes to the success of self-supervised learning, framed as an instance discrimination task \cite{Oord18a,Chen20a,Henaff20a}. In the context of the latter, positives are typically defined as different views of the \emph{same} instance.

Most notably, predictors obtained by first learning $\enc$ via the supervised contrastive loss, followed by a composition with a linear map, not only yield state-of-the-art results on popular benchmarks, but show increased robustness towards input corruptions and hyperparameter choices \cite{Koshla20a}.
This warrants a closer analysis of the underlying effects. 
While we focus on the formulation of \cite{Koshla20a}, a similar analysis most likely holds for related variants. Specifically, we take a first step toward understanding potential differences in the output space of the encoder, induced either by (1) minimizing (softmax) cross-entropy over $W \circ \enc$, or (2) minimizing the supervised contrastive loss directly over the outputs of $\enc$. 
Characterizing the sought-for \emph{geometric arrangement} of representations of training instances, at minimal loss, is an immediate starting point.  
Our analysis yields \emph{two insights}, summarized below:

\paragraph*{Insight 1 (theoretical).} 
Under the assumption of an encoder $\enc$ that is powerful enough to realize any geometric arrangement of the representations in $\mcZ$, we analyze all loss minimizing configurations of the supervised contrastive and cross-entropy loss, respectively.
More precisely, we prove (see \cref{subsection:analysis_sc}) that the supervised contrastive loss (see \cref{def:sc}) attains its minimum if and only if the representations of each class collapse to the vertices of an origin-centered \emph{regular $K-1$ simplex}, cf. Fig.~\ref{fig:toy_simplex}.
For the cross-entropy loss, we prove a similar, but more nuanced result (see \cref{subsection:analysis_ce}) which is supplemental to an existing line of research. 
In particular, under a norm constraint on the outputs of $\enc$, we show that (1) representations also collapse to the vertices of an origin-centered regular $K-1$ simplex and (2) the classifier weights are (positive) scalar multiples of the simplex vertices. 
Additionally, when subject to $L_2$
penalization, the weights attain equal norm, characterized by a function of the regularization strength. Fig.~\ref{fig:toy_sc_vs_ce} visualizes the convergence to such a configuration on a toy example. 
In \cref{sec:related_work}, we link these results to recent prior work, where an evenly spaced arrangement of classifier weights on the unit hypersphere is either \emph{prescribed} or \emph{explicitly} enforced.

\paragraph*{Insight 2 (empirical).} While our theoretical results assume an \emph{ideal} encoder, we provide empirical evidence on popular vision benchmarks, that the sought-for regular simplex configurations can be attained in practice.
Yet, networks trained with the supervised contrastive loss (1) tend to converge to a state closer to the loss minimizing configuration and (2) empirically yield better generalization performance. 
Hence, as loss minimization strives for a similar geometry of the encoder output for both loss functions (cf. Insight 1), we conjecture that differing optimization dynamics are the primary cause for obtaining solutions of different quality. 
One striking difference is observed when training on data with an increasing fraction of randomly flipped labels, illustrated in Fig.~\ref{fig:time_to_overfit} for a ResNet-18 (CIFAR10), trained with (1) cross-entropy and (2) the supervised contrastive loss (with a subsequently optimized linear classifier $W$).

\begin{figure}[h!]
    \centering
    \begin{tikzpicture}
\begin{axis}[
    width = .99\columnwidth,
    height = 5cm,
    legend pos = north west,
    legend cell align = left,
    axis line style = legendstyle,
    xlabel = {Label corruption},
    ylabel = {Time to fit (max. 100k)},
    xlabel style = {font=\small},
    ylabel style = {font=\small, yshift=-3pt},
    tick label style = {font=\small},
    legend style = {font=\small},
    ymin = 0,
    ymax=60000,
    ytick= {0,20000,40000,60000},
    yticklabels={0,20k,40k,60k},
    ytick align = outside,
    ytick pos = left,
    scaled y ticks=false, 
    xmin=0,
    xmax=1,
    xtick = {0.0,0.2,0.4,0.6,0.8,1.0},
    xtick align = outside,
    xtick pos = bottom,
    grid=both,
    grid style={line width=.1pt, draw=gray!30},
]
\addplot[
    color=cecolor,
    mark=*,
    mark size = {1.5},
    line width = 2,
    ] table [x=x, y=y, col sep=space] {figures/overfit_data/CE_raw.csv};
    \addlegendentry{Cross-Entropy (\textbf{CE})}
    \addlegendentry{Supervised Contrastive (\textbf{SC})}
        \addlegendimage{color=sccolor,
        mark=*,
        mark size = {1.5},
        line width = 2,}
    \addplot[
        color=sccolor,
        mark=*, 
        mark indices = {1,...,8},
        mark size = {1.5},
        line width = 2,
        ] table [x=x, y=y, col sep=space] {figures/overfit_data/SC_raw.csv};
    \addplot[
        color=sccolor,
        mark=square*, 
        mark indices = {9},
        only marks,
        mark size = {3},
        ] table [x=x, y=y, col sep=space] {figures/overfit_data/SC_raw.csv};
\end{axis}
\end{tikzpicture}
    \vspace{-0.1cm}
      \caption{\emph{Time to fit} of a ResNet-18 (on CIFAR10) as a function of increasing label corruption. The green square (\textcolor{sccolor}{\scriptsize $\blacksquare$}) marks the point at which zero training error can no longer be achieved.
    \label{fig:time_to_overfit}}
\end{figure}
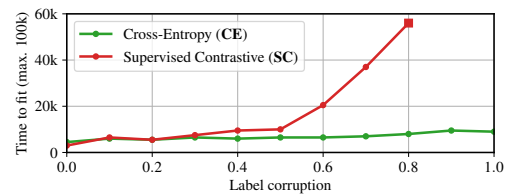

While \cite{CZhang2017a} report an approximately \emph{linear} increase in the time to fit\footnote{i.e., the number of iterations to reach zero training error.} for networks trained with cross-entropy, training with the supervised contrastive loss exhibits a clearly \emph{superlinear} behavior. 
In fact, for a given iteration budget, fitting becomes impossible beyond 
a certain level of label corruption.
This suggests that the supervised contrastive loss exerts some form of implicit regularization during optimization, yielding a parameter incarnation of the network which effectively prevents fitting to random labels.

\vskip-.1cm
\paragraph*{Overview.} \cref{sec:preliminaries} and \cref{sec:analysis} provide the technical details that underpin Insight 1. 
\cref{sec:related_work} draws connections to prior work and \cref{sec:experiments} presents further experiments along the lines of Insight 2. 
\cref{sec:discussion} concludes with a discussion of the main points.

\vskip0.5cm
\section{Preliminaries}
\label{sec:preliminaries}
\vskip.1cm
Consider a supervised learning task with $N\in\N$ training samples, \ie, the learner has access to data $X = (x_1,\ldots,x_N) \in \mcX^N$, drawn i.i.d. from some distribution, and labels, $\{1,\ldots,K\} = [K] \ni y_n = \bbc(x_n)$, assigned to each $x_n$ by an unknown function $\bbc:\mcX \rightarrow [K]$.

We denote the unit-hypersphere (in $\bbR^h$) of radius $\rho>0$ by $\bbS^{h-1}_\rho = \{x \in \bbR^h: \|x\|=\rho \}$; in case of $\rho=1$, we write $\bbS^{h-1}$.
The map $\enc_{\theta}: \mcX \rightarrow \mcZ \subseteq \R^h$ identifies an encoder (see \cref{sec:introduction}), parametrized by a neural network with parameters $\theta$; 
we write $Z_{\theta} = \big(\enc_{\theta}(x_1), \dots \enc_{\theta}(x_N)\big)$ as the image of $X$ under $\enc_{\theta}$. 
When required (e.g., in \cref{subsection:analysis_sc}), we denote a batch by $B$, and identify the batch as the multi-set of indices $\lmset n_1,\ldots,n_b \rmset$ with $n_i \in [N]$. 
For our analysis of the supervised contrastive loss, in \cref{subsection:analysis_sc}, we assume $b\geq 3$.

Under the assumption of an \emph{ideal encoder}, i.e., a map $\enc_\theta$ that can realize every possible geometric arrangement $Z_\theta$ of the representations, we can decouple the loss formulations from the encoder. This facilitates to interpret $Z_\theta$ as a \emph{free configuration} $Z=(z_1,\ldots,z_N)$ of $N$ labeled points (hence, we can omit the dependency on $\theta$).
\vskip.1cm
\subsection{Definitions} 
\label{subsection:definitions}
\vskip-.1cm
For our purposes, we define the \ce~and \supcon~loss, resp., as the loss over all $N$ instances in $Z$. In case of the \ce~loss, this is the average over all instance losses; in case of the \supcon~loss, we sum over \emph{all} batches of size $b \in \bbN$. 
While the normalizing constant is irrelevant for our results, we point out that normalizing the \supcon~loss would depend on the cardinality of \emph{all} multi-sets of size $b$.

\vskip1cm
\begin{restatable}[\textbf{Cross-entropy loss}]{definition}{rest@def@ce} 
\label{def:ce}
    Let $\mcZ \subseteq \bbR^h$ and let $Z$ be an $N$ point configuration, $Z = (z_1,\ldots,z_N) \in \mcZ^N$, with labels $Y=(y_1,\ldots,y_N) \in [K]^N$; let $w_y$ be the $y$-th row of the linear classifiers weight matrix $W \in \R^{K \times h}$. The cross-entropy loss $\LCE(\,\cdot\,, W;\,Y): \mcZ^N \to \bbR$ is defined as
    \begin{equation}
         \ Z \mapsto \frac{1}{N}\sum\limits_{n=1}^N \LLCE(Z, W;\, Y, n)
        \label{def:lce}  
    \end{equation}
    with $\LLCE(\,\cdot\,, W; Y, n): \mcZ^N \to \bbR$ given by
    \begin{equation}
        \LLCE(Z, W;\, Y, n) = 
        -
        \log
        \left(
            \frac
            {
                \exp(
                \dm{z_n}{w_{y_n}})
            }
            {
                \sum\limits_{l=1}^K 
                \exp(
                \dm{z_n}{w_l})
            }
        \right)
        \enspace. 
    \label{def:llce}
    \end{equation}
\end{restatable}

\begin{restatable}[\textbf{Supervised contrastive loss}]{definition}{rest@def@sc}
\label{def:sc}
    Let $\mcZ = \mathbb{S}^{h-1}_{\rho_{\mathcal Z}} \subseteq \bbR^h$ and let $Z$ be an $N$ point configuration, $Z = (z_1,\ldots,z_N) \in \mcZ^N$, with labels $Y=(y_1,\ldots,y_N) \in [K]^N$. 
    For a fixed batch size $b\in \bbN$, we define 
    \begin{equation}
        \mcB = \{\lmset n_1, \dots, n_b \rmset: n_1, \dots, n_b \in [N]\}
    \label{def:index_multi_set}
    \end{equation}
    as the set of all index multi-sets of size $b$. 
    The supervised contrastive loss $\LSC(\,\cdot\,;\, Y): \mcZ^N \to \bbR$ is defined as 
    \begin{equation}  
    Z \mapsto \sum\limits_{B \in \mcB}
    \LLSC(Z; Y, B)
    \label{def:lsc}
    \end{equation}
    with the batchwise loss $\LLSC(\,\cdot\, ;\, Y,B):\mcZ^N \to \bbR$ given by
    \footnote{For notational reasons, we set $\frac{1}{|B_{y_i}|-1}\bb1_{\set{|B_{y_i}|>1}}=0$ when $|B_{y_i}|=1$.}
    \begin{equation}
    -\sum\limits_{i \in B}
    \frac{\bb1_{\set{|B_{y_i}|>1}}}{|B_{y_i}|-1}\hspace{-0.1cm}
    \sum\limits_{j \in B_{y_i}\setminus \lmset i \rmset} \hspace{-0.2cm}
    \!\log
    \left(
        \frac
        {\exp\big(\dm{z_i}{z_j}\big)}
        {
            \sum\limits_{k \in B \setminus \lmset i \rmset}\hspace{-0.2cm}
            \exp\big(\dm{z_i}{z_k}\big)
        }
    \right)
    \label{def:llsc}
    \end{equation}
    where $B_{y_i} = \lmset j \in B: y_{j} = y_i \rmset$ denotes the multi-set of indices in a batch $B \in \mcB$ with label equal to $y_i$.
    \footnote{
        {\cref{def:sc} differs from the original definition by Khosla et al. \cite{Koshla20a} in the following aspects: 
        First, we do not explicitly duplicate batches (\eg, by augmenting each instance). 
        For fixed index $n$, this does not guarantee that at least one other instance with label equal to $y_n$ exists. 
        However, this is formally irrelevant, as the contribution to the summation is zero in that case. 
        Nevertheless, batch duplication is subsumed in our definition. 
        Second, we adapt the definition to multi-sets, allowing for instances to occur more than once. 
        If batches are drawn with replacement, this could indeed happen in practice. 
        Third, we omit scaling the inner products $\dm{\cdot}{\cdot}$ in \cref{def:llsc} by a temperature parameter $1/\tau, \tau >0$, as this complicates the notation. Instead we implicitly subsume this scaling into the radius $\rho_{\mathcal{Z}}$ of~ $\bbS^{h-1}_{\rho_{\mcZ}}$.}
    }
\end{restatable}

As the regular simplex inscribed in a hypersphere plays a key role in our results, we formally define this object next.

\vskip1ex
\begin{restatable}[\textbf{$\rho$-Sphere-inscribed regular simplex}]{definition}{rest@def@simplex}
\label{def:simplex}
    Let $h,K \in \bbN$ with $K\le h+1$.
    We say that $\zeta_1, \dots, \zeta_K \in \R^h$ form the vertices of a regular simplex inscribed in the hypersphere of radius $\rho>0$, if and only if the following conditions hold:
    \vspace{-0.2cm}
    \begin{enumerate}
        [labelindent=7pt,leftmargin=*,label=(S\arabic*),labelwidth=\widthof{\ref{def:simplex:s3}}]
        \item\label{def:simplex:s1} $\sum_{i \in [K]} \zeta_i = 0$ 
        \item\label{def:simplex:s2} $\| \zeta_i \| = \rho$\, for $i \in [K]$
        \item\label{def:simplex:s3} $\exists d \in \R: d = \inprod{\zeta_i}{\zeta_j}$\, for $1 \leq i < j \leq K$ 
    \end{enumerate}
\end{restatable}
\noindent Fig.~\ref{fig:toy_simplex} shows such configurations (for $K=2,3,4$) on $\bbS^2$.

\begin{figure}
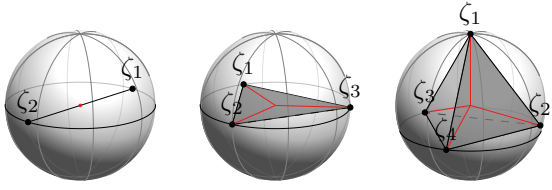

    \centering{
        \newsavebox{\tempboxa}
        \savebox\tempboxa{\begin{tikzpicture}
          \input{figures/simplex/simplex1.tex}
        \end{tikzpicture}}
        \newsavebox{\tempboxb}
        \savebox\tempboxb{\begin{tikzpicture}
          \input{figures/simplex/simplex2.tex}
        \end{tikzpicture}}
        \newsavebox{\tempboxc}
        \savebox\tempboxc{\begin{tikzpicture}
          \input{figures/simplex/simplex3.tex}
        \end{tikzpicture}}
    
        \begin{tikzpicture}
          \node at (0,0) {\usebox\tempboxa};
          \node at (2.5,0) {\usebox\tempboxb};
          \node at (5,0) {\usebox\tempboxc};
          \path[] 
          (-1.5,-1.5) -- (5.5,-1.5) -- (5.5,2) -- (-1.5,2)--cycle;
        \end{tikzpicture}
    }
    \caption{Regular simplices inscribed in $\bbS^2$.\label{fig:toy_simplex}}
    \end{figure}
\begin{remark}
    The assumption $K\le h+1$ is crucial, as it is a necessary and sufficient condition for the existence of the regular simplex.
    In our context, $K$ denotes the number of classes and $K\leq h+1$ is typically satisfied, as the output spaces of encoders in contemporary neural networks are high-dimensional, e.g., 512-dimensional for a ResNet-18 on CIFAR10/100.
    If it is violated, then the bounds derived in \cref{sec:analysis} still hold, but are not tight. Studying the loss minimizing configurations in this regime is much harder. Even for the related and more studied Thomson problem of minimizing the potential energy of $K$ equally charged particles on the 2-dimensional sphere, the minimizers are only known for $K\in \set{2,3,4,5,6,12}$ \cite{Borodachov19a}.
\end{remark}

\section{Analysis}
\label{sec:analysis}
We recap that we aim to address the following question: which $N$ point configurations $Z = (z_1,\ldots,z_N)$ yield minimal \ce~and \supcon~loss? \cref{subsection:analysis_ce} and \cref{subsection:analysis_sc} answer this question, assuming a sufficiently high dimensional representation space $\mcZ\subseteq \bbR^h$, i.e., $K\le h+1$, and \emph{balanced} class labels $Y$, i.e., $\big|\{ i \in [N]: y_i = y\}\big| = \nicefrac{N}{K}$, irrespective of the class $y$. \emph{For detailed proofs we refer to the supplementary material}. 

\subsection{Cross-Entropy Loss}
\label{subsection:analysis_ce}
We first provide a lower bound on the \ce~loss, under the constraint of norm-bounded representations.

\vskip1ex
\begin{restatable}[\textbf{Cross-entropy loss}]{theorem}{rest@thm@ce@bound@frob}
\label{thm:ce_bound_frob}
    Let $\rho_{\mathcal Z}>0$, $\mcZ = \set{z\in \bbR^h:~ \norm{z}\le \rho_{\mathcal Z}}$.
    Further, let $Z=(z_1,\ldots,z_N) \in \mcZ^N$ be an $N$ point configuration with labels $Y=(y_1,\ldots,y_N) \in [K]^N$ and let $W \in \bbR^{K \times h}$ be the weight matrix of the linear classifier from \cref{def:ce}.
    If the label configuration $Y$ is balanced, then 
    \begin{align*}
        &\LCE(Z, W;\,Y)
        \\ &\ge
        \log \left(
            1 + (K-1) \exp \left( 
                - \rho_{\mathcal Z} \frac {\sqrt{K}}{K-1}
                \|W\|_F
            \right)
        \right)
        \enspace,
    \end{align*}
    with equality if and only if there are $\zeta_1, \dots, \zeta_K \in \R^h$ such that:
    \begin{enumerate}[labelindent=8pt,leftmargin=*,label=(C\arabic*),labelwidth=\widthof{\ref{thm:ce_bound_frob:c3}}]
        \item\label{thm:ce_bound_frob:c1} 
        $\forall n \in [N]: z_n = \zeta_{y_n}$ 
        \item\label{thm:ce_bound_frob:c2} 
        $\{\zeta_y\}_{y}$ form a $\rho_{\mcZ}$-sphere-inscribed regular simplex
        \item\label{thm:ce_bound_frob:c3} 
        $\exists \rho_{\mcW} > 0: \forall y \in \mcY: w_{y} = \frac{\rho_{\mcW}}{\rho_{\mcZ}}  \zeta_{y}$ 
    \end{enumerate}
\end{restatable}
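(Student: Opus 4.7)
\begin{psketch}
The plan is to derive the bound by two applications of Jensen's inequality followed by two applications of Cauchy--Schwarz, and then to trace the equality conditions through each step. To set up the first Jensen, I would rewrite each summand as
\[
\LLCE(Z,W;Y,n) = \log\!\left(1 + \sum_{l\neq y_n}\exp\bigl(\langle z_n, w_l - w_{y_n}\rangle\bigr)\right).
\]
A short calculation using $\bar w := \tfrac{1}{K}\sum_l w_l$ shows that $\sum_{l\neq y_n}\langle z_n, w_l - w_{y_n}\rangle = K\langle z_n,\bar w-w_{y_n}\rangle$, so Jensen applied to the strictly convex $\exp$ over the $K-1$ terms yields
\[
\LLCE(Z,W;Y,n) \ge f\!\left(\tfrac{K}{K-1}\,\langle z_n,\bar w - w_{y_n}\rangle\right),\quad f(t):=\log\bigl(1+(K-1)e^t\bigr).
\]
Since $f$ is strictly convex and strictly increasing, averaging over $n$ and applying Jensen a second time gives $\LCE(Z,W;Y)\ge f\bigl(\tfrac{K}{K-1}M\bigr)$, where $M:=\tfrac{1}{N}\sum_n\langle z_n,\bar w-w_{y_n}\rangle$.

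Next, I would bound $-M$ from above. Using balancedness to regroup by class, $-M = \tfrac{1}{K}\sum_y \langle \bar z_y, w_y-\bar w\rangle$, where $\bar z_y$ is the mean of the $z_n$ with $y_n=y$. Since $\|\bar z_y\|\le \rho_{\mcZ}$ by convexity of the norm, Cauchy--Schwarz gives $\langle \bar z_y, w_y-\bar w\rangle \le \rho_{\mcZ}\,\|w_y-\bar w\|$. A second Cauchy--Schwarz on $\sum_y \|w_y-\bar w\|$ combined with $\sum_y \|w_y-\bar w\|^2 = \|W\|_F^2 - K\|\bar w\|^2 \le \|W\|_F^2$ yields $-M \le \rho_{\mcZ}\|W\|_F/\sqrt{K}$. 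Monotonicity of $f$ then produces the stated lower bound.

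The main obstacle is the equality characterization, since four inequalities have to be saturated simultaneously. Equality in the $\|W\|_F^2 - K\|\bar w\|^2 \le \|W\|_F^2$ step forces $\bar w = 0$, and equality in the outer Cauchy--Schwarz forces $\|w_y\|$ to be constant, say equal to $\rho_{\mcW}$. Equality in the inner Cauchy--Schwarz combined with $\|\bar z_y\| = \rho_{\mcZ}$ forces $\bar z_y = (\rho_{\mcZ}/\rho_{\mcW})\,w_y$ with the positive sign; since $\bar z_y$ is an average of points of norm at most $\rho_{\mcZ}$ whose norm equals $\rho_{\mcZ}$, strict convexity of the Euclidean ball implies that all $z_n$ with $y_n=y$ coincide with $\bar z_y=:\zeta_y$, giving \ref{thm:ce_bound_frob:c1}, property \ref{def:simplex:s2}, and the scaling relation \ref{thm:ce_bound_frob:c3}; the identity $\sum_y \zeta_y = (\rho_{\mcZ}/\rho_{\mcW})K\bar w = 0$ yields \ref{def:simplex:s1}. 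Finally, strict convexity of $\exp$ in the first Jensen step forces $\langle \zeta_y, w_l\rangle$ to be equal across $l\neq y$; substituting $w_l = (\rho_{\mcW}/\rho_{\mcZ})\zeta_l$ and invoking symmetry of the inner product collapses all $\langle \zeta_i,\zeta_j\rangle$ (for $i\neq j$) to a single constant, i.e.,\ \ref{def:simplex:s3}. The converse direction is a direct verification that the claimed configuration saturates each inequality.
\end{psketch}
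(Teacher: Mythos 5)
Your proof is correct and follows the same two\textendash Jensen skeleton as the paper (which first lower\textendash bounds $\LCE$ by $\log\bigl(1+(K-1)\exp(S)\bigr)$ with $S=\tfrac{1}{N}\tfrac{K}{K-1}\sum_n\langle z_n,\bar w-w_{y_n}\rangle$, then bounds $S$), but the second half is organized differently. The paper applies Cauchy\textendash Schwarz \emph{per sample} to $\langle z_n,\bar w-w_{y_n}\rangle$, then the norm bound $\|z_n\|\le\rho_{\mcZ}$, then groups by class and finishes with Jensen for $-\sqrt{\cdot}$; you instead aggregate to class means $\bar z_y$ \emph{first} (using balancedness), apply Cauchy\textendash Schwarz at the class level together with $\|\bar z_y\|\le\rho_{\mcZ}$, and replace the concavity step by an AM\textendash RMS Cauchy\textendash Schwarz on $(\|w_y-\bar w\|)_y$. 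This costs you one extra argument: because your Cauchy\textendash Schwarz only constrains $\bar z_y$, you need strict convexity of the Euclidean ball (extreme\textendash point argument) to force $z_n=\bar z_y$ for all $n$ in class $y$, whereas the paper extracts the per\textendash sample condition $z_n=\lambda_n(\bar w-w_{y_n})$, $\|z_n\|=\rho_{\mcZ}$, directly. On the other hand, your derivation of (S3) from the first Jensen condition plus symmetry of the inner product (showing the row\textendash constant, column\textendash constant Gram entries must be globally constant) is a cleaner argument than the paper's, which combines both Jensen equality conditions (P1) and (P2); in your route (P2) becomes a consequence rather than an ingredient. Both versions share the same (and uncommented) degenerate edge case $W=0$, where the bound is saturated but $\rho_{\mcW}>0$ cannot hold.
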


Importantly, \cref{thm:ce_bound_frob} states that the bound is tight, if and only if all instances with the same label collapse to points and these points form the vertices of a regular simplex, inscribed in a hypersphere of radius $\rho_{\mcZ}$. 
Additionally, all weights $w_y$ have to attain equal norm and have to be scalar multiples of the simplex vertices, thus also forming a regular simplex (inscribed in a hypersphere of radius $\rho_{\mathcal W}$).

\vskip1ex
\begin{remark}
    \label{rem:papayan}
    Our result complements recent work by Papyan et al. \cite{Papyan20a}, where it is empirically observed that training neural predictors as in \cref{eqn:predictor}\footnote{also including bias terms, i.e., $Wx +b$} leads to a within-class covariance collapse of the representations when continuing to minimize the \ce~loss beyond zero training error. 
    By assuming representations to be Gaussian distributed around each class mean and taking the covariance collapse into account, the regular simplex arrangements of \cref{thm:ce_bound_frob} arise. 
    Specifically, this is the optimal  configuration from the perspective of  recovering the correct class labels. 
    While the analysis in \cite{Papyan20a} is decoupled from the loss function and hinges on a probabilistic argument, we study what happens as the \ce~loss attains its lower bound; 
    our result, in fact, implies the covariance collapse.
\end{remark}
 
\vskip1ex
\begin{restatable}[\textbf{Cross-entropy loss} -- norm-bounded weights]{corollary}{rest@cor@ce@bound@r}
    \label{cor:ce_bound_r}
    Let $Z, Y, W$ be defined as in \cref{thm:ce_bound_frob}. Upon requiring that $\forall y \in [K]: \|w_y\| \leq r_{\mcW}$, it holds that
\begin{equation*}
    \LCE(Z, W;\,Y) 
    \ge 
    \log \left(
        1 + (K-1) \exp \left( 
            -  \frac {K\,\rho_{\mcZ}\,r_{\mcW}}{K-1}
        \right)
    \right)
\end{equation*}
with equality if and only if 
\begin{enumerate}[labelindent=8pt,leftmargin=*,labelwidth=\widthof{\ref{cor_ce_bound_r:c3}}, label = (C\arabic*--r)]
    \item[(C1)]\refstepcounter{enumi}
    $\forall n \in [N]: z_n = \zeta_{y_n}$ 
    \item[(C2)]\refstepcounter{enumi}
    $\{\zeta_y\}_{y}$ form a $\rho_{\mcZ}$-sphere-inscribed regular simplex
    \item
    $\forall y \in \mcY: w_y =\frac{r_{\mcW}}{\rho_{\mcZ}}\zeta_y$\enspace,
    \label{cor_ce_bound_r:c3} 
\end{enumerate}
\noindent \ie,
\ref{thm:ce_bound_frob:c1} and \ref{thm:ce_bound_frob:c2} from \cref{thm:ce_bound_frob} are satisfied and condition \ref{thm:ce_bound_frob:c3} changes to \ref{cor_ce_bound_r:c3}.
\end{restatable}
Notably, a special case of \cref{cor:ce_bound_r} appears in Proposition 2 of \cite{Wang17a}, covering the case where $\forall n: z_n = w_{y_n}$ and $\forall y \in \mcY: \|w_y\|=l$, i.e., equinorm weights and already collapsed classes. 
\cref{cor:ce_bound_r} obviates these constraints and provides a more general result, only assuming that $\forall n: \|z_n\|\leq \rho_{\mcZ}$ and $\forall y: \|w_y\|\leq r_{\mcW}$. 
However, constraining the norm of the weights seems artificial as, in practice, the weights are typically subject to an additional $L_2$ penalty. 
\cref{cor:ce_bound_wd} directly addresses this connection, showing that applying an $L_2$ penalty of the form $\lambda \|W\|_F^2$ eliminates the necessity of an explicit norm constraint. 
\vskip2ex
\begin{restatable}[\textbf{Cross-entropy loss} -- $L_2$ penalty]{corollary}{rest@cor@ce@bound@wd}
    \label{cor:ce_bound_wd}
    Let $Z, Y, W$ be defined as in \cref{thm:ce_bound_frob}.  
    For the $L_2$-regularized objective $\LCE(Z, W;\,Y) + \lambda \|W\|_F^2$ with $\lambda>0$, it holds that 
    \begin{align*}
        \label{eq:cor_l2}
        & \LCE(Z, W;\,Y) + \lambda  \|W\|_F^2
        \\\ge
        &\log \left(
            1 + (K-1) \exp \left( 
                - \rho_{\mathcal Z} \frac {{K}}{K-1}
                r_{\mathcal W}(\rho_{\mcZ}, \lambda)
            \right)
        \right)  \\ &+ 
        \lambda K  r_{\mathcal W}(\rho_{\mcZ}, \lambda)^2\enspace,
    \end{align*}
    where $r_{\mathcal W}(\rho_{\mathcal Z},\lambda)>0$ denotes the unique solution, in $x$, of
    \begin{equation*}
        \label{eq:corl2_eq}
        2  \lambda  x-\frac{\rho_{\mathcal Z} }{\exp\left(\frac{K \rho_{\mathcal Z}  x}{K-1}\right)+K-1} = 0
        \enspace.
    \end{equation*}
    Equality is attained in the bound if and only if
    \begin{enumerate}[labelindent=8pt,leftmargin=*,label=(C\arabic*--wd),labelwidth=\widthof{\ref{cor_ce_bound_wd:c3}}]
        \item[(C1)]\refstepcounter{enumi}
        $\forall n \in [N]: z_n = \zeta_{y_n}$ 
        \item[(C2)]\refstepcounter{enumi}
        $\{\zeta_y\}_{y}$ form a $\rho_{\mcZ}$-sphere-inscribed regular simplex
        \item
        $\forall y \in \mcY: w_y = \frac{r_{\mathcal W}(\rho_{\mcZ},\lambda)}{\rho_{\mcZ}}\zeta_y \enspace,$
        \label{cor_ce_bound_wd:c3} 
    \end{enumerate}
    \noindent \ie,
\ref{thm:ce_bound_frob:c1} and \ref{thm:ce_bound_frob:c2} from \cref{thm:ce_bound_frob} are satisfied and condition \ref{thm:ce_bound_frob:c3} changes to \ref{cor_ce_bound_wd:c3}.
\end{restatable}
\cref{cor:ce_bound_wd} differs from \cref{cor:ce_bound_r} in that the characterization of $w_y$  depends on $r_{\mcW}(\rho_{\mcZ}, \lambda)$, i.e., a function of the norm constraint $\rho_{\mathcal Z}$ on the representations and the regularization strength $\lambda$.
While \cref{eq:corl2_eq} has, to the best of our knowledge, no closed-form solution, $r_{\mcW}(\rho_{\mcZ}, \lambda)$ can still be computed numerically. 
Fig.~\ref{fig:toy_sc_vs_ce} illustrates the attained regular simplex configuration, on a toy example, in case of added $L_2$ regularization.

It is important to note that the assumed norm-constraint on points in $\mcZ$ is not purely theoretical.
In fact, such a constraint often arises\footnote{\dots although it might not be explicitly enforced}, e.g., via batch normalization \cite{Ioffe15a} at the last layer of a network implementing $\enc_\theta$.
While one could, in principle, derive a normalization dependent bound for the CE loss, it is unclear (to the best of our knowledge) if a regular simplex solution satisfying the corresponding equality conditions always exists.

\paragraph*{Numerical Simulation.}
To empirically assess our theoretical results, we revisit the toy example from Fig.~\ref{fig:toy_sc_vs_ce}, where we minimize (via gradient descent) the $L_2$ regularized \ce~loss over $W$ and $Z$ with $\forall n: \|z_n\|\leq 1$. This setting corresponds to having an \emph{ideal} encoder $\enc$ that can realize any configuration of points and matches the assumptions of  \cref{cor:ce_bound_wd}. 
Fig.~\ref{fig:ce_cor2_assessment} (\emph{right}) shows that the lower bound, for varying values of the regularization strength $\lambda$, closely matches the empirical loss. 
Additionally, Fig.~\ref{fig:ce_cor2_assessment} (\emph{left}) shows a direct comparison of the empirical weight average, $\|\overline{w}\|$, \emph{vs.} the predicted theoretical value of $\|w_y\|$ (which is equal for all $y$ in case of minimal loss).
These experiments empirically confirm that conditions \ref{thm:ce_bound_frob:c1} and \ref{thm:ce_bound_frob:c2}, as well as the adapted condition \ref{cor_ce_bound_wd:c3} from \cref{cor:ce_bound_wd} are satisfied. 
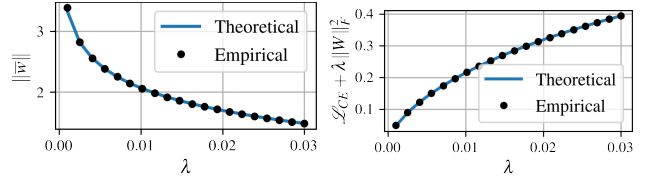
\begin{figure*}[t!]
    \centering{
    \begin{tikzpicture}
\begin{axis}[
    width = .475\textwidth,
    height = 5cm,
    legend pos = north east,
    legend cell align = left,
    axis line style = border,
    xlabel = {Regularization strength $\lambda$},
    ylabel = {Avg. weight norm $\lVert \bar w \rVert$},
    ylabel style = {font=\small, yshift=-3pt},
    xlabel style = {font=\small},
    tick label style = {font=\small},
    legend style = legendstyle,
    ymin = 0.9,
    ymax=3.2,
    ytick= {1,2,3},
    yticklabels={1,2,3},
    ytick align = outside,
    ytick pos = left,
    scaled y ticks=false, 
    xmin=0,
    xmax=0.0525,
    xtick = {0.00, 0.01, 0.02, 0.03, 0.04, 0.05},
    xticklabels = {0.00, 0.01, 0.02, 0.03, 0.04, 0.05},
    xtick align = outside,
    xtick pos = bottom,
    scaled x ticks=false,
    grid=both,
    grid style={line width=.1pt, draw=gray!30},
]
\addplot[
    line width = 2, 
    domain=0.85:4,
    samples y=1,
    smooth,
    color=cecolor] 
    ({1/(2*x) * 1/(exp(3/2*x)+2)},x);
\addplot[
    only marks,
    mark = *,
    mark size = 1.5
] table [x={wd}, y={norm}, col sep=comma] {figures/cor2/raw.csv};
\addlegendentry{Theoretical}
\addlegendentry{Empirical}
\end{axis}
\end{tikzpicture}%
\hfill%
\begin{tikzpicture}
    \begin{axis}[
        width = .475\textwidth,
        height = 5cm,
        legend pos = south east,
        legend cell align = left,
        axis line style = border,
        xlabel = {Regularization strength $\lambda$},
        ylabel = {$\mathcal L_{CE} + \lambda \lVert W \rVert^2_F$},
        ylabel style = {font=\small, yshift=-3pt},
        xlabel style = {font=\small},
        tick label style = {font=\small},
        legend style = legendstyle,
        ymin = 0.,
        ymax=0.525,
        ytick= {0, .1,.2,.3, .4, .5},
        yticklabels={0.0, 0.1,0.2,0.3,0.4,0.5},
        ytick align = outside,
        ytick pos = left,
        scaled y ticks=false, 
        xmin=0,
        xmax=0.0525,
        xtick = {0, 0.01, 0.02, 0.03, 0.04, 0.05},
        xticklabels = {0.00, 0.01, 0.02, 0.03, 0.04, 0.05},
        xtick align = outside,
        xtick pos = bottom,
        scaled x ticks=false,
        grid=both,
        grid style={line width=.1pt, draw=gray!30},
    ]
        \addplot[
            line width = 2, 
            domain=0.85:10,
            samples y=1,
            smooth,
            color=cecolor] 
            (1/(2*x) * 1/(exp(3/2*x) + 2),
            {ln(1 + 2*exp(
                - 3/2*x
            )) +  1/(2*x) * 1/(exp(3/2*x)+2) * 3 * x^2});
        \addlegendentry{Theoretical}
        \addplot[
            only marks,
            mark = *,
            mark size = 1.5
        ] table [x={wd}, y={loss}, col sep=comma] {figures/cor2/raw.csv};
        \addlegendentry{Empirical}
        \end{axis}
\end{tikzpicture}
}
\caption{Numerical simulation for \cref{cor:ce_bound_wd} (on the toy data of Fig.~\ref{fig:toy_sc_vs_ce}), as a function of the $L_2$ regularization strength $\lambda$. The \emph{left} plot shows the theoretical norm of $w_y$ (which is equal for all $y$ at minimal loss) vs. the observed mean norm of the three weights. The \emph{right} plot shows the theoretical bound vs. the empirical $L_2$ regularized \ce~loss.\label{fig:ce_cor2_assessment}}
\end{figure*}
In \cref{sec:experiments}, we will see that the sought-for regular simplex configurations actually arise (with varying quality) when minimizing the $L_2$ regularized \ce~loss for a ResNet-18 trained on popular vision benchmarks.

\subsection{Supervised Contrastive Loss}
\label{subsection:analysis_sc}

An analysis of the \supcon~loss, similar to \cref{subsection:analysis_ce}, is less straightforward. In fact, as the loss is defined over \emph{batches}, we can not simply sum up per-instance losses to characterize the ideal $N$ point configuration. Instead, we need to consider \emph{all} batch configurations of a specific size $b \in \bbN$. 

 We next state our lower bound for the \supcon~loss with the corresponding equality conditions. 
 
\begin{restatable}[\textbf{Supervised contrastive loss}]{theorem}{thm@supcon}
    \label{thm:supcon}
    Let $\rho_{\mathcal Z}>0$ and let $\mcZ = \bbS_{\rho_{\mathcal Z}}^{h-1}$.
    Further, let $Z=(z_1,\ldots,z_N) \in \mcZ^N$ be an $N$ point configuration with labels $Y=(y_1,\ldots,y_N) \in [K]^N$. If the label configuration $Y$ is balanced, it holds that 
    \begin{align*}
        &\LSC(Z;Y)\\
        & \ge 
        \sum_{l=2}^{b} 
        l\, M_l
        \log 
        \left( 
            l - 1 + (b-l)
            \exp \left( 
                - \frac{K\rho_{\mcZ}^2}{K-1}                 
            \right)
        \right)\enspace,
    \end{align*}
    where 
    \begin{equation*}
        M_l = \sum_{y\in\mcY} |\set{ {B \in \mcB}:~ |B_y|=l }|\enspace.
    \end{equation*}
    Equality is attained if and only if the following conditions are satisfied. There are $\zeta_1, \dots, \zeta_{K} \in \R^h$ such that:
    \begin{enumerate}[labelindent=8pt,leftmargin=*,label=(C\arabic*),labelwidth=\widthof{\ref{con:supcon:2}}]
        \item\label{con:supcon:1} 
        $\forall n \in [N]: z_n = \zeta_{y_n}$
        \item\label{con:supcon:2} 
        $\{\zeta_y\}_y$ form a $\rho_{\mcZ}$-sphere-inscribed regular simplex
    \end{enumerate}
\end{restatable}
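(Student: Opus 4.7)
The proof will combine a per-anchor Jensen inequality applied within each batch with a global aggregation that exploits the sum over all multi-sets $\mcB$ together with the sphere constraint on $\mcZ$. The regular simplex simultaneously saturates every inequality in the chain. First, for each $B \in \mcB$ and each $i \in B$ with $l := |B_{y_i}| \ge 2$, convexity of $-\log$ applied to the uniform average over positives $j \in B_{y_i}\setminus\lmset i\rmset$ yields the per-anchor bound
\[
-\frac{1}{l-1}\sum_{j} \log\frac{\exp\dm{z_i}{z_j}}{S_i^B} \;\ge\; \log\frac{(l-1)\,S_i^B}{\mathrm{POS}_i(B)},
\]
where $S_i^B := \sum_{k \in B\setminus\lmset i\rmset}\exp\dm{z_i}{z_k}$ and $\mathrm{POS}_i(B) := \sum_{j \in B_{y_i}\setminus\lmset i\rmset}\exp\dm{z_i}{z_j}$, with equality iff $\dm{z_i}{z_j}$ is constant as $j$ varies over the positives of $i$ in $B$.

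Next, I would sum this bound over all $(B,i)$ with $l_i \ge 2$ and split the right-hand side as $\log(l-1) + \log S_i^B - \log\mathrm{POS}_i(B)$. The $\mathrm{POS}$ term is upper bounded via Cauchy--Schwarz on the sphere, $\dm{z_i}{z_j} \le \rho_\mcZ^2$, which gives $\mathrm{POS}_i(B) \le (l-1)\exp(\rho_\mcZ^2)$ with equality iff $z_i = z_j$. Forcing equality across all $(B,i,j)$ then pins down condition C1 and makes every same-class inner product equal to $\rho_\mcZ^2$.

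To recover the stated bound one must then lower bound $\sum_{(B,i)}\log S_i^B$. Under C1 the positive contribution to each $S_i^B$ is $(l-1)\exp(\rho_\mcZ^2)$ uniformly. For the cross-class contribution, convexity of $\exp$ together with the symmetry of summing over all multi-sets in $\mcB$ reduces the aggregate to a single expression in the class-averaged cross-class inner product $D := \tfrac{1}{K(K-1)}\sum_{y\ne y'}\dm{\tilde\mu_y}{\tilde\mu_{y'}}$, where $\tilde\mu_y := \tfrac{K}{N}\sum_{i:y_i=y} z_i$ denotes the class mean. The identity $\|\sum_y \tilde\mu_y\|^2 \ge 0$ together with $\|\tilde\mu_y\| \le \rho_\mcZ$ then yields $D \ge -\rho_\mcZ^2/(K-1)$, with equality iff $\|\tilde\mu_y\| = \rho_\mcZ$ (reinforcing the collapse) and $\sum_y \tilde\mu_y = 0$ with uniform cross-products---precisely S1--S3, i.e., condition C2.

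The hardest step is this last aggregation: although each ingredient (Jensen on $\exp$, the identity $\|\sum_y\tilde\mu_y\|^2 \ge 0$, the sphere bound $\|\tilde\mu_y\| \le \rho_\mcZ$) is elementary, the reduction of $\sum_{(B,i)}\log S_i^B$ to an expression in $D$ has to remain tight at the regular simplex. A naive reduction to $(l-1)e^A + (b-l)e^D$ (with $A$ the mean same-class inner product) is monotone only in $D$ and can trade positive spread against negative concentration; the collapse conclusion $A = \rho_\mcZ^2$ from the previous paragraph must therefore be threaded back into the aggregation in order to eliminate this slack. Making this coupling precise through the multi-set symmetry of $\mcB$, and then concluding by equality analysis along each inequality in the chain, is the principal technical difficulty of the argument.
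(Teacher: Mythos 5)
Your high-level ingredients match the paper's (Jensen's inequality applied to the log-softmax, the sphere bound $\inprod{z_i}{z_j}\le\rho_\mcZ^2$, the class-mean identity $\|\sum_y\tilde\mu_y\|^2\ge0$, and balanced labels to close the combinatorics), but the proposed per-anchor split $\log(l-1)+\log S_i^B-\log\mathrm{POS}_i(B)$ with \emph{separate} bounding of the two pieces does not deliver the claimed bound. Since $S_i^B=\mathrm{POS}_i(B)+\mathrm{NEG}_i(B)$, upper-bounding $\mathrm{POS}_i(B)\le(l-1)e^{\rho_\mcZ^2}$ simultaneously damages any honest lower bound on $\log S_i^B$: Jensen on $\exp$ gives $S_i^B\ge(l-1)e^{A_i}+(b-l)e^{R_i}$ (with $A_i$ the mean positive inner product for anchor $i$), and your split then produces $\log\bigl((l-1)e^{A_i-\rho_\mcZ^2}+(b-l)e^{R_i-\rho_\mcZ^2}\bigr)\le\log\bigl(l-1+(b-l)e^{R_i-\rho_\mcZ^2}\bigr)$ because $A_i\le\rho_\mcZ^2$ --- the inequality points the wrong way. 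You diagnose exactly this tension in your last paragraph, but ``threading back'' condition (C1) to fix it is circular: one cannot assume the equality condition when deriving the inequality. The repair is simply not to split: $\log\frac{(l-1)S_i^B}{\mathrm{POS}_i(B)}=\log\bigl(l-1+(l-1)\tfrac{\mathrm{NEG}_i(B)}{\mathrm{POS}_i(B)}\bigr)$ is genuinely decreasing in $\mathrm{POS}_i(B)$ and increasing in $\mathrm{NEG}_i(B)$, so the two bounds applied to the \emph{ratio} do give $\log\bigl(l-1+(b-l)e^{R_i-\rho_\mcZ^2}\bigr)$.

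The paper sidesteps this entirely by applying Jensen on $\exp$ to both groups in the softmax denominator \emph{before} taking the log (Lemma~\ref{lem:supcon_S}), keeping the attraction average $A_i$ inside the exponent as $\log\bigl(l-1+(b-l)e^{R_i-A_i}\bigr)$; the bounds $A_i\le\rho_\mcZ^2$ and (in aggregate) $R\ge-\rho_\mcZ^2/(K-1)$ then push the exponent in the same favorable direction and can be applied after a second convexity step (Lemma~\ref{lem:supcon_jensen}) pools over all $(B,y)$ of fixed multiplicity $l$. A second, unaddressed, gap is the batch-to-pair reduction that you gloss as ``the symmetry of summing over all multi-sets in $\mcB$'': the paper needs Lemmas~\ref{lem:supcon_batches_to_indices}--\ref{lem:supcon_rep_sum} to rewrite the batch-indexed repulsion sum as a pair-indexed sum with a combinatorial weight $K_{n,m}(y,l)$, and the balance assumption enters precisely to render $K_{n,m}(y,l)$ independent of $(n,m,y)$ so that the class-mean identity can finish the argument. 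Until both of these are spelled out, the proposal does not constitute a proof.
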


\cref{thm:supcon} characterizes the geometric configuration of points in $Z$ at minimal loss. 
We see that the equality conditions \ref{thm:ce_bound_frob:c1} and \ref{thm:ce_bound_frob:c2} from \cref{thm:ce_bound_frob} equally appear in \cref{thm:supcon}. 
This implies that, at minimal loss, each class collapses to a point and these points form a regular simplex.

Considering the guiding principle of the \supcon~loss, i.e., separating instances from distinct classes and attracting instances from the same class, it seems plausible that constraining instances to the hypersphere would yield an evenly distributed arrangement of classes. 
However, a closer look at the \supcon~loss reveals that this is not obvious by any means. 
In contrast to the physical (electrostatic) intuition, the involved attraction and repulsion forces are not pairwise, but depend on groups of samples, \ie, batches.
Naively, one could try to characterize the loss minimizing configuration of points for each batch separately. 
Yet, this is destined to fail, as the minimizing  arrangement of points in each batch depends on the label configuration; an example is visualized in Fig.~\ref{fig:minimizers}. 
Hence, there is no simultaneous minimizer for all batch-wise losses.
It is therefore crucial to understand the interaction of the attraction and repulsion forces across different batches. 
We sketch the argument of the proof below and refer to the supplementary material for details.
 
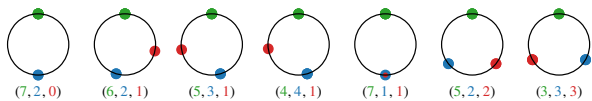
\begin{figure*}[t]
    \centering
\begin{tikzpicture}
    \def \w{3.4cm}
    \def \shift{2.5cm}
    \def \ms{4.5}
        \begin{axis}[
            at={(0*\shift,0)},
            width = \w,
            height = \w,
            axis equal,
            ticks = none,
            hide axis,
            title = {$(\textcolor{tabgreen}{7},\textcolor{tabblue}{2},\textcolor{tabred}{0})$},
            title style = {font = \small},
            ylabel = {},
            ylabel style = {font=\small, yshift=-3pt},
            xlabel style = {font=\small}
            ymin = -1.1,
            ymax=1.1,
            xmin=-1.1,
            xmax=1.1,
        ]
        \addplot[
            color=tabgreen,
            mark=*,
            only marks,
            mark size = \ms,
            line width = 2,
            select coords between index={0}{6}] table [col sep=comma]  {figures/minimizers/720.csv};
        \addplot[
            color=tabblue,
            mark=*,
            only marks,
            mark size = \ms,
            line width = 2,
            select coords between index={7}{8}] table [col sep=comma]  {figures/minimizers/720.csv};
        \addplot [domain=-180:180, samples=100, color=black, line width = 1] ({cos(x)},{sin(x)});
    \end{axis}
    \begin{axis}[
        at={(1*\shift,0)},
        width = \w,
        height = \w,
        axis equal,
        ticks = none,
        hide axis,
        title = {$(\textcolor{tabgreen}{6},\textcolor{tabblue}{2},\textcolor{tabred}{1})$},
        title style = {font = \small},
        ylabel = {},
        ylabel style = {font=\small, yshift=-3pt},
        xlabel style = {font=\small}
        ymin = -1.1,
        ymax=1.1,
        xmin=-1.1,
        xmax=1.1,
    ]
    \addplot[
        color=tabgreen,
        mark=*,
        only marks,
        mark size = \ms,
        line width = 2,
        select coords between index={0}{5}] table [col sep=comma]  {figures/minimizers/621.csv};
    \addplot[
        color=tabblue,
        mark=*,
        only marks,
        mark size = \ms,
        line width = 2,
        select coords between index={6}{7}] table [col sep=comma]  {figures/minimizers/621.csv};
    \addplot[
        color=tabred,
        mark=*,
        only marks,
        mark size = \ms,
        line width = 2,
        select coords between index={8}{8}] table [col sep=comma]  {figures/minimizers/621.csv};
    \addplot [domain=-180:180, samples=100, color=black, line width = 1] ({cos(x)},{sin(x)});
\end{axis}
\begin{axis}[
    at={(2*\shift,0)},
    width = \w,
    height = \w,
    axis equal,
    ticks = none,
    hide axis,
    title = {$(\textcolor{tabgreen}{5}, \textcolor{tabblue}{3}, \textcolor{tabred}{1})$},
    title style = {font = \small},
    ylabel = {},
    ylabel style = {font=\small, yshift=-3pt},
    xlabel style = {font=\small}
    ymin = -1.1,
    ymax=1.1,
    xmin=-1.1,
    xmax=1.1,
]
    \addplot[
        color=tabgreen,
        mark=*,
        only marks,
        mark size = \ms,
        line width = 2,
        select coords between index={0}{4}] table [col sep=comma]  {figures/minimizers/531.csv};
    \addplot[
        color=tabblue,
        mark=*,
        only marks,
        mark size = \ms,
        line width = 2,
        select coords between index={5}{7}] table [col sep=comma]  {figures/minimizers/531.csv};
    \addplot[
        color=tabred,
        mark=*,
        only marks,
        mark size = \ms,
        line width = 2,
        select coords between index={8}{8}] table [col sep=comma]  {figures/minimizers/531.csv};
    \addplot [domain=-180:180, samples=100, color=black, line width = 1] ({cos(x)},{sin(x)});
\end{axis}
\begin{axis}[
    at={(3*\shift,0)},
    width = \w,
    height = \w,
    axis equal,
    ticks = none,
    hide axis,
    title = {$(\textcolor{tabgreen}{4},\textcolor{tabblue}{4},\textcolor{tabred}{1})$},
    title style = {font = \small},
    ylabel = {},
    ylabel style = {font=\small, yshift=-3pt},
    xlabel style = {font=\small}
    ymin = -1.1,
    ymax=1.1,
    xmin=-1.1,
    xmax=1.1,
]
    \addplot[
        color=tabgreen,
        mark=*,
        only marks,
        mark size = \ms,
        line width = 2,
        select coords between index={0}{4}] table [col sep=comma]  {figures/minimizers/441.csv};
    \addplot[
        color=tabblue,
        mark=*,
        only marks,
        mark size = \ms,
        line width = 2,
        select coords between index={5}{7}] table [col sep=comma]  {figures/minimizers/441.csv};
    \addplot[
        color=tabred,
        mark=*,
        only marks,
        mark size = \ms,
        line width = 2,
        select coords between index={8}{8}] table [col sep=comma]  {figures/minimizers/441.csv};
    \addplot [domain=-180:180, samples=100, color=black, line width = 1] ({cos(x)},{sin(x)});
\end{axis}
\begin{axis}[
    at={(4*\shift,0)},
    width = \w,
    height = \w,
    axis equal,
    ticks = none,
    hide axis,
    title = {$(\textcolor{tabgreen}{7},\textcolor{tabblue}{1},\textcolor{tabred}{1})$},
    title style = {font = \small},
    ylabel = {},
    ylabel style = {font=\small, yshift=-3pt},
    xlabel style = {font=\small}
    ymin = -1.1,
    ymax=1.1,
    xmin=-1.1,
    xmax=1.1,
]
    \addplot[
        color=tabgreen,
        mark=*,
        only marks,
        mark size = \ms,
        line width = 2,
        select coords between index={0}{6}] table [col sep=comma]  {figures/minimizers/711.csv};
    \addplot[
        color=tabblue,
        mark=*,
        only marks,
        mark size = \ms,
        line width = 2,
        select coords between index={7}{7}] table [col sep=comma]  {figures/minimizers/711.csv};
    \addplot[
        color=tabred,
        mark=*,
        only marks,
        mark size = 0.5*\ms,
        line width = 2,
        select coords between index={8}{8}] table [col sep=comma]  {figures/minimizers/711.csv};
    \addplot [domain=-180:180, samples=100, color=black, line width = 1] ({cos(x)},{sin(x)});
\end{axis}
\begin{axis}[
    at={(5*\shift,0)},
    width = \w,
    height = \w,
    axis equal,
    ticks = none,
    hide axis,
    title = {$(\textcolor{tabgreen}{5},\textcolor{tabblue}{2},\textcolor{tabred}{2})$},
    title style = {font = \small},
    ylabel = {},
    ylabel style = {font=\small, yshift=-3pt},
    xlabel style = {font=\small}
    ymin = -1.1,
    ymax=1.1,
    xmin=-1.1,
    xmax=1.1,
]
    \addplot[
        color=tabgreen,
        mark=*,
        only marks,
        mark size = \ms,
        line width = 2,
        select coords between index={0}{4}] table [col sep=comma]  {figures/minimizers/522.csv};
    \addplot[
        color=tabblue,
        mark=*,
        only marks,
        mark size = \ms,
        line width = 2,
        select coords between index={5}{6}] table [col sep=comma]  {figures/minimizers/522.csv};
    \addplot[
        color=tabred,
        mark=*,
        only marks,
        mark size = \ms,
        line width = 2,
        select coords between index={7}{8}] table [col sep=comma]  {figures/minimizers/522.csv};
    \addplot [domain=-180:180, samples=100, color=black, line width = 1] ({cos(x)},{sin(x)});
\end{axis}
\begin{axis}[
    at={(6*\shift,0)},
    width = \w,
    height = \w,
    axis equal,
    ticks = none,
    hide axis,
    title = {$(\textcolor{tabgreen}{3},\textcolor{tabblue}{3},\textcolor{tabred}{3})$},
    title style = {font = \small},
    ylabel = {},
    ylabel style = {font=\small, yshift=-3pt},
    xlabel style = {font=\small}
    ymin = -1.1,
    ymax=1.1,
    xmin=-1.1,
    xmax=1.1,
]
    \addplot[
        color=tabgreen,
        mark=*,
        only marks,
        mark size = \ms,
        line width = 2,
        select coords between index={0}{2}] table [col sep=comma]  {figures/minimizers/333.csv};
    \addplot[
        color=tabblue,
        mark=*,
        only marks,
        mark size = \ms,
        line width = 2,
        select coords between index={3}{5}] table [col sep=comma]  {figures/minimizers/333.csv};
    \addplot[
        color=tabred,
        mark=*,
        only marks,
        mark size = \ms,
        line width = 2,
        select coords between index={6}{8}] table [col sep=comma]  {figures/minimizers/333.csv};
    \addplot [domain=-180:180, samples=100, color=black, line width = 1] ({cos(x)},{sin(x)});
\end{axis}
\end{tikzpicture}
\caption{Illustration of \emph{loss minimizing} point configurations of the batch-wise \supcon~loss for varying label configurations and a batch size $b=9$.
Colored numbers indicate the \emph{multiplicity} of each class in the batch.}
\label{fig:minimizers}
\end{figure*}
\vskip.2cm

\paragraph*{Proof Idea for \cref{thm:supcon}.}
The key idea is to decouple the attraction and repulsion effects from the batch-wise formulation of the loss.
Since each batch-wise loss contribution is actually a sum of label-wise contributions, the supervised contrastive loss can be considered as a sum over the Cartesian product of the set of all batches with the set of all labels.
We partition this Cartesian product into appropriately constructed subsets, i.e., by label multiplicity. This allows to apply Jensen's inequality to each sum over such a subset.
In the resulting lower bound, the repulsion and attraction effects are still allocated to the batches, but encoded more tangibly, i.e., linearly, as sums of inner products.
Therefore, their interactions can be analyzed by a combinatorial argument which hinges on the balanced class label assumption. 
Minimality of the respective sums is attained if and only if (1) all classes are collapsed and (2) the mean of all instances (\ie, ignoring the class label) is zero. 
The simplex arrangement arises as consequence of (1) \& (2) and, additionally, the equality conditions yielded by the previous application of Jensen's inequality, \ie, all intra-class and inter-class inner products are equal.

\paragraph*{Numerical Simulation.} For a large number of points, numerical computation of the bound in \cref{thm:supcon} is infeasible due to the combinatorial growth of the number of batches (even for the toy-example of Fig.~\ref{fig:toy_sc_vs_ce} with 300 points). Hence, we consider a smaller setup. In particular, we take $K=3$ classes, each consisting of $4$ points on the unit circle $\bbS^{1}$, \ie, $Z=(z_1,\ldots,z_{12})$, $h=2$ and $\rho = 1$.
For a batch size of $b=9$, this setup yields a total of 167,960 batches, i.e., the number of combinations with replacement. 
We initialize the $z_i$ as the projection of points sampled from a standard Gaussian distribution and 
then minimize the \supcon~loss (by stochastic gradient descent for 100k iterations) over the points in $Z$. Fig.~\ref{fig:sc_eval} (\emph{left}) shows that, at convergence, 
the lower bound on $\LSC(Z;Y)$ closely matches the empirical loss. Fig.~\ref{fig:sc_eval} (\emph{right}) shows the \supcon~loss over \emph{all} batches, highlighting the different loss levels depending on the label configuration in the batch (cf. Fig.~\ref{fig:minimizers}).

\begin{figure*}[t]
    \centering
\begin{tikzpicture}
        \begin{axis}[
            compat = newest,
            width = 0.99\textwidth,
            height = 5cm,
            legend pos = north east,
            legend cell align = left,
            axis line style = border,
            xlabel = {\textbf{SC} batch-wise loss level},
            ylabel = {Nr. of batches},
            ylabel style = {font=\small, yshift=-3pt},
            xlabel style = {font=\small},
            tick label style = {font=\small},
            legend style = {font=\small},
            ymode=log,
            ymin = 90,
            ymax=110000,
            ytick= {100, 1000,10000, 100000},
            yticklabels={$10^2$, $10^3$, $10^4$, $10^5$},
            ytick align = outside,
            ytick pos = left,
            scaled y ticks=false, 
            xtick align = outside,
            xtick pos = bottom,
            scaled x ticks=false,
            grid=both,
            grid style={line width=.1pt, draw=gray!30},
        ]
        
        \addplot[
            ybar,
            fill=sccolor,
            bar width=0.15cm,
        ] table [x={value}, y={count}, col sep=comma]%
        {figures/sc_eval/raw.csv};
    \end{axis}
    \begin{scope}[xshift=330, yshift=40]
        \path[fill=white, draw, rounded corners]  (-0.5,1.8) -- (3.44,1.8) -- (3.4,0.2) -- (-0.5,0.2)--cycle; 
        \node[anchor=north west] at (-0.5,1.7) 
        {\small{\begin{tabular}{rc}
             & $\mathcal L_{SC}(Z;Y)$\\
             \toprule
             Empirical & 12.12016 \\
             Theory & 12.12015
        \end{tabular}
        }};
    \end{scope}    
    \end{tikzpicture}
 
    \caption{
    Numerical optimization of the \supcon~loss for toy data on $\bbS^1$.
    The histogram shows the batch-wise loss values at convergence (over all $\approx$170k batches), illustrating the inhomogeneity of minimal loss values across batch configurations (cf. \cref{fig:minimizers}).
    The table in the upper right corner compares the mean batch-wise loss with the lower bound from \cref{thm:supcon}.
    \label{fig:sc_eval}} 
\end{figure*}

\section{Related work}
\label{sec:related_work}
We focus on works closely linked to our theoretical results of \cref{sec:analysis};  we refer the reader to \cite{Koshla20a} (and references therein) for additional background on the supervised contrastive loss and to \cite{Khac20a} for a general survey on contrastive learning.

Our results on the cross-entropy loss from \cref{subsection:analysis_ce} are partially related to a recent stream of research \cite{Soudry18a,Nacson18a,Gunasekar18a} on characterizing the convergence speed and structure of homogeneous linear predictors ($W$) when minimizing cross-entropy via gradient descent on linearly separable data (i.e., no preceding learned encoder $\enc$).
In particular, \cite{Soudry18a} show that such predictors converge to the $L_2$ max margin separator. 
In our setting, the geometric structure of $W$ (and the outputs of $\enc$) becomes even more explicit, i.e., the weights reside at the vertices of a \emph{regular simplex}.
This is in line with the special case of equinorm representations and weights, presented in \cite{Wang17a}, and complements a recent optimality result by Papyan et al. \cite{Papyan20a} (cf. \cref{rem:papayan} for details).

Along another line of research, several works focus on controlling geometric properties of the classifier weights.
In \cite{Hoffer18a}, for instance, the classifier weights are fixed prior to training, with one choice of the weight matrix being a random orthonormal projection. 
In this setup, all weights have unit norm, are well separated on the hypersphere, but do not form the vertices of a regular simplex. 
Yet, this empirically yields fast convergence, reduces the number of learnable parameters and has no negative impact on performance. 
In \cite{Liu18a}, separation of the classifier weights is achieved via a regularization term based on a Riesz $s$-potential \cite{Borodachov19a}. 
While this regularization term can be added to all network layers, in the special case of the linear classifier weights, the sought-for minimal energy (for $K \leq h+1$) is again attained once the weights form the vertices of a regular simplex.
Recently, Mettes et al. \cite{Mettes20a} presented an approach that a-priori positions so called \emph{prototypes} (one for each class) on the unit hypersphere such that the largest cosine similarity among the prototypes is minimized. Training then reduces to attracting  representations towards their corresponding prototypes. 
Again, in case of $K \leq h+1$, this yields a geometric prototype arrangement at the vertices of a regular simplex. 
In the context of \cref{eqn:predictor}, the prototypes correspond to the classifier weights and are compatible with the $\argmax$ decision rule. 

Overall, \cite{Hoffer18a,Liu18a,Mettes20a} all control, in one way or the other, the geometric arrangement of classifier weights and thereby, implicitly, the arrangement of the representations. This is decisively different to supervised contrastive learning, where the arrangement of the classifier weights is a consequence of the regular simplex arrangement of the representations at minimal loss. More precisely, if representations are already in a regular simplex configuration, the cross-entropy loss of a subsequently trained linear classifier is minimized if and only if the classifier weights are equinorm and scalar multiples of the simplex vertices (cf. \cref{cor:ce_bound_r}).

We additionally point out that several works have recently started to establish a solid theoretical foundation for using contrastive loss functions in the context of unsupervised representation learning. 

Through the concept of \emph{latent classes} (i.e., a construction formalizing the notion of semantic similarity), Arora et al. \cite{Arora19a} prove generalization bounds for downstream supervised classification, under the assumption that the supervised task is defined on a subset of the latent classes. Central to their analysis is the \emph{mean classifier} which is determined by the means of representations of training inputs with equal label. Notably, they empirically observe that this mean classifier performs well on models trained under full supervision. In light of our theoretical results, this can be easily explained by the fact that, at optimality, representations collapse to the simplex vertices.

The \emph{unsupervised} counterpart of the objective we study in this work is analyzed by Wang \& Isola \cite{Wang20a} from a probabilistic perspective.
It is shown that minimizing the (unsupervised) contrastive loss promotes \emph{alignment} and \emph{uniformity} of representations on the unit hypersphere, two properties that empirically correlate with good performance on downstream tasks. 
More precisely, the authors split the (unsupervised) contrastive loss into two summands and show that in the limit of infinite negative samples, asymptotically one is minimized by a \emph{peferctly aligned} and the other by a \emph{perfectly uniform} encoder. 
As pointed out by the authors, if the data is finite, then there is no encoder which is both, i.e., perfectly aligned and perfectly uniform. Hence, in this case, their analysis does not provide an explicit characterization of the loss minimizer.
Complementary to that, our analysis restricts to this very case of finite (training) data, but is able to characterize the loss minimizer in the \emph{supervised} setup.

\section{Experiments}
\label{sec:experiments}
In any practical setting, we do not have an \emph{ideal} encoder (as in \cref{sec:analysis}), but an encoder parameterized as a neural network, $\enc_\theta$.
Hence, in \cref{subsection:theory_vs_practice}, we first assess whether the regular simplex configurations actually arise (and to which extent), given a fixed iteration budget during optimization. Second, in \cref{subsection:random_labels}, we study the optimization behavior of models under different loss functions in a series of random label experiments.

\begin{figure*}[t!]
    \begin{subfigure}{0.68\textwidth}
    \centering{
    \includegraphics[width=1.0\textwidth]{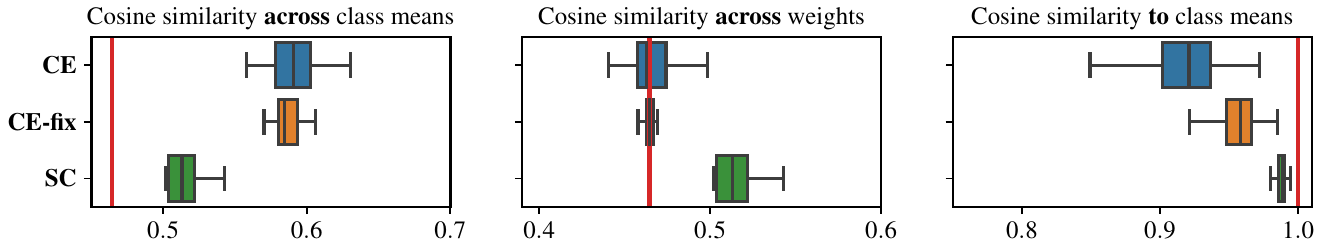}}
    \end{subfigure}\hfill
    \begin{subfigure}[t]{0.3\textwidth}
        \raisebox{0.5mm}{
        \begin{scriptsize}
        \centering{
            \begin{tabular}{r|cc}
                \hline
                & \multicolumn{2}{c}{\textbf{CIFAR10\phantom{0}}\, err. \tiny{[\%]}}\\
                Loss & w aug. & w/o aug. \\
                \hline
                \textbf{CE}            &  6.3          & 16.4  \\
                \textbf{CE-fix} &  6.2          & 14.7  \\
                \textbf{SC}  &  \textbf{5.7} & \textbf{13.8} \\
                \hline
            \end{tabular}
        }    
        \end{scriptsize}}
    \end{subfigure}
    \begin{subfigure}{0.68\textwidth}
        \centering{
        \includegraphics[width=1.0\textwidth]{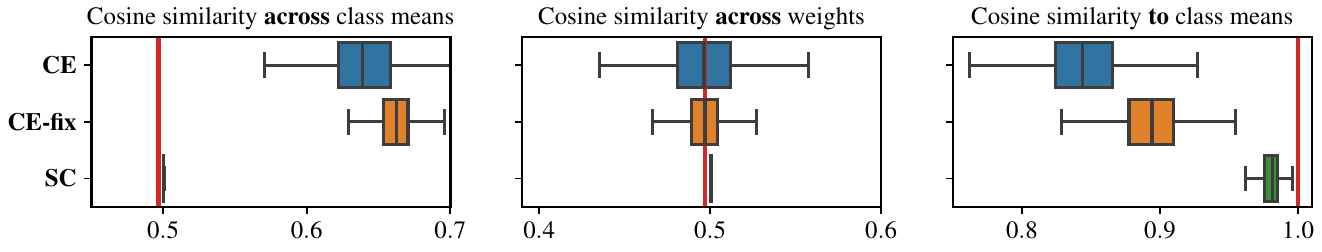}}
    \end{subfigure}\hfill
    \begin{subfigure}[t]{0.3\textwidth}
        \raisebox{0.5mm}{
        \begin{scriptsize}
        \centering{
            \begin{tabular}{r|cc}
                \hline
                & \multicolumn{2}{c}{\textbf{CIFAR100}\, err. \tiny{[\%]}}\\
                Loss   & w aug. & w/o aug. \\
                \hline
                \textbf{CE}              &  27.0         &   41.8         \\
                \textbf{CE-fix}          &  26.3         & \textbf{41.3}  \\
                \textbf{SC}              &  \textbf{24.9}&  41.5          \\
                \hline
            \end{tabular}
        }    
        \end{scriptsize}}
    \end{subfigure}
    \caption{Geometric properties of representations, $\enc_\theta(x_n)$, and weights, obtained from ResNet-18 models trained (\emph{left}: \textbf{CIFAR10}; \emph{right}: \textbf{CIFAR100}) with different losses (using data augmentation, i.e., w aug.). The \emph{left} three panels show the distribution of cosine similarities across (1) distinct class means (quantifying class separation), (2) distinct class weights (quantifying classifier weight separation) and (3) representations with respect to their respective class means (quantifying within class spread). \textcolor{tabred}{Red} lines indicate the sought-for value at a regular simplex configuration. The \emph{right-most} panel shows the testing error of all models with and without data augmentation. \label{fig:geometry}}
\end{figure*}

\subsection{Setup} 
As our choice of $\enc_\theta$, we select a ResNet-18 \cite{He16a} model, i.e., all layers up to the linear classifier.
Experiments are conducted on CIFAR10/100, for which this choice yields $512$-dim. representations (and $K\leq h+1$ holds in all cases).

We either compose $\enc_\theta$ with a linear classifier and train with the \ce~loss function (denoted as \textbf{CE}), or we directly optimize $\enc_\theta$ via the \supcon~loss function, then freeze the encoder parameters and train a linear classifier on top (denoted \textbf{SC}). 
In case of the latter, outputs of $\enc_\theta$ are always projected onto a hypersphere of radius $\rho=\nicefrac{1}{\sqrt{\tau}}$ (with $\tau=0.1$), which accounts for scaling the inner-products by the temperature parameter $\nicefrac{1}{\tau}$ in the original formulation of Khosla et al. \cite{Koshla20a}. 
\textcolor{black}
{We want to stress that while Theorem \ref{thm:supcon} holds for every $\rho>0$, the temperature crucially influences the optmization dynamics and needs to be tuned appropriately.}
For comparison, we also compose $\enc_\theta$ with a \emph{fixed} linear classifier, in particular, a classifier with weights a-priori optimized towards a regular simplex arrangement. This is similar to \cite{Mettes20a}, only that we minimize the \ce~loss (denoted as \textbf{CE-fix}) to learn predictors $W \circ \enc_\theta$, as opposed to pulling outputs of $\enc_\theta$ towards the fixed prototypes/weights.

Optimization is done via (mini-batch) stochastic gradient descent with $L_2$ regularization ($10^{-4}$) and momentum (0.9) for 100k iterations. The batch-size is fixed to 256 and the learning rate is annealed exponentially, starting from 0.1. 
When using data augmentation, we apply random cropping and random horizontal flipping, each with probability $\nicefrac{1}{2}$.

\subsection{Theory vs. Practice}
\label{subsection:theory_vs_practice}

To provide a first impression to which extend the representations of the training data achieve the loss minimizing geometric arrangement, we compare the empirical \textbf{CE} and \textbf{SC} loss values to the optima derived in \cref{sec:analysis}, using ResNet-18 models trained on CIFAR10 (with data augmentation). 
The \textcolor{tabred}{theoretical}/\textcolor{tabblue}{empirical} losses are (1) \textcolor{tabred}{7.64e-5} vs. \textcolor{tabblue}{2.48e-4} (for \textbf{CE}) and (2) \textcolor{tabred}{824.487} vs. \textcolor{tabblue}{824.731} (for \textbf{SC}), where we estimate the empirical \textbf{SC} loss over 1k training batches. Notably, when optimizing for 500k iterations (instead of 100k; see supplementary material), the loss values continue to move closer to the optimum, but at very low speed. In particular, the loss values change to \textcolor{tabblue}{2.27e-4}  (for \textbf{CE}) and \textcolor{tabblue}{824.523} (for \textbf{SC}), respectively. Overall, this suggests that out of these models, representations learned by minimizing the \textbf{SC} loss might arrange closer to the theoretically optimal configuration. As our results only cover the \emph{loss minimizers} and not (close to optima) level sets, the latter hypothesis is more of a first guess and not predicted by the theory.

For a closer look at the geometric arrangement of the representations (and classifier weights), we compute three  statistics, all based on the cosine similarity $\gamma: \bbR^h \times \bbR^h \to [0,1]$, defined as
\begin{equation}
    (x,y)\mapsto 1-\cos^{-1}\left(
            \inprod{\nicefrac{x}{\|x\|}}{\nicefrac{y}{\|y\|}}\right)
        /\pi\enspace.\label{eqn:cs}
\end{equation}
First, we measure the separation of the class representations via the cosine similarity among the class means, $\mu_1, \dots, \mu_K$, i.e., $\gamma(\mu_i,\mu_j)$ for $i \neq j$. 
Second, for the \ce~loss function, we compute the cosine similarity across the classifier weights, \ie,  $\gamma(w_i,w_j), i\neq j$, quantifying their separation.
Third, to quantify class collapse, we compute the cosine similarity among all representations and their respective class means, i.e., 
$\gamma(\enc(x_n),\mu_{y_n})$.
Note that our theoretical results imply that the classes should collapse and the pairwise similarities, as mentioned above, should be equal.

Fig.~\ref{fig:geometry} illustrates the distribution of the cosine similarities for the ResNet-18 model trained with different loss functions (and using data augmentation). We observe that the \supcon~loss leads to (1) an arrangement of the class means much closer to the ideal simplex configuration and (2) a tighter concentration of training representations around their class means. Furthermore, in case of the \ce~loss, the weight arrangement reaches, on average, a regular simplex configuration, while the representations slightly deviate. When using a-priori fixed weights in a simplex configuration, i.e., \textbf{CE-fix}, the situation is similar, but the within-class spread is smaller. In general, the statistics are comparable between CIFAR10 and CIFAR100, only that the distribution of all computed statistics widens for models trained with \textbf{CE} on CIFAR100. We conjecture that the increase in the number of classes, combined with the joint optimization of $\enc_\theta$ and $W$ complicates convergence to the loss minimizing state. Fig.~\ref{fig:geometry} (\emph{right}) further suggests that approaching this state positively correlates with generalization performance. Whether the latter is a general phenomenon, or may even have a theoretical foundation, is an interesting question for future work.

Finally, we draw attention to the comparatively large gap between the cosine similarities \emph{across} the class means and their theoretical prediction in case of models trained on CIFAR10 (Fig. \ref{fig:toy_sc_vs_ce}, \emph{top left}). The aforementioned gap indicates that the chosen encoder might not be powerful enough to arrange the representations on a sphere-inscribed regular simplex. In fact, a standard ResNet \cite{He16a} utilizes a ReLU activation function after each block, including the last block before the linear classifier. Therefore, the coordinates of representations obtained by the encoder part of a standard ResNet are always \emph{non-negative}, and so are the coordinates of the class means. Consequently, their inner products are non-negative as well, which corresponds to a minimal cosine similarity of 0.5 across the class means. Since the scalar products of vertices (considered as position vectors) of a unit sphere inscribed regular simplex with $K$ vertices are $- 1/(K-1)$, the  deviation from the optimal class separation, resulting from the choice of encoder, is unnoticeable for models trained on CIFAR100 due to the large number of classes, i.e., $K=100$, but becomes apparent in case of CIFAR10 where $K=10$.

We suspect that architectures which do not implement the aforementioned non-negativity constraint in the encoder, e.g., the \emph{pre-activation} variants of ResNets \cite{He16b}, are capable of separating the classes to a larger extend and thus match the theoretical prediction more closely when trained on data with a  small number of classes.

\subsection{Random Label Experiments}
\label{subsection:random_labels}

Despite the similarity of the loss minimizing geometric arrangements at the output of $\enc_\theta$, for both (\ce, \supcon) losses, we have seen (in Fig.~\ref{fig:geometry}) that the extent to which this ``optimal'' state is achieved differs. 
These differences likely arise as a result of the underlying optimization dynamics, driven by the loss contribution of each batch. Notably, while the \ce~loss decomposes into independent instance-wise contributions, the \supcon~loss does not (due to the interaction terms).

One way to explore this in greater detail, is to study optimization behavior as a function of label corruption. Specifically, as label corruption (i.e., the fraction of randomly flipped labels) increases, it is interesting to track the number of iterations (\emph{time to fit}) to reach zero training error \cite{CZhang2017a}, illustrated in Fig.~\ref{fig:ttf_exp}. 

On both datasets, \textbf{CE} and \textbf{CE-fix} show an approximately linear growth, while \textbf{SC} shows a remarkably \emph{superlinear} growth. We argue that the latter primarily results from the profound interaction among instances in a batch. Intuitively, as the number of attraction terms for the \supcon~loss function scales quadratically with the number of samples per class, increasing the number of semantically confounding labels equally increases the complexity of the optimization problem. In contrast, for \textbf{CE} and \textbf{CE-fix}, semantically confounding labels only impose per-instance constraints instead. 
\begin{figure}[t]
    \centering{
\begin{tikzpicture}
    \begin{axis}[
        width = \columnwidth,
        height = 5cm,
        legend pos = north west,
        legend cell align = left,
        axis line style = border,
        xlabel = {Label corruption (on \textbf{CIFAR10})},
        ylabel = {Time to fit (max. 100k)},
        ylabel style = {font=\small, yshift=-3pt},
        xlabel style = {font=\small},
        tick label style = {font=\small},
        legend style = legendstyle,
        ymin = 0,
        ymax=60000,
        ytick= {0,20000,40000,60000},
        yticklabels={0,20k,40k,60k},
        ytick align = outside,
        ytick pos = left,
        scaled y ticks=false, 
        xmin=0,
        xmax=1,
        xtick = {0.0,0.2,0.4,0.6,0.8,1.0},
        xtick align = outside,
        xtick pos = bottom,
        grid=both,
        grid style={line width=.1pt, draw=gray!30},
    ]
    \addplot[
        color=cecolor,
        mark=*,
        mark size = {1.5},
        line width = 2,
        ] table [x=x, y=y, col sep=space] {figures/overfit_data/CE_raw.csv};
        \addlegendentry{\textbf{CE}}
        \addplot[
        color=cefixcolor,
        mark=*,
        mark size = {1.5},
        line width = 2,
        ] table [x=x, y=y, col sep=space] {figures/overfit_data/CE_fix_raw.csv};
        \addlegendentry{\textbf{CE-fix}}
        \addlegendentry{\textbf{SC}}
            \addlegendimage{color=sccolor,
            mark=*,
            mark size = {1.5},
            line width = 2,}
        \addplot[
            color=sccolor,
            mark=*, 
            mark indices = {1,...,8},
            mark size = {1.5},
            line width = 2,
            ] table [x=x, y=y, col sep=space] {figures/overfit_data/SC_raw.csv};
        \addplot[
            color=sccolor,
            mark=square*, 
            mark indices = {9},
            only marks,
            mark size = {3},
            ] table [x=x, y=y, col sep=space] {figures/overfit_data/SC_raw.csv};
    \end{axis}    
    \end{tikzpicture}%
    \hfill%
    \begin{tikzpicture}
        \begin{axis}[
            width = \columnwidth,
            height = 5cm,
            legend pos = north west,
            legend cell align = left,
            axis line style = border,
            xlabel = {Label corruption (on \textbf{CIFAR100})},
            ylabel = {Time to fit (max. 100k)},
            ylabel style = {font=\small, yshift=-3pt},
            xlabel style = {font=\small},
            tick label style = {font=\small},
            legend style = legendstyle,
            ymin = 0,
            ymax=60000,
            ytick= {0,20000,40000,60000},
            yticklabels={0,20k,40k,60k},
            ytick align = outside,
            ytick pos = left,
            scaled y ticks=false, 
            xmin=0,
            xmax=1,
            xtick = {0.0,0.2,0.4,0.6,0.8,1.0},
            xtick align = outside,
            xtick pos = bottom,
            grid=both,
            grid style={line width=.1pt, draw=gray!30},
        ]
        \addplot[
            color=cecolor,
            mark=*,
            mark size = {1.5},
            line width = 2,
            ] table [x=x, y=y, col sep=space] {figures/overfit_data/CE100_raw.csv};
            \addlegendentry{\textbf{CE}}
            \addplot[
            color=cefixcolor,
            mark=*,
            mark size = {1.5},
            line width = 2,
            ] table [x=x, y=y, col sep=space] {figures/overfit_data/CEfix100_raw.csv};
            \addlegendentry{\textbf{CE-fix}}
            \addplot[
                color=sccolor,
                mark=*,
                mark size = {1.5},
                line width = 2,
                ] table [x=x, y=y, col sep=space] {figures/overfit_data/SC100_raw.csv};
                \addlegendentry{\textbf{SC}}
            \end{axis}    
\end{tikzpicture}}
\vspace{-0.1cm}
\caption{Time to fit for models of the form $W \circ \enc_\theta$, based on ResNet-18 encoders, optimized under different loss functions.\label{fig:ttf_exp}}
\end{figure}
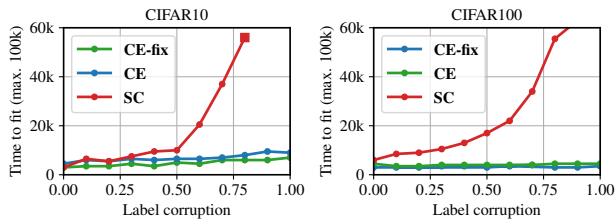
This equally explains why, on CIFAR10, \textbf{SC} cannot achieve zero error beyond 80\% corruption: fewer training instances per class (500 vs. 5,000) yield fewer pairwise intra-class constraints to be met.

\section{Discussion}
\label{sec:discussion}
By focusing on predictors $\argmax\circ\, W\circ \enc$, our results assert that the outputs of $\enc$ are \emph{strikingly similar}, at minimal loss, irrespective of whether we train with cross-entropy or in the supervised contrastive regime.

Yet, from an optimization perspective, the choice of loss makes a profound difference, visible in the differing resilience to fit in the presence of corrupt label information.
We argue that the advantages of supervised contrastive learning, reported in prior work, are  rooted in the strong interaction terms among samples in a batch. While cross-entropy acts sample-wise, the supervised contrastive loss considers pair-wise sample relations, i.e., a \emph{batch} is an \emph{atomic} computational unit during stochastic optimization; in case of cross-entropy, the atomic unit is a \emph{single} sample instead.  

While we simplified the original setup of supervised contrastive learning, in particular, by detaching the commonly used \emph{projective head}, we hope that our results provide a viable starting point for further analyses. Specifically, we think that a better theoretical understanding of the profound interaction between stochastic optimization and loss functions that capture pairwise constraints (rather than instance losses), could be a promising avenue to be explored in the context of the generalization puzzle.

\subsection*{Acknowledgements}

This research was supported in part by the
Austrian Science Fund (FWF): project FWF P31799-N38
and the Land Salzburg (WISS 2025) under project numbers 20102- F1901166-KZP and 20204-WISS/225/197-2019. We also like to 
thank the anonymous reviewers for the constructive feedback during the review process.

\subsection*{Source Code}
Source code to reproduce experiments is publicly available: \url{https://github.com/plus-rkwitt/py_supcon_vs_ce}

\bibliography{papers,books}
\bibliographystyle{icml2021}

%
%
%
\clearpage
\onecolumn















\makeatletter

\renewcommand{\bibnumfmt}[1]{[S#1]}
\renewcommand{\citenumfont}[1]{S#1}

\begin{adjustwidth}{1.5cm}{1.5cm}

  \colorlet{restated}{tabblue}
  %
  \appendix
  \renewcommand{\thesection}{S\arabic{section}}
  \icmltitle{{\large {Supplementary Material}:} \\ \vspace{4pt} \papertitle}
  In this supplementary material, we provide all proofs from \cref{sec:analysis} which were omitted in the main part of the manuscript.

\emph{In the following, we will frequently utilize standard inequalities (e.g., the Jensen inequality, or the Cauchy-Schwarz inequality) and analyzing their equality conditions will be key to get the desired results. 
    In this context, the following notation has proved to be useful:
    If a symbol appears above an inequality sign in an equation, it denotes that the corresponding equality conditions will be discussed later on and referenced with the corresponding symbol. 
    For example, 
    $$a \stackrel{(P)}{\ge} b\enspace,$$
    denotes $a \ge b$ and equality is attained if and only if the condition (P) is satisfied.}

  \section{Proofs for Section~\ref{subsection:analysis_sc}}

In this section, we will prove \cref{thm:supcon} of the main manuscript (restated below). 
Throughout this section the following objects will appear repeatedly an thus are introduced one-off:
\begin{itemize}
    \item $h, N, K \in \N$
    \item $\rho_{\mcZ}>0$
    \item $\mcZ = \bbS_{\rho_{\mathcal Z}}^{h-1}$
    \item $\mcY = \{1, \dots, K\} = [K]$
\end{itemize}

Further, we will consider batches $B\in \mcB$ of an arbitrary but fixed size $b\ge3$. We additionally assume $|\mcY| = K \leq h +1$. 

\thm@supcon*
\subsection{Definitions}

First we will recall the definition of the supervised contrastive (SC) loss and introduce some necessary auxiliary definitions. 
The \supcon~loss is given by 
\begin{equation}
    \LSC(Z;Y) = \sum_ {B \in \mcB} \LSCB(Z;Y,B) \enspace,
\end{equation}
where $\LSCB(Z;Y,B)$ is the \emph{batch-wise loss} 
\begin{equation}
    \LSCB(Z;Y,B) = 
    -\sum\limits_{\substack{i \in B\\ |B_{y_i}|>1}}
    \frac{1}{|B_{y_i}|-1}
    \sum\limits_{j \in B_{y_i}\setminus \lmset i \rmset} 
    \log
    \left(
        \frac
        {\exp\big(\dm{z_i}{z_j}\big)}
        {
            \sum\limits_{k \in B \setminus \lmset i \rmset}
            \exp\big(\dm{z_i}{z_k}\big)
        }
    \right)
    \enspace.
\end{equation}
We next introduce the \emph{class-specific batch-wise loss} 
\begin{equation}
    \LSCB(Z;Y,B,y)
    = 
    \begin{cases}
        -
        \sum\limits_{i \in B_y}     
        \frac{1}{|B_{y_i}|-1}
        \sum\limits_{j \in B_{y_i}\setminus \lmset i \rmset} 
        \log
        \left(
            \frac
            {\exp\big(\dm{z_i}{z_j}\big)}
            {
                \sum\limits_{k \in B \setminus \lmset i \rmset}
                \exp\big(\dm{z_i}{z_k}\big)
            }
        \right)
        & \text{ if } |B_{y}|>1
        \\
        0 & \text{ else}\enspace.
    \end{cases}
    \enspace,
\end{equation}
This allows us to write 
\begin{align}
    \LSCB(Z;Y,B)
    & = 
        -\sum\limits_{\substack{i \in B\\|B_{y_i}|>1}}
        \frac{1}{|B_{y_i}|-1}
        \sum\limits_{j \in B_{y_i}\setminus \lmset i \rmset} 
        \log
        \left(
            \frac
            {\exp\big(\dm{z_i}{z_j}\big)}
            {
                \sum\limits_{k \in B \setminus \lmset i \rmset}
                \exp\big(\dm{z_i}{z_k}\big)
            }
        \right)
        \\
    & = 
        - 
        \sum_{\substack{y\in \mcY\\|B_y|>1}}
        \sum\limits_{i \in B_y}     
        \frac{1}{|B_{y_i}|-1}
        \sum\limits_{j \in B_{y_i}\setminus \lmset i \rmset} 
        \log
        \left(
            \frac
            {\exp\big(\dm{z_i}{z_j}\big)}
            {
                \sum\limits_{k \in B \setminus \lmset i \rmset}
                \exp\big(\dm{z_i}{z_k}\big)
            }
        \right)
        \\
    & = 
        \sum_{y\in \mcY}
        \LSCB(Z;Y,B,y) \enspace,
\end{align} 
and so 
\begin{equation}
    \LSC(Z;Y) = \sum_{y \in \mcY} \sum_{B \in \mcB} \LSCB(Z;Y,B,y) 
    \enspace.
\end{equation}
We use the following notation:  
the multiplicity of an element $x$ in the multiset $M$ is denoted by $\mult{M}(x)$. 
Furthermore, we introduce the label configuration of a batch, \ie, 
\begin{equation}
    \ymset(B) = \mset{y_i: ~ i\in B }    
    \enspace,
\end{equation} 
thus $\mult{\ymset(B)}(y) = |{\By{y}{Y}{B}}|$.

For example, if $\mcY = \set{a,b}$, $B = \mset{1,2,2,5,10}$ and $a = y_1=y_2$, $b=y_5=y_{10} $, then $\mult{B}(2)=2$, $\ymset(B) = \mset{a,a,a,b,b}$ and $\mult{\ymset(B)}(a) = 3 = |\mset{1,2,2}| = |B_a|$. 
We will slightly abuse notation ($Y$ is a tuple, not a multiset) and write $\mult{Y}(y) = \mult{\ymset([N])}(y) = |\mset{n \in [N]: y_n = y}|$. 
For every batch $B\in \mcB$ and label $y\in \mcY$, we will also write ${B_y}^C:=\mset{i\in B:~y_i\neq y}$ for the complement of ${B_y}:=\mset{i\in B:~y_i = y}$, which was already introduced in the \cref{def:sc} of the supervised contrastive loss.

\begin{definition}[Auxiliary partition of $\mcB$]
    For every $y\in \mcY$ and every $l\in \set{0,\dots,b}$, we define
    \begin{equation}
        \mcB_{y,l} : = \set{ {B \in \mcB}:~ \mult{\ymset(B)}(y) = l}
    \end{equation}
    as the set of all batches which contain exactly $l$ instances of class $y$.
\end{definition}

\begin{definition}[Auxiliary functions $S$, $S_{\text{rep}}$, $S_{\text{att}}$]
    Let $h, N \in \N$, $\rho_{\mcZ}>0$ and $\mcZ = \bbS_{\rho_{\mathcal Z}}^{h-1}$. 
    For fixed label configuration $Y\in \mcY^N$, batch $B\in \mcB$ and label $y\in \mcY$ with $\mult{\ymset(B)}(y)>1$, we define 
    \begin{align}
        \SCE(\,\cdot\,;Y,B,y): \mcZ^N &\to \bbR \\
        Z &\mapsto 
        S_{\text{att}}(Z;Y,B,y) + S_{\text{rep}}(Z;Y,B,y)
        \enspace,
    \end{align}
    where 
    \begin{align}
        \label{eq:def_s_att}
        S_{\text{att}}(Z;Y,B,y) 
        & = -
        \frac{1}{|{\By{y}{Y}{B}}|\,(|{\By{y}{Y}{B}}|-1)}
        \sum_{i \in {\By{y}{Y}{B}}}
        \sum_{j \in {\By{y}{Y}{B}}\setminus\mset{i}} 
        \inprod{z_i}{z_j}
        \\
        \label{eq:def_s_rep}
        S_{\text{rep}}(Z;Y,B,y) 
        &= 
        \begin{cases}
            \frac{1}{|{\By{y}{Y}{B}}|\,|{\By{y}{Y}{B}}^C|}
        \sum\limits_{i \in {\By{y}{Y}{B}}}
        \sum\limits_{j \in {\By{y}{Y}{B}}^C} 
        \inprod{z_i}{z_j}
        \qquad &\text{if } \mult{\ymset(B)}(y)\neq b
        \\
        0  &\text{if } \mult{\ymset(B)}(y) = b
        \end{cases}        
        \enspace.
    \end{align}
    Furthermore, we define 
    \begin{equation}
        S_{i}(Z;Y,B,y) = 
        \begin{cases}
            - \frac{1}{|{\By{y}{Y}{B}}|-1}
        \sum\limits_{j \in {\By{y}{Y}{B}}\setminus\mset{i}} 
        \inprod{z_i}{z_j}
        +
        \frac{1}{|{\By{y}{Y}{B}}^C|}
        \sum\limits_{j \in {\By{y}{Y}{B}}^C} 
        \inprod{z_i}{z_j}
        \qquad &\text{if } \mult{\ymset(B)}(y)\neq b
        \\
        - \frac{1}{|{\By{y}{Y}{B}}|-1}
        \sum\limits_{j \in {\By{y}{Y}{B}}\setminus\mset{i}} 
        \inprod{z_i}{z_j}
        &\text{if } \mult{\ymset(B)}(y) = b
        \end{cases}        
    \end{equation}
    so that $\frac{1}{|\By{y}{Y}{B}|}\sum_{i\in \By{y}{Y}{B}} S_i(Z;Y,B,y) = S(Z;Y,B,y)$.
\end{definition}

\subsection{Proof of \texorpdfstring{\cref{thm:supcon}}{Theorem SC}}

\begin{proof}[]
We first present the main steps of the proof of \cref{thm:supcon} and refer to subsequent technical lemmas when needed.
\begin{enumerate}[itemsep=2ex,labelindent=*,leftmargin=*,label=(\textbf{Step \arabic*}),labelwidth=\widthof{\ref{thm:supcon:proof:step-4}}]
    \item
    \label{thm:supcon:proof:step-1} 
    For each class $y\in \mcY$ and each batch ${B \in \mcB}$ with $\mult{\ymset(B)}(y)>1$, the class-specific batch-wise loss $\LSCB(Z;Y,B,y)$ (see Lemma \ref{lem:supcon_S}) is bounded from below by
    \begin{equation}
        \LSCB(Z;Y,B,y) \ge 
        \sum_{i\in \By{y}{Y}{B}}                
        \log 
        \left( 
            |{\By{y}{Y}{B}}| - 1 + |{\By{y}{Y}{B}}^C|
            \exp \left(S_i(Z;Y,B,y)\right)
        \right)
        \enspace.
    \end{equation}
    \item
    \label{thm:supcon:proof:step-2}  
    We regroup the addends of the sum $\LSC(Z;Y)$, \ie,
    \begin{equation}
        \LSC(Z;Y) = \sum_ {B \in \mcB} \sum_ {y \in \mcY} \LSCB(Z;Y,B,y) \enspace,
    \end{equation}
    s.t. each group is defined by addends requiring 
    $B\in \mcB_{y,l}=\set{ {B \in \mcB}|~ \mult{\ymset(B)}(y) = l}$. 
    As a result, we can leverage the bound of \ref{thm:supcon:proof:step-1} on each group, \ie, 
    \begin{align}
        \LSC(Z;Y) 
        &= \sum_{B \in \mcB} \sum_{y \in \mcY} \LSCB(Z;Y,B,y) \\ 
        & = \sum_{l=0}^b \sum_{y \in \mcY} \sum_{B \in \mcB_{y,l}} \LSCB(Z;Y,B,y)
        \\
        & \ge 
            \sum_{l=2}^{b} \sum_{y \in \mcY} \sum_{B \in \mcB_{y,l}} 
            \sum_{i\in \By{y}{Y}{B}} 
            l                
            \log 
            \left( 
                l - 1 + (b-l)
                \exp \left( S_i(Z;Y,B,y)
                \right)
            \right)
            \enspace.
    \end{align}
    Here, the sum over the value of $l$ starts at $l=2$, because $\LSCB(Z;Y,B,y)=0$ vanishes for batches $B\in \set{\mcB_{y,0},\mcB_{y,1}}$.
    \item
    \label{thm:supcon:proof:step-3} 
    Applying Jensen's inequality (see \cref{lem:supcon_jensen}), then yields
    \begin{equation}
        \LSC(Z;Y)
        \ge
        \sum_{l = 2}^{b} l M_l                           
            \log 
            \bigg( 
                l - 1 + (b-l)
                \exp \bigg( 
                    \frac {1} {M_l}
                    \sum_{y \in \mcY} 
                    \sum_{B \in \mcB_{y,l}} 
                    S(Z;Y,B,y)
                \bigg)
            \bigg)
        \enspace,
    \end{equation}
    where 
    $M_l = \sum_ {y \in \mcY} |\mcB_{y,l}|$.

    \item 
    \label{thm:supcon:proof:step-4}   
    In \cref{lem:supcon_outer_bound}, we characterize the equality case of the bound above. 
    It is achieved
    if and only if all intra-class and inter-class inner products agree, respectively.
\end{enumerate}

\noindent The next steps investigate, for each $l\in \set{2,\dots,b-1}$, the sum 
\begin{equation}
    \sum_{y \in \mcY} \sum_ {B \in \mcB_{y,l}}
    S(Z;Y,B,y)
    = 
    \bigg(
        \sum_{y \in \mcY} \sum_ {B \in \mcB_{y,l}}
        S_{\text{att}}(Z;Y,B,y)
    \bigg)
    +
    \bigg(
        \sum_{y \in \mcY} \sum_ {B \in \mcB_{y,l}}
        S_{\text{rep}}(Z;Y,B,y)
    \bigg)
\end{equation}

\begin{enumerate}[itemsep=2ex,labelindent=*,leftmargin=*,label=(\textbf{Step \arabic*}),labelwidth=\widthof{\ref{thm:supcon:proof:step-8}}]
    \setcounter{enumi}{4}
    \item The sum of the attraction terms, $\SSCB_{\text{att}}$, is maximal if and only if all intra-class inner products are maximal, i.e., they are equal to ${\rho_{\mathcal Z}}^2$ (see \cref{lem:supcon_att}). This implies
    \begin{equation}
            \sum_{y \in \mcY} \sum_{B \in \mcB_{y,l}}S_{\text{att}}(Z;Y,B,y) 
            \ge 
            -M_l \, {\rho_{\mcZ}}^2
            \enspace.                
    \end{equation}
    \item If $|\mcY|>2$, then the trivial bound on the repulsion term (inner products $= - {\rho_{\mathcal Z}}^2$) is not tight as this could only be achieved if the classes were on opposite poles of the sphere. 
    Thus we need an additional step: instead of summing the repulsion terms, $S_{\text{rep}}(Z;Y,B,y)$, over all labels and batches, as done in \ref{thm:supcon:proof:step-4}, we rewrite this summation as a sum over pairs of indices $(n,m) \in [N]^2$ of different classes $y_n\neq y_m$ (see \cref{lem:supcon_batches_to_indices}). That is, 
    \begin{equation}
        \sum_{y \in \mcY} \sum_{B \in \mcB_{y,l}}S_{\text{rep}}(Z;Y,B,y)
        =        
        \sum_ {y \in \mcY}
        \sum_ {\substack{n \in [N] \\ y_n = y}}
        \sum_ {\substack{m \in [N] \\ y_m \neq y}}
        K_{n,m}(y,l)
        \inprod{z_n}{z_m}
        \enspace,
    \end{equation}
    where 
    $K_{n,m}(y,l) := 
    \frac{1}{l(b-l)} 
    \sum_{B \in \mcB_{y,l}}
    \mult{ B_y }(n)
    \mult{ {B_y}^C }(m)$\enspace.
    \item As we assume the label configuration $Y$ to be balanced, we get that
    \begin{equation}
        K_{n,m}(y,l) = \frac{M_l}{N^2} \frac{|\mcY|}{|\mcY|-1} \enspace,
    \end{equation}
    which only depends on $l$ (and not on $y$), see \cref{lem:supcon_combinatorics} and \cref{eq:supcon:K_balanced}. Thus, it suffices to bound 
    \begin{equation}
        \sum_{y\in \mcY}
        \sum_{\substack{n\in [N]\\y_n = y}}
        \sum_{\substack{m\in [N]\\y_n \neq y}}
        \inprod{z_n}{z_m}
        \ge
        - {\rho_{\mathcal Z}}^2 
            \sum_{y\in \mcY}
            \mult{Y}(y)^2 
        = -  \frac{N^2}{|\mcY|} {\rho_{\mathcal Z}}^2
        \enspace,
    \end{equation}
    where equality is attained if and only if (a) $\sum_ {n \in [N]} z_n = 0$, and  (b) $y_n=y_m$ $\Rightarrow$ $z_n = z_m$ (see \cref{lem:supcon_rep_sum}).
    
    \item\label{thm:supcon:proof:step-8}  Finally, we combine all results from (Steps 1-7) and obtain the asserted lower bound 
    (see \cref{lem:supcon_final}), i.e.,
    \begin{align}
        \LSC(Z;Y) 
        \!&\ge
        \sum_{l = 2}^{b} M_l
        l \log 
        \left( 
            l - 1 + (b-l)
            \exp \left(\!
                - \frac{\rho_{\mcZ}^2}{M_l}  
                \left(
                    \frac{M_l}{N^2} \frac{|\mcY|}{|\mcY|-1}
                     \frac{N^2}{|\mcY|}
                    + M_l
                \right)
            \right)
        \right)\\
        &=
        \sum_{l = 2}^{b} M_l
        l \log 
        \left( 
            l - 1 + (b-l)
            \exp \left(- \frac{|\mcY|}{|\mcY|-1}
                {\rho_{\mcZ}}^2                
            \right)
        \right)
        \enspace,
    \end{align}
    where equality is attained if and only if all instances $z_n$ with equal label $y_n$ collapse to a vertex $\zeta_{y_n}$ of a regular simplex, inscribed in a hypersphere of radius ${\rho_{\mcZ}}$, i.e., conditions \ref{con:supcon:1} and \ref{con:supcon:2} in \cref{thm:supcon}.
\end{enumerate}
\end{proof}

\clearpage
\subsection{Technical lemmas}

In the following, we provide proofs for all technical lemmas invoked throughout Steps 1-8 in the proof of \cref{thm:supcon}.

\begin{lemma}
    \label{lem:supcon_S}
    Fix a class ${y \in \mcY}$ and a batch ${B \in \mcB}$ with $\mult{\ymset(B)}(y) \in \set{2,\dots,b}$. For every $Z\in \mcZ^N$ and every $Y\in \mcY^N$, the class-specific batch-wise loss $\LSCB(Z; Y,B,y)$ is bounded from below by
    \begin{equation}
        \LSCB(Z;Y,B,y) \ge 
        \sum_{i\in \By{y}{Y}{B}}                
        \log 
        \left( 
            |{\By{y}{Y}{B}}| - 1 + |{\By{y}{Y}{B}}^C|
            \exp \left(S_i(Z;Y,B,y)\right)
        \right)
    \enspace,
    \end{equation}    
    where equality is attained if and only if all of the following hold:
    \begin{enumerate}[label={(Q\arabic*)},labelindent=10pt,leftmargin=!,labelwidth=\widthof{\ref{con:supcon_batch_class:rep}}]
        \item \label{con:supcon_batch_class:att}
        There is $C_i(B,y)\in \mathbb R$ such that $\inprod{z_i}{z_j}= C_i(B,y)$ for every $j\in {\By{y}{Y}{B}} \setminus \mset{i}$
        \item \label{con:supcon_batch_class:rep}
        There is $D_i(B,y)\in \mathbb R$ such that $\inprod{z_i}{z_j}= D_i(B,y)$ for every $j\in {\By{y}{Y}{B}}^C$.
    \end{enumerate}
\end{lemma}
\begin{proof}
    We first bring the class-specific batch-wise loss in a form amenable to Jensen's inequality.
    Since $\mult{\ymset(B)}(y)>1$, we have that
    \begin{align}
        \LSCB(Z;Y,B,y)
        & = 
        -
        \sum\limits_{i \in B_y}
        \frac{1}{|B_{y}|-1}
        \sum\limits_{j \in B_{y}\setminus \lmset i \rmset} 
        \log
        \left(
            \frac
            {\exp\big(\dm{z_i}{z_j}\big)}
            {
                \sum\limits_{k \in B \setminus \lmset i \rmset}
                \exp\big(\dm{z_i}{z_k}\big)
            }
        \right)
        \\
    & = \label{lem:supcon_S:eqn:1}
        \sum\limits_{i \in B_y}          
        \log
        \left(
            \frac            
            {
                \sum\limits_{k \in B \setminus \lmset i \rmset}
                \exp\big(\dm{z_i}{z_k}\big)
            }
            {\exp\big(\frac{1}{|B_y\setminus\mset{i}|}\sum\limits_{j \in B_{y}\setminus \mset i}\dm{z_i}{z_j}\big)}
        \right)
        \enspace.
    \end{align}
    First, assume that $|{\By{y}{Y}{B}}^C|>0$. Then, for each $i\in B_{y}$, Jensen's inequality implies
    \begin{align}
        \label{lem:supcon_S:eqn:2}
        &\sum\limits_{k \in B \setminus \lmset i \rmset}
                \exp\big(\dm{z_i}{z_k}\big)\\
        &=
        \underbrace{\sum\limits_{k \in B_{y} \setminus \lmset i \rmset}
                \exp\big(\dm{z_i}{z_k}\big)}_{
                    \stackrel{\ref{con:supcon_batch_class:att}}\ge 
        |{ \By{y}{Y}{B}}\setminus \mset{i}|
        \exp \left(
            \frac{1}{|{ \By{y}{Y}{B}}\setminus \mset{i}|}
            \sum_{k \in { \By{y}{Y}{B}} \setminus \mset{i}}\inprod{z_i}{z_k}
        \right)
                }
        +
        \underbrace{\sum\limits_{k \in {B_{y}}^C \setminus \lmset i \rmset}
                \exp\big(\dm{z_i}{z_k}\big)}_{
                    \stackrel{\ref{con:supcon_batch_class:rep}}\ge
        |{\By{y}{Y}{B}}^C|
        \exp \left(
            \frac{1}{|{\By{y}{Y}{B}}^C|}
            \sum_{k \in { \By{y}{Y}{B}}^C }\inprod{z_i}{z_k}
        \right)
                }
        .
    \end{align}
    Consequently,
    \begin{align}
        &\LSCB(Z;Y,B,y)\\
        &\ge
        \sum_{i\in \By{y}{Y}{B}}                
        \log 
        \left( 
            |{ \By{y}{Y}{B}}\setminus \mset{i}| + |{\By{y}{Y}{B}}^C|\,
            \frac
            {\exp \bigg(
                \frac{1}{|{\By{y}{Y}{B}}^C|}
                \sum_{k \in { \By{y}{Y}{B}}^C }\inprod{z_i}{z_k}
                \bigg)
            }
            {\exp\bigg(\frac{1}{|B_y\setminus\mset{i}|}\sum\limits_{j \in B_{y_i}\setminus \lmset i \rmset}\dm{z_i}{z_j}\bigg)}
        \right)
        \\
        &=
        {\sum_{i\in \By{y}{Y}{B}}                
        \log 
        \left( 
            |{ \By{y}{Y}{B}}\setminus \mset{i}| + |{\By{y}{Y}{B}}^C|
            \exp\left(
                \frac{1}{|{\By{y}{Y}{B}}^C|}
                \sum_{k \in { \By{y}{Y}{B}}^C }\inprod{z_i}{z_k}
            -
            \frac{1}{|B_y\setminus\mset{i}|}\sum\limits_{j \in B_{y_i}\setminus \lmset i \rmset}\dm{z_i}{z_j}
            \right)\!
        \right)}
        \\
        &=
        \sum_{i\in \By{y}{Y}{B}}                
        \log 
        \left( 
            |{ \By{y}{Y}{B}}| - 1 + |{\By{y}{Y}{B}}^C|
            \exp\left(
                \SSCB_i(Z;Y,B,y)
            \right)
        \right)
        \enspace.
    \end{align}
    Herein, equality is attained if and only if
    \begin{enumerate}[label={(Q\arabic*)},labelindent=10pt,leftmargin=!,labelwidth=\widthof{\ref{con:supcon_batch_class:rep}}]
        \item[\ref{con:supcon_batch_class:att}] 
            There is $C_i(B,y)\in \mathbb R$ such that $\inprod{z_i}{z_j}= C_i(B,y)$ for every $j\in {\By{y}{Y}{B}} \setminus \mset{i}$,
        \item[\ref{con:supcon_batch_class:rep}] 
            There is $D_i(B,y)\in \mathbb R$ such that $\inprod{z_i}{z_j}= D_i(B,y)$ for every $j\in {\By{y}{Y}{B}}^C$.
    \end{enumerate}

    Last, assume that ${|\By{y}{Y}{B}}^C|=0$, i.e., $\By{y}{Y}{B}=B$ and ${\By{y}{Y}{B}}^C= \emptyset$.
    Then, the second summand of \cref{lem:supcon_S:eqn:2} vanishes and so
    \begin{equation}
        \LSCB(Z;Y,B) \ge b                
        \log 
        \left( 
            b - 1 
        \right)
    \enspace,
    \end{equation}
    with equality condition~\ref{con:supcon_batch_class:att}. (the condition \ref{con:supcon_batch_class:rep} is trivially fulfilled)
\end{proof}
\begin{lemma}
    \label{lem:supcon_jensen}
    Let $l\in \set{2,\dots,b}$. For every $Y \in \mcY^N$ and every $Z\in \mcZ^N$, we have that
    \begin{align}
        \nonumber
        &\frac{1}{l M_l}
        \sum_{y \in \mcY} \sum_ {\substack{ B \in \mcB_{y,l}}}     
        \sum_{i \in\By{y}{Y}{B}}               
        \log 
        \left( 
            l - 1 + (b-l)
            \exp \left( \SSCB_i(Z;Y,B,y)
            \right)
        \right)
        \\
        \ge 
        & \log 
        \left( 
            l - 1 + (b-l)
            \exp \left( 
                \frac{1}{M_l}
                \sum_{y \in \mcY} \sum_ {B \in \mcB_{y,l}}
                S(Z;Y,B,y)
            \right)
        \right)  
        \label{eq:supcon_jensen}
        \enspace,
    \end{align}
    where $M_l =\sum_ {y \in \mcY} |\mcB_{y,l}|$ and equality is attained if and only if:
    \begin{enumerate}[label={(Q\arabic*)},labelindent=10pt,leftmargin=!,labelwidth=\widthof{\ref{con:supcon_jensen}}]
        \setcounter{enumi}{2}
        \item \label{con:supcon_jensen}
            $l=b$ or there is a constant $D(l)\in \mathbb R$ such that $S_i(Z;Y,B,y) = D(l)$ for every triple $(y,B,i)$ with $y \in \mcY$, $B \in \mcB_{y,l}$, and $i\in B_y$.
    \end{enumerate}
\end{lemma}
\begin{proof}
    If $l=b$, then \cref{eq:supcon_jensen} holds with equality. Assume $l<b$.
    Let $\alpha,\beta>0$ and consider the convex function    
    $f:\bbR \to \bbR$, $x\mapsto \log(\alpha +\beta \exp(x))$.
    By Jensen' inequality, for every finite sequence $(x_{B,y,i})_{ ({y \in \mcY},{B \in \mcB_{y,l}, i\in B})}$ 
    of length
    \begin{align}
        \lvert \set{(y,B,i)|~ {y \in \mcY},{B \in \mcB_{y,l}, i\in B_y}} \rvert
        = \sum_{y\in\mcY} \sum_{B\in \mcB_{y,l}} |B_y| 
        = \sum_{y\in\mcY} \sum_{B\in \mcB_{y,l}} l
        = l \sum_{y\in\mcY}\lvert\mcB_{y,l}\rvert 
    \end{align}
    it holds that
    \begin{equation}
        \frac{1}{l \sum_{y\in\mcY}|\mcB_{y,l}|}
        \sum_{y \in \mcY} \sum_ {\substack{ B \in \mcB_{y,l}}} 
        \sum_{i\in B_y}
        f(x_{B,y,i})         
        \stackrel{\ref{con:supcon_jensen}}{\ge}
        f \left(
            \frac{1}{l \sum_{{y \in \mcY}}|\mcB_{y,l}|}
            \sum_{y \in \mcY} \sum_ {\substack{ B \in \mcB_{y,l}}} 
            \sum_{i\in B_y}
            x_{B,y,i}
        \right)
        \enspace.
    \end{equation} 
    Now, choose $\alpha = l-1$, $\beta = b-l$ and $x_{B,y,i} = S_i(Z;Y,B,y)$. Next, recall that $M_l= \sum_{y\in\mcY}|\mcB_{y,l}|$ and that for $B\in \mcB_{y,l}$ it holds that $|\By{y}{Y}{B}| = l$. Last, insert  $ S(Z;Y,B,y) = \frac{1}{|\By{y}{Y}{B}|}\sum_{i\in \By{y}{Y}{B}} S_i(Z;Y,B,y)$. Together, this implies the lower bound \cref{eq:supcon_jensen} in which equality is attained if and only if:
    \begin{enumerate}[label={(Q\arabic*)},labelindent=10pt,leftmargin=!,labelwidth=\widthof{\ref{con:supcon_jensen}}]
        \item[\ref{con:supcon_jensen}]
        There is a constant $D(l)\in \mathbb R$ such that $S_i(Z;Y,B,y) = D(l)$ for every triple $(y,B,i)$ with $y \in \mcY$, $B \in \mcB_{y,l}$, and $i\in B_y$.
    \end{enumerate}
\end{proof}
In the next step, we combine \cref{lem:supcon_S} with \cref{lem:supcon_jensen}, which implies a bound with more tangible equality conditions.
\begin{lemma}
    \label{lem:supcon_outer_bound}
    For every $Y\in \mcY^N$ and every $Z\in \mcZ^N$
    the supervised contrastive loss $\LSC$ is bounded from below by
    \begin{equation}
        \label{eq:supcon_outer_bound}
        \LSC(Z;Y)
        \ge
        \sum_{l = 2}^{b} l \, M_l                           
            \log 
            \left( 
                l - 1 + (b-l)
                \exp \left( 
                    \frac {1} {M_l}
                    \sum_{y \in \mcY} \sum_ {B \in \mcB_{y,l}}
                    S(Z;Y,B,y)
                \right)
            \right)
        \enspace,
    \end{equation}
    where $M_l$ is defined as in \cref{lem:supcon_jensen} and equality is attained if and only if:
    \begin{enumerate}[label={(A\arabic*)},labelindent=10pt,leftmargin=!,labelwidth=\widthof{\ref{con:supcon_outer_bound:inter}}]
        \item \label{con:supcon_outer_bound:intra}
            There exists a constant $\alpha$, such that $\forall n,m\in [N]$, $y_n = y_m$ implies $\inprod{z_n}{z_m} = \alpha$\enspace.
        \item \label{con:supcon_outer_bound:inter}
        There exists a constant $\beta$, such that $\forall n,m\in [N]$, $y_n \neq y_m$ implies $\inprod{z_n}{z_m} = \beta$\enspace.
    \end{enumerate}
\end{lemma}
\begin{proof}
    First, observe that $\LSCB(Z;Y,B,y)= 0$ if $B\in \set{\mcB_{y,0}, \mcB_{y,1}}$. Leveraging \cref{lem:supcon_S} and \cref{lem:supcon_jensen}, we get
    \begin{align}
        \LSC(Z;Y) 
        &= \sum_ {B \in \mcB} \sum_ {y \in \mcY} \LSCB(Z;Y,B,y) \\
        & = \sum_{l =2}^{b} \sum_ {y \in \mcY} \sum_{B\in \mcB_{y,l}} \LSCB(Z;Y,B,y) 
        \\
        & \stackrel{\text{Lem.~\ref{lem:supcon_S}}}{\ge}
        \sum_{l =2}^{b} \sum_{y \in \mcY} \sum_{\mcB_{y,l}} 
        \sum_{i\in B_y}              
        \log 
        \left( 
            l - 1 + (b-l)
            \exp \left( S_i(Z;Y,B,y)
            \right)
        \right)
        \\
        &
        \label{lem:supcon_outer_bound:eq1} 
        \stackrel{\text{Lem.~\ref{lem:supcon_jensen}}}{\ge}
        \sum_{l =2}^{b}l \, M_l                           
        \log 
        \left( 
            l - 1 + (b-l)
            \exp \left( 
                \frac {1} {M_l}
                \sum_{y \in \mcY} \sum_ {B \in \mcB_{y,l}}
                S(Z;Y,B,y)
            \right)
        \right)
        \enspace.
    \end{align}
    Here equality is attained if and only if all the following conditions hold:
    \begin{enumerate}[labelindent=10pt,leftmargin=!,labelwidth=\widthof{\ref{con:supcon_jensen}}]
        \item[\ref{con:supcon_batch_class:att}]
            $\forall l \in \set{2,\dots,b}$, $\forall y \in \mcY$, $\forall B\in \mcB_{y,l}$ and $\forall i \in B$ there is a $C_i(B,y)$ such that $\forall j\in {\By{y}{Y}{B}} \setminus \mset{i}$ all inner products $\inprod{z_i}{z_j}= C_i(B,y)$ are equal. 
        \item[\ref{con:supcon_batch_class:rep}]
            $\forall l \in \set{2,\dots,b}$, $\forall y \in \mcY$, $\forall B\in \mcB_{y,l}$ and $\forall i \in B$ there is a $D_i(B,y)$ such that $\forall j\in {\By{y}{Y}{B}}^C$ all inner products $\inprod{z_i}{z_j}= D_i(B,y)$ are equal.
        \item[\ref{con:supcon_jensen}] 
            $\forall l \in \set{2,\dots,b-1}$, there is a constant $D(l)\in \mathbb R$ such that $S_i(Z;Y,B,y) = D(l)$ for every triple $(y,B,i)$ with $y \in \mcY$, $B \in \mcB_{y,l}$, and $i\in B_y$.
    \end{enumerate}
    It remains to show that \ref{con:supcon_batch_class:att} \& \ref{con:supcon_batch_class:rep} \& \ref{con:supcon_jensen} is equivalent to:
    \begin{enumerate}[labelindent=10pt,leftmargin=!,labelwidth=\widthof{\ref{con:supcon_outer_bound:inter}}]
        \item [\ref{con:supcon_outer_bound:intra}]
            There exists a constant $\alpha$, such that $\forall n,m\in [N]$, $y_n = y_m$ implies $\inprod{z_n}{z_m} = \alpha$\enspace.
        \item [\ref{con:supcon_outer_bound:inter}]
        There exists a constant $\beta$, such that $\forall n,m\in [N]$, $y_n \neq y_m$ implies $\inprod{z_n}{z_m} = \beta$\enspace.
    \end{enumerate}
    Recall for $B\in \mcB_{y,l}$ with $l<b$ the definition of the auxiliary function $S_i$, i.e
    \begin{equation}
        S_{i}(Z;Y,B,y) = 
            - \frac{1}{|{\By{y}{Y}{B}}|-1}
        \sum_{j \in {\By{y}{Y}{B}}\setminus\mset{i}} 
        \inprod{z_i}{z_j}
        +
        \frac{1}{|{\By{y}{Y}{B}}^C|}
        \sum_{j \in {\By{y}{Y}{B}}^C} 
        \inprod{z_i}{z_j}    \enspace.    
    \end{equation}
    We start with the direction \ref{con:supcon_outer_bound:intra} \& \ref{con:supcon_outer_bound:inter} $\implies$ \ref{con:supcon_batch_class:att} \& \ref{con:supcon_batch_class:rep} \& \ref{con:supcon_jensen}.
    \begin{enumerate}[labelindent=10pt,leftmargin=!,labelwidth=\widthof{\ref{con:supcon_jensen}}]
        \item[\ref{con:supcon_batch_class:att}]
            Fix $l\in\set{2,\dots,b}$, $y\in Y$, $B\in \mcB_{y,l}$ and $i\in B$.
            Let $j\in {\By{y}{Y}{B}} \setminus \mset{i}$, i.e., $y_j =y = y_i$. Therefore condition \ref{con:supcon_outer_bound:intra} implies $\inprod{z_i}{z_j} = \alpha$, i.e., condition \ref{con:supcon_batch_class:att} is fulfilled with $C_i(B,y) = \alpha$. 
        \item[\ref{con:supcon_batch_class:rep}]
            Fix $l\in\set{2,\dots,b}$, $y\in Y$, $B\in \mcB_{y,l}$ and $i\in B$.
            Let $j\in {\By{y}{Y}{B}}^C$, i.e., $y_j \neq y = y_i$. Therefore condition \ref{con:supcon_outer_bound:inter} implies $\inprod{z_i}{z_j} = \beta$, i.e. condition \ref{con:supcon_batch_class:rep} is fulfilled with $D_i(B,y) = \beta$. 
        \item[\ref{con:supcon_jensen}]
            Fix $l\in \set{2,\dots,b-1}$. Let $y\in \mcY$, $B\in \mcB_{y,l}$ and $i\in B$. By condition \ref{con:supcon_outer_bound:intra}, the first summand of 
            $S_i(Z;Y,B,y)$ is $-\alpha$ and by condition \ref{con:supcon_outer_bound:inter}, the second summand is
            $\beta$, so $S_i(Z;Y,B,y) = \beta - \alpha$ and condition \ref{con:supcon_jensen} is fulfilled with $D(l)=\beta-\alpha$.
    \end{enumerate}
    Next, we show \ref{con:supcon_batch_class:att} \& \ref{con:supcon_batch_class:rep} \& \ref{con:supcon_jensen} $\implies$ \ref{con:supcon_outer_bound:intra} \& \ref{con:supcon_outer_bound:inter}.

    Let $y,y'\in \mcY$ and $n,m,n',m'\in [N]$ with $y_n=y_m = y$ and $y_{n'} = y_{m'} = y'$.
    For brevity, we write multisets such that the multiplicity of each element is denoted as a superscript, e.g., $\lmset n^b \rmset$ denotes the multiset $\lmset n, \dots, n \rmset$ which contains $n$ exactly $b$ times.
    Recall, that we assume that $b\ge 3$. 
    \begin{enumerate}[labelindent=10pt,leftmargin=!,labelwidth=\widthof{\ref{con:supcon_outer_bound:inter}}]
        \item [\ref{con:supcon_outer_bound:intra}]
        We need to show that $\inprod{z_n}{z_m} = \inprod{z_{n'}}{z_{m'}}$.
        There are two cases: $y = y'$ and $y\neq y'$.

        First, assume $y \neq y'$.\\
        Choose $l=2$, and consider the batch $B_1 = \mset{n,m,(n')^{b-2}}\in \mcB_{y,2}$.
        Then 
        $$\SSCB_n(Z; Y, B_1,y) = - \inprod{z_n}{z_m} +  \inprod{z_n}{z_{n'}}\enspace.$$
        Similarly, for the batch $B_2= \mset{n',m',n^{b-2}}\in \mcB_{y',2}$ we get $\SSCB_{n'}(Z; Y, B_2,y') = - \inprod{z_{n'}}{z_{m'}} + \inprod{z_n'}{z_{n}}$.
        Since condition~\ref{con:supcon_jensen} implies that $\SSCB_n(Z; Y, B_1,y) = \SSCB_{n'}(Z; Y, B_2,y')$, we conclude that $\inprod{z_{n}}{z_{m}} = \inprod{z_{n'}}{z_{m'}}$.

        \vskip2ex
        Now, assume $y=y'$.\\
        Let $p\in [N]$ such that $y_p \neq y$.
        Again, choose $l=2$ and consider the batches $B_1 = \mset{n,m,p^{b-2}}\in \mcB_{y,2}$ and $B_2 =\mset{n',m',p^{b-2}}\in \mcB_{y,2}$. By the same argument as in the preceding case of $y\neq y'$, we have that
        $\SSCB_n(Z; Y, B_1,y) = - \inprod{z_n}{z_m} + \inprod{z_n}{z_{p}}$
        and $\SSCB_{n'}(Z; Y, B_2,y) = - \inprod{z_{n'}}{z_{m'}} + \inprod{z_{n'}}{z_{p}}$.
        Therefore, condition~\ref{con:supcon_jensen} implies that 
        $$- \inprod{z_n}{z_m} + \inprod{z_n}{z_{p}} = - \inprod{z_{n'}}{z_{m'}} + \inprod{z_{n'}}{z_{p}} \enspace.$$
        Now, pick the batch $B_3 = \mset{z_n,z_{n'},p^{b-2}}$. From condition~\ref{con:supcon_batch_class:rep} it follows that $\inprod{z_n}{p} = \inprod{z_{n'}}{p}$ and thus $\inprod{z_{n'}}{z_{m'}} = \inprod{z_{n}}{z_{m}}$.

        \item [\ref{con:supcon_outer_bound:inter}]
        We need to show that $\inprod{z_n}{z_{n'}} = \inprod{z_{m}}{z_{m'}}$ if $y\neq y'$.
        
        Choose $l=2$, and consider the two batches $B_1 = \mset{n,m,(n')^{b-2}} \in \mcB_{y,2}$ and $B_2 = \mset{n,m,(m')^{b-2}} \in \mcB_{y,2}$.
        By condition \ref{con:supcon_jensen}, 
        \begin{equation}
            \begin{split}
           - \inprod{z_n}{z_m} + \inprod{z_n}{z_{n'}} =  S_n(Z,Y,B_1,y) & =  S_m(Z,Y,B_2,y) \\
           & = - \inprod{z_n}{z_m} + \inprod{z_m}{z_{m'}} \enspace.
            \end{split}
        \end{equation}
        This immediately implies $\inprod{z_n}{z_{n'}} = \inprod{z_{m}}{z_{m'}}$.
    \end{enumerate}
\end{proof}
In the following, we address the two parts of the sum in the exponent in \cref{eq:supcon_outer_bound}, i.e.,
\begin{equation}
    \sum_{y \in \mcY} \sum_ {B \in \mcB_{y,l}}\!\!
    S(Z;Y,B,y)
    = 
    \underbrace{
    \left(
        \sum_{y \in \mcY} \sum_ {B \in \mcB_{y,l}}
        S_{\text{att}}(Z;Y,B,y)
    \right)\!
    }_{\text{Lem.~}\ref{lem:supcon_att}}
    +\!
    \underbrace{
    \left(
        \sum_{y \in \mcY} \sum_ {B \in \mcB_{y,l}}
        S_{\text{rep}}(Z;Y,B,y)\!
    \right)
    }_{\text{Lem.~}\ref{lem:supcon_rep}}
\end{equation}
While the first summand is handled easily via \cref{lem:supcon_att}, the second summand requires further considerations, encapsulated in Lemmas~\ref{lem:supcon_batches_to_indices}, 
\ref{lem:combinatorics}, \ref{lem:supcon_combinatorics}, \ref{lem:supcon_rep_sum} and finally combined to \cref{lem:supcon_rep}.
\begin{lemma}[Sum of attraction terms]
    \label{lem:supcon_att}
    Let $l\in \set{2,\dots,b}$ and let $\mcZ = \bbS_{\rho_{\mathcal Z}}$. 
    For every $Y\in \mcY^N$ and every $Z\in \mcZ^N$, it holds that 
    \begin{equation}
        \label{lem:supcon_att:eq1}
        \sum_{y \in \mcY}\sum_{B \in \mcB_{y,l}}
        S_{\text{att}}(Z;Y,B,y)
        \ge
        - \left(\sum_{y\in\mcY} |\mcB_{y,l}|\right) 
        {\rho_{\mcZ}}^2
        \enspace,
    \end{equation}
    where equality is attained if and only if:
    \begin{enumerate}[label={(Q\arabic*)}, start=4,labelindent=10pt,leftmargin=!,labelwidth=\widthof{\ref{con:supcon_att}}]
        \item \label{con:supcon_att}
            For every $n,m\in [N]$, $y_n = y_m$ implies $z_n = z_m$\enspace.
    \end{enumerate}
\end{lemma}
\begin{proof}
    Recall the definition of $S_{\text{att}}(Z;Y,B,y)$ from \cref{eq:def_s_att}:
    \begin{equation}
        S_{\text{att}}(Z;Y,B,y) 
        = -
        \frac{1}{|{\By{y}{Y}{B}}|\,|{\By{y}{Y}{B}}\setminus \mset{i}|}
        \sum_{i \in {\By{y}{Y}{B}}}
        \sum_{j \in {\By{y}{Y}{B}}\setminus\mset{i}} 
        \inprod{z_i}{z_j}
        \enspace.
    \end{equation}
    Using the Cauchy-Schwarz inequality and the assumption that $\mcZ$ is a hypersphere of radius $\rho_{\mathcal Z}$, $S_{\text{att}}(Z;Y,B,y)$ is bounded from below by 
    \begin{equation}
        S_{\text{att}}(Z;Y,B,y) 
        \stackrel{\ref{con:supcon_att}}\ge  -
        \frac{1}{|{\By{y}{Y}{B}}|\,|{\By{y}{Y}{B}}\setminus \mset{i}|}
        \sum_{i \in {\By{y}{Y}{B}}}
        \sum_{j \in {\By{y}{Y}{B}}\setminus\mset{i}} 
        \norm{z_i} \norm{z_j}
        = 
        - {\rho_{\mcZ}}^2
        \enspace,
    \end{equation}
    which already implies the lower bound \cref{lem:supcon_att:eq1}.

    For fixed $l\in\set{2,\dots,b}$, equality is attained if and only if there is equality in the Cauchy-Schwarz inequality. 
    This means, that for every $y\in\mcY$, for every $B \in \mcB_{y,l}$ and for every $i,j \in \By{y}{Y}{B}$ there exists $\lambda\ge0$, such that $z_i = \lambda z_j$.
    Since the $z_i$ and $z_j$ are on a hypersphere, this is equivalent to $z_i = z_j$.
    Furthermore,
    for each pair of indices $n,m \in [N]$ with equal class $y_n = y_m =y$, there exists a batch $B \in \mcB_{y,l}$ containing both indices. Hence the equality condition is equivalent to 
    \begin{enumerate}[labelindent=10pt,leftmargin=!,labelwidth=\widthof{\ref{con:supcon_att}}]
        \item [\ref{con:supcon_att}] For every $n,m\in [N]$, $y_n = y_m$ implies $z_n =z_m$.
    \end{enumerate}
\end{proof}
Next, we consider the repulsion component. 
Recall the definition of $S_{\text{rep}}(Z;Y,B,y)$ from \cref{eq:def_s_rep}. We want to bound
\begin{equation}
    \sum_{y \in \mcY}\sum_{B \in \mcB_{y,l}}
    \SSCB_{\text{rep}}(Z;Y,B,y)
    = 
    \sum_{y \in \mcY}\sum_{B \in \mcB_{y,l}}
    \frac{1}{|{\By{y}{Y}{B}}|\,|{\By{y}{Y}{B}}^C|}
    \sum_{i \in {\By{y}{Y}{B}}}
    \sum_{j \in {\By{y}{Y}{B}}^C} 
    \inprod{z_i}{z_j}
    \enspace.
\end{equation}
Similarly to the proof of \cref{lem:supcon_att}, we could bound each inner product by $\inprod{z_i}{z_j}\ge -{\rho_{\mcZ}}^2$. However, the infered inequality will not be tight and thus useless for identifying the minimizer of the sum. This is because equality would be attained if and only if all points $z_n,z_m\in \mcZ$ of different class $y_n \neq y_m$ were on opposite poles of the sphere. Yet, this is impossible for $|\mcY|>2$, i.e., if there are more than two classes.
In that case, the argument is more complex, and we split it in a sequence of lemmas.

\begin{lemma}
    \label{lem:supcon_batches_to_indices}
    Let $l \in \set{2,\dots,b-1}$ and let $y \in \mcY$.
    For every $Y\in\mcY^N$ and every $Z\in \mcZ^N$ the following identity holds:
    \begin{equation}
        \sum_{B \in \mcB_{y,l}}
        S_{\text{rep}}(Z;Y,B,y)
        = 
            \sum_{\substack{n\in [N]\\ y_n = y}} 
            \sum_{\substack{m\in [N] \\ y_m \neq y}}
            K_{n,m}(y,l) \,
            \inprod{z_n}{z_m}
        \enspace,
    \end{equation}
    where for each $n,m\in [N]$ with $y_n = y$ and $y_m \neq y$ the combinatorial factor $K_{n,m}(y,l)$ is defined by
    \begin{equation}
        K_{n,m}(y,l) = 
        \frac{1}{l(b-l)} 
        \sum_{B \in \mcB_{y,l}}
        \mult{ B_y }(n)
        \mult{ {B_y}^C }(m)
        \enspace.
    \end{equation}
\end{lemma}

\begin{proof}
    The lemma follows from appropriately partitioning the sum:
    \begin{align}
        \sum_{B \in \mcB_{y,l}}
        S_{\text{rep}}(Z;Y,B,y)
        & =
            \sum_{B \in \mcB_{y,l}}
            \frac{1}{|{\By{y}{Y}{B}}|\,|{\By{y}{Y}{B}}^C|}
            \sum_{i \in {\By{y}{Y}{B}}}
            \sum_{j \in {\By{y}{Y}{B}}^C} 
            \inprod{z_i}{z_j}
            \\
        & = 
            \sum_{n\in [N]} \sum_{m\in [N]}
            \frac{1}{l(b-l)}
            \sum_{B \in \mcB_{y,l}}
            \sum_{\substack{i \in {\By{y}{Y}{B}} \\ i = n}}
            \sum_{\substack{j \in {\By{y}{Y}{B}}^C \\ j = m}}
            \inprod{z_i}{z_j}
            \\
        & = 
            \sum_{\substack{n\in [N]\\ y_n = y}} 
            \sum_{\substack{m\in [N] \\ y_m \neq y}}
            \inprod{z_n}{z_m}
            \frac{1}{l(b-l)}
            \sum_{B \in \mcB_{y,l}}
            \bigg(\sum_{\substack{i \in {\By{y}{Y}{B}} \\ i = n}} 1 \bigg)
            \bigg(\sum_{\substack{j \in {\By{y}{Y}{B}}^C \\ j = m}} 1 \bigg)
            \\
        & = 
            \sum_{\substack{n\in [N]\\ y_n = y}} 
            \sum_{\substack{m\in [N] \\ y_m \neq y}}
            \inprod{z_n}{z_m}
            \frac{1}{l(b-l)}
            \sum_{B \in \mcB_{y,l}}
            \mult{ \By{y}{Y}{B} }(n)
            \mult{ \By{y}{Y}{B}^C }(m)
            \\
        & = 
            \sum_{\substack{n\in [N]\\ y_n = y}} 
            \sum_{\substack{m\in [N] \\ y_m \neq y}}
            \inprod{z_n}{z_m} \,
            K_{n,m}(y,l)
        \enspace.
    \end{align}
\end{proof}
In order to address the quantities $K_{n,m}(y, l)$, we will need the combinatorial identities of the subsequent \cref{lem:combinatorics}.
\begin{lemma}
    \label{lem:combinatorics}
    Let $n,m \in \bbN$.
    \begin{enumerate}[label={(\arabic*)}]
        \item The number of $m$-multisets over $[n]$ is
        \begin{equation} \label{eq:mset_card}
            \msetch{n}{m} = \binom{n + m -1}{m}
            \enspace.
        \end{equation}
        \item
        \begin{equation}
            \sum_{k=0}^m \msetch{n}{k} = \msetch{n+1}{m}
            \label{eq:comb_sum}
        \end{equation}
        \item 
        \begin{equation}
            \msetch{n+1}{m} = \msetch{n}{m} + \msetch{n+1}{m-1}
            \label{eq:mset_rec_rel}
        \end{equation}
        \item Let $m\ge 1$, then
        \begin{equation} \label{eq:mset_sum}
            \sum_{k\in [m]} k \msetch{n-1}{m-k} = \msetch{n+1}{m-1}=  \frac m n \msetch{n}{m}
        \end{equation}
    \end{enumerate}
\end{lemma}
\begin{proof}
    The first three identities are well known and imply the last one. We include their proofs for completeness.
    \begin{enumerate}[label={\underline{Ad (\arabic*):}}, labelindent=*,leftmargin=*,labelwidth=\widthof{\ref{lem:combinatorics:proof:4}}]
        \item Follows from the stars and bars representation of multisets. Every $m$-mutliset over $[n]$ is uniquely determined by the position of $m$ bars in an $n+m-1$ tuple of stars and bars. Hence, the number of such multisets equals the number of $m$-element subsets of an $n+m-1$-element set, which 
        is given by the binomial coefficient in \cref{eq:mset_card}. More precisely, the multiplicity of a number $k\in [n]$ in the multiset is encoded by the number of stars between the $(k-1)$-th and the $k$-th bar.
        For example, for $n=5$ and $k=4$ the multiset ${1,3,4,4}$ is represented by $(*||*|**|)$.
        \item Denote by $\mcP_{\mset{\,}}(n,m)$ the set of all $m$-multisets over $[n]$. Obviously,
        \begin{equation}
            \mcP_{\mset{\,}}(n+1,m) = \bigsqcup_{k = 0}^m \set{M \in  \mcP_{\mset{\,}}(n+1,m)|~ \mult{M}(n+1)=k}
            \enspace.
        \end{equation}
        Thinking of a $m$-multiset over $[n+1]$ containing the element $(n+1)$ exactly $k$-times as a $m-k$ multiset over $[n]$, we get from \cref{eq:mset_card}
        \begin{align}
            \msetch{n+1}{m} 
            &= |\mcP_{\mset{\,}}(n+1,m)| \\
            &= \sum_{k=0}^m 
            |\set{M \in  \mcP_{\mset{\,}}(n+1,m)| \mult{M}(n+1)=k}|
            \\
            & = 
            \sum_{k=0}^m \msetch{n}{m-k} 
            = \sum_{k=0}^m \msetch{n}{k}
            \enspace.
        \end{align}
        \item Follows directly from the previous argument. In particular, 
        \begin{equation}
            \begin{split}
            \msetch{n+1}{m} 
            \stackrel{\eqref{eq:comb_sum}}{=} 
            \sum_{k=0}^m \msetch{n}{k} 
            & = \sum_{k=0}^{m-1} \msetch{n}{k}  + \msetch{n}{m} \\
            & \stackrel{\eqref{eq:comb_sum}}{=}
            \msetch{n+1}{m-1} + \msetch{n}{m}
            \enspace.
            \end{split}
        \end{equation}
        \item\label{lem:combinatorics:proof:4} The second equality is obvious, once both sides are expanded to the level of factorials. 
        For the first equality, we prove by induction the equivalent formula
        \begin{equation}
            \sum_{k=0}^{m} (m-k) \msetch{n-1}{k} 
            =
            \msetch{n+1}{m-1} 
            \enspace.
            \label{eq:comb_equi}
        \end{equation}
        First, consider the case $m=1$. Then both 
        \begin{equation}
            \begin{split}
            & \sum_{k=0}^{1} (1-k) \msetch{n-1}{k} = (1-0) \msetch{n-1}{0}= 1\enspace,
            \text{and} \\
            & \msetch{n+1}{m-1} = \msetch{n+1}{0} = 1 
            \enspace.
            \end{split}
        \end{equation}
        Secondly, assume that \cref{eq:comb_equi} holds for $m$. We show that it then also holds for $m+1$, i.e. 
        \begin{equation}
            \sum_{k = 0}^{m+1} (m+1-k) \msetch{n-1}{k} = \msetch{n+1}{m} \enspace.
        \end{equation}
        The proof is a simple application of the previously derived summation identities:
        \begin{align}
            \sum_{k=0}^{m+1} (m+1-k) \msetch{n-1}{k}
            & = 
                \underbrace{\sum_{k=0}^m (m-k) \msetch{n-1}{k}}_{\eqref{eq:comb_equi}}
                + \underbrace{\sum_{k=0}^m \msetch{n-1}{k} }_{\eqref{eq:comb_sum}}
            \\
                & = \msetch{n+1}{m-1} + \msetch{n}{m} \\
            & \stackrel{\eqref{eq:mset_rec_rel}}{=} \msetch{n+1}{m}
            \enspace.
        \end{align}
    \end{enumerate}
\end{proof}

\begin{lemma}
    \label{lem:supcon_combinatorics}
    Let $l \in \set{1,\dots,b-1}$, $Y\in \mcY^N$  and $y\in \mcY$.
    For every $n,m \in [N]$, the combinatorial factor $K_{n,m}(y,l)$ has value
    \begin{equation}
        K_{n,m}(y,l) = \frac{|\mcB_{y,l}|}{\mult{Y}(y)(N - \mult{Y}(y))} 
        \enspace.
    \end{equation}
\end{lemma}
\begin{proof}
    We have
    \begin{equation}
        \begin{split}
        K_{n,m}(y,l)
        & = 
            \frac{1}{l(b-l)} 
            \sum_{B \in \mcB_{y,l}}
            \mult{ \By{y}{Y}{B} }(n)
            \mult{ \By{y}{Y}{B}^C }(m) \\
        & = 
            \frac{1}{l(b-l)} 
            \sum_{p=1}^l
            \sum_{q=1}^{b-l}
            \sum_{\substack{B \in \mcB_{y,l} \\ \mult{B_y}(n) = p \\ \mult{{B_y}^C}(m) = q}}
            \mult{ \By{y}{Y}{B} }(n)
            \mult{ \By{y}{Y}{B}^C }(m)
            \\
        & = 
            \frac{1}{l(b-l)} 
            \sum_{p=1}^l p
            \sum_{q=1}^{b-l} q
            \sum_{\substack{B \in \mcB_{y,l} \\ \mult{B_y}(n) = p \\ \mult{{B_y}^C}(m) = q}} 1
            \label{lem:supcon_combinatorics:eq1}
        \enspace.
    \end{split}
    \end{equation}
    Therefore, it is crucial to calculate the cardinality $|\set{B \in \mcB_{y,l}: \mult{B_y}(n) = p,~ \mult{{B_y}^C}(m) = q}|$.

    We can think of each batch $B\in \mcB$ satisfying the condition
    $$B \in \set{B \in \mcB_{y,l},~ \mult{B_y}(n) = p,~ \mult{{B_y}^C}(m) = q}$$
    as a disjoint union of multisets $B = C_n \sqcup C_m \sqcup C_y \sqcup C_{y^C}$,
    where
    \begin{itemize}
        \item $C_n$ is a $p$-multiset over the singleton $\set{n}$, 
        \item $C_m$ is a $q$-multiset over the singleton $\set{m}$,
        \item $C_y$ is a $(l-p)$-set over the multiset $\set{i \in [N]\setminus\set{n}|~ y_i = y}$ of cardinality $\mult{Y}(y)-1$ and
        \item $C_m$ is a $(b-l-q)$-set over the multiset $\set{j \in [N]\setminus\set{n}|~ y_j \neq y}$ of cardinality $N - \mult{Y}(y)-1$.
    \end{itemize}
    We write $\mcC_n, \mcC_m, \mcC_y$ and $\mcC_{y^C}$ for the respective sets of multisets. These sets are of cardinalities (see \cref{eq:mset_card})
    \begin{alignat*}{2}
        |\mcC_{n}|&= \msetch{1}{p} = 1, \qquad
        |\mcC_{y}|&&= \msetch{\mult{Y}(y)-1}{l-p}, \\
        |\mcC_{m}|&= \msetch{1}{q} = 1, \qquad
        |\mcC_{y^C}|&&= \msetch{N - \mult{Y}(y) - 1}{b-l-q},
    \end{alignat*}
    and so 
    \begin{equation}
    \begin{split}
        \left|\set{B \in \mcB_{y,l}: \mult{B_y}(n) = p,~ \mult{{B_y}^C}(m) = q}\right| 
        = |\mcC_{n}| \, |\mcC_{m}|\, |\mcC_{y}| \, |\mcC_{y^C}|\\
        =  \underbrace{\msetch{1}{p} \msetch{1}{q}}_{=1}
        \msetch{\mult{Y}(y)-1}{l-p}  \msetch{N - \mult{Y}(y) - 1}{b-l-q}
        \enspace.
    \end{split}
    \end{equation}
    By a similar argument,
    \begin{align}
        \left|\set{B \in \mcB_{y,l} }\right|
        &= \msetch{\mult{Y}(y)}{l} \msetch{N - \mult{Y}(y) }{b-l}
        \enspace.
        \label{eq:card_of_byl1}
    \end{align}
    Therefore, the sum from \cref{lem:supcon_combinatorics:eq1} simplifies to
    \begin{align}
        K_{n,m}(y,l)
        &=
        \frac{1}{l(b-l)} 
        \sum_{p=1}^l p
        \sum_{q=1}^{b-l} q\left|\set{B \in \mcB_{y,l},~ \mult{B_y}(n) = p,~ \mult{{B_y}^C}(m) = q}\right|\\
        & =
            \frac{1}{l(b-l)} 
            \sum_{p=1}^l p \msetch{\mult{Y}(y)-1}{l-p}
            \sum_{q=1}^{b-l} q \msetch{N - \mult{Y}(y) - 1}{b-l-q}
        \enspace.
    \end{align}
    Leveraging \cref{eq:mset_sum} and \cref{eq:card_of_byl1}, we get the claimed result
    \begin{align}
        K_{n,m}(y,l)
        & = 
            \frac{1}{l(b-l)} 
            \frac{l}{\mult{Y}(y)}
            \msetch{\mult{Y}(y)}{l}
            \frac{b-l}{N - \mult{Y}(y)}
            \msetch{N - \mult{Y}(y)}{b-l}
            \\
        & = 
            \frac{|\mcB_{y,l}|}{\mult{Y}(y)(N - \mult{Y}(y))} 
        \enspace.
    \end{align}    
\end{proof}
\begin{lemma}
    \label{lem:supcon_rep_sum}
    Let $\mcZ = \bbS_{\rho_{\mathcal Z}}$.
    For every $Z\in \mcZ^N$ and every $Y\in \mcY^N$, we have that
    \begin{equation}
        \sum_{y\in \mcY}
        \sum_{\substack{n\in [N]\\y_n = y}}
        \sum_{\substack{m\in [N]\\y_n \neq y}}
        \inprod{z_n}{z_m}
        \ge
        - {\rho_{\mathcal Z}}^2 
            \sum_{y\in \mcY}
            \mult{Y}(y)^2 
        \enspace,
    \end{equation}
    where equality is attained if and only if the following conditions hold:
    \begin{enumerate}[label={(Q\arabic*)}, start=5,labelindent=10pt,leftmargin=!,labelwidth=\widthof{\ref{con:supcon_rep_sum:2}}]
        \item \label{con:supcon_rep_sum:1}
            $\sum_{n\in [N]} z_n = 0$\enspace.
        \item \label{con:supcon_rep_sum:2}
            For every $n,m\in [N]$, $y_n = y_m$ implies $z_n = z_m$\enspace.        
    \end{enumerate}
\end{lemma}

\begin{proof}
    We first rewrite the sum as 
    \begin{align}
        \sum_{y\in \mcY}
        \sum_{\substack{n\in [N]\\y_n = y}}
        \sum_{\substack{m\in [N]\\y_n \neq y}}
        \inprod{z_n}{z_m}
        &=
            \sum_{y\in \mcY}
            \sum_{\substack{y'\in \mcY \\ y \neq y'}}
            \sum_{\substack{n\in [N]\\y_n = y}}
            \sum_{\substack{m\in [N]\\y_m = y'}}
            \inprod{z_n}{z_m}
            \\
        & = 
            \sum_{y\in \mcY}
            \sum_{y'\in \mcY}
            \sum_{\substack{n\in [N]\\y_n = y}}
            \sum_{\substack{m\in [N]\\y_m = y'}}
            \inprod{z_n}{z_m}
            - 
            \sum_{y\in \mcY}
            \sum_{\substack{n\in [N]\\y_n = y}}
            \sum_{\substack{m\in [N]\\y_m = y}}
            \inprod{z_n}{z_m}
            \\
        & = 
            \sum_{\substack{n\in [N]}}
            \sum_{\substack{m\in [N]}}
            \inprod{z_n}{z_m}
            - 
            \sum_{y\in \mcY}
            \sum_{\substack{n\in [N]\\y_n = y}}
            \sum_{\substack{m\in [N]\\y_m = y}}
            \inprod{z_n}{z_m}
            \\
        & =  
            \inprod{\sum_{\substack{n\in [N]}} z_n}{\sum_{\substack{n\in [N]}}z_n}
            -
            \sum_{y\in \mcY}
            \sum_{\substack{n\in [N]\\y_n = y}}
            \sum_{\substack{m\in [N]\\y_m = y}}
            \inprod{z_n}{z_m}
            \enspace,
    \end{align}
    where, for the last step, we used the linearity of the inner product.
    Using that the norm is positive-definite and applying the Cauchy-Schwarz inequality, yields the following lower bound:
    \begin{align}
        \sum_{y\in \mcY}
        \sum_{\substack{n\in [N]\\y_n = y}}
        \sum_{\substack{m\in [N]\\y_n \neq y}}
        \inprod{z_n}{z_m}
        & =
            \norm{ \sum_{n\in [N]} z_n}^2  
            -
            \sum_{y\in \mcY}
            \sum_{\substack{n\in [N]\\y_n = y}}
            \sum_{\substack{m\in [N]\\y_m = y}}
            \inprod{z_n}{z_m}
            \\
        & \stackrel{\ref{con:supcon_rep_sum:1}}{\ge}
            0  - 
            \sum_{y\in \mcY}
            \sum_{\substack{n\in [N]\\y_n = y}}
            \sum_{\substack{m\in [N]\\y_m = y}}
            \inprod{z_n}{z_m}
            \\
        & \stackrel{\ref{con:supcon_rep_sum:2}}{\ge}            
            - 
            \sum_{y\in \mcY}
            \sum_{\substack{n\in [N]\\y_n = y}} \norm{z_n}
            \sum_{\substack{m\in [N]\\y_m = y}}
            \norm{z_m}
            \\
        & = 
            - \sum_{y\in \mcY}
            \left(
                \mult{Y}(y) {\rho_{\mathcal Z}}
            \right)^2
        =        
            - {\rho_{\mathcal Z}}^2 
            \sum_{y\in \mcY}
            \mult{Y}(y)^2 
        \enspace
    \end{align}
    Equality is attained if and only if the following conditions hold:
    \begin{enumerate}[labelindent=10pt,leftmargin=!,labelwidth=\widthof{\ref{con:supcon_rep_sum:2}}]
        \item [\ref{con:supcon_rep_sum:1}]$\sum_n z_n = 0$
        \item [\ref{con:supcon_rep_sum:2}]
        We have equality in all applications of the Cauchy-Schwarz inequality, i.e., for every $y\in \mcY$ and every $n,m\in [N]$ with $y_n=y_m=y$ there exists $\lambda(y,n,m)\ge0$ such that $z_n = \lambda (y,n,m)z_m$.
        Since $\mcZ$ is a sphere $\lambda(y,n,m) = 1$ and so the above is equivalent to $y_n=y_m$ implies $z_n = z_m$.
    \end{enumerate}
\end{proof}
\begin{lemma}[Sum of repulsion terms]
    \label{lem:supcon_rep}
    Let $l\in\set{2,\dots,b-1}$ and let $\mcZ = \bbS_{\rho_{\mathcal Z}}^{h-1}$.
    For every $Z \in \mcZ^N$ and every balanced
    $Y\in \mcY^N$, we have that
    \begin{equation}
        \sum_{y \in \mcY}\sum_{B \in \mcB_{y,l}}
        S_{\text{rep}}(Z;Y,B,y)
        \ge
        -  
        |\mcB_{y,l}| \frac{|\mcY|}{|\mcY|-1}
        {\rho_{\mathcal Z}}^2 
        \enspace,
    \end{equation}
    where equality is attained if and only if
    the conditions \ref{con:supcon_rep_sum:1} {\normalfont\&} \ref{con:supcon_rep_sum:2} from \cref{lem:supcon_rep_sum} are fulfilled.
\end{lemma}
\begin{proof}
    Recall from \cref{lem:supcon_batches_to_indices} that 
    \begin{equation}
        \sum_{B \in \mcB_{y,l}}
        S_{\text{rep}}(Z;Y,B,y)
        = 
            \sum_{\substack{n\in [N]\\ y_n = y}} 
            \sum_{\substack{m\in [N] \\ y_m \neq y}}
            K_{n,m}(y,l) \,
            \inprod{z_n}{z_m}
        \enspace,
    \end{equation}
    and from \cref{lem:supcon_combinatorics} that
    \begin{equation}
        K_{n,m}(y,l) = \frac{|\mcB_{y,l}|}{\mult{Y}(y)(N - \mult{Y}(y))} \enspace.
    \end{equation}
    Therefore,
    \begin{equation}
        \sum_{y \in \mcY}\sum_{B \in \mcB_{y,l}}
        S_{\text{rep}}(Z;Y,B,y)
        \stackrel{\text{Lem.~}\ref{lem:supcon_batches_to_indices}}{=} 
            \sum_{y\in \mcY}
            \sum_{\substack{n\in [N]\\ y_n = y}} 
            \sum_{\substack{m\in [N] \\ y_m \neq y}}
            \frac{|\mcB_{y,l}|}{\mult{Y}(y)(N - \mult{Y}(y))} \,
            \inprod{z_n}{z_m}
        \enspace.
    \end{equation}
    Since $Y$ is balanced, i.e. $\mult{Y}(y) = \nicefrac{N}{|\mcY|}$ for every $y\in \mcY$, the term
    \begin{equation}
        \frac{|\mcB_{y,l}|}{\mult{Y}(y)(N - \mult{Y}(y))} 
        = \frac{|\mcB_{y,l}|}{N^2} \frac{|\mcY|^2}{|\mcY|-1}
        \label{eq:supcon:K_balanced}
    \end{equation}
    does not depend on the labels $y$ as (1) $|\mcB_{y,l}|$ is independent from $y$ due to symmetry and (2) so is $\mult{Y}(y)$.
    For brevity, we will still write $|\mcB_{y,l}|$ in the following, but keep in mind that it is constant \wrt $y$. 
    Furthermore, by \cref{lem:supcon_rep_sum}
    \begin{equation}
        \sum_{y\in \mcY}
        \sum_{\substack{n\in [N]\\ y_n = y}} 
        \sum_{\substack{m\in [N] \\ y_m \neq y}}
        \inprod{z_n}{z_m}
        \ge
            - {\rho_{\mathcal Z}}^2 
            \sum_{y\in \mcY}
            \mult{Y}(y)^2 
        = -  \frac{N^2}{|\mcY|} {\rho_{\mathcal Z}}^2 
        \enspace,
    \end{equation}
    where equality is attained if and only if the conditions \ref{con:supcon_rep_sum:1} \& \ref{con:supcon_rep_sum:2} are fulfilled.
    Therefore, we obtain the claimed bound
    \begin{align}
        \sum_{y \in \mcY}\sum_{B \in \mcB_{y,l}}
        S_{\text{rep}}(Z;Y,B,y)
        &= 
            \frac{|\mcB_{y,l}|}{N^2} \frac{|\mcY|^2}{|\mcY|-1}
            \sum_{y\in \mcY}
            \sum_{\substack{n\in [N]\\ y_n = y}} 
            \sum_{\substack{m\in [N] \\ y_m \neq y}}
            \inprod{z_n}{z_m}
            \\
        &\ge 
            -    
            \frac{|\mcB_{y,l}|}{N^2} \frac{|\mcY|^2}{|\mcY|-1}            
            \frac{N^2}{|\mcY|} {\rho_{\mathcal Z}}^2 
            \\
        & = 
            -    
            |\mcB_{y,l}| \frac{|\mcY|}{|\mcY|-1}
            {\rho_{\mathcal Z}}^2 
        \enspace.
    \end{align}
\end{proof}

As we have lower-bounded the attraction and repulsion components in Lemmas~\ref{lem:supcon_att} and \ref{lem:supcon_rep}, respectively, the following lemma, bounding the exponent in \cref{eq:supcon_outer_bound} of \cref{lem:supcon_outer_bound}, is an immediate consequence.

\begin{lemma}
    \label{lem:supcon_inner_bound}
    Let $l\in \set{2,\dots,b-1}$ and let $\mcZ = \bbS_{\rho_{\mathcal Z}}$.
    For every $Z \in \mcZ^N$ and every balanced $Y\in \mcY^N$, we have that
    \begin{equation}
        \frac{1}{M_l}
        \sum_{y \in \mcY} \sum_ {B \in \mcB_{y,l}}
        S(Z;Y,B,y)
        \ge 
        \frac{|\mcY|}{|\mcY|-1}{\rho_{\mcZ}}^2 
        \enspace,
    \end{equation}
    where equality is attained if and only if the following conditions hold:
    \begin{enumerate}[label={(A\arabic*)},start=3, labelindent=10pt,leftmargin=!,labelwidth=\widthof{\ref{con:supcon_inner_bound:mean}}]
        \item \label{con:supcon_inner_bound:collapse}For every $n,m\in [N]$, $y_n = y_m$ implies $z_n = z_m$\enspace.  
        \item \label{con:supcon_inner_bound:mean}
        $\sum_{n\in [N]} z_n = 0$\enspace.
    \end{enumerate}       
\end{lemma}
\begin{proof}
    Since $Y$ is balanced, $|\mcB_{y,l}|$ does not depend on $y$, and so 
    \begin{equation}
        M_l = {\sum_{y\in Y} |\mcB_{y,l}|} = |\mcY| |\mcB_{y,l}|
        \enspace.
    \end{equation}
    Leveraging the bounds on the sums of the attraction terms $S_{\text{att}}(Z;Y,B,y)$ and of the repulsion terms $S_{\text{rep}}(Z;Y,B,y)$ from \cref{lem:supcon_att} and \cref{lem:supcon_rep}, respectively, we get
    \begin{align}
        \sum_{y \in \mcY} \sum_ {B \in \mcB_{y,l}}
        S(Z;Y,B,y)
        & = 
            \left(
                \sum_{y \in \mcY} \sum_ {B \in \mcB_{y,l}}
                S_{\text{att}}(Z;Y,B,y)
            \right)
            {+}
            \left(
                \sum_{y \in \mcY} \sum_ {B \in \mcB_{y,l}}
                S_{\text{rep}}(Z;Y,B,y)
            \right)
            \\
        & \ge 
            - |\mcY| |\mcB_{y,l}|
            {\rho_{\mcZ}}^2
            - 
            |\mcB_{y,l}| \frac{|\mcY|}{|\mcY|-1}
            {\rho_{\mathcal Z}}^2 
            \\
        & = 
            - |\mcY| |\mcB_{y,l}|
            {\rho_{\mcZ}}^2
            \left(
                1 + \frac{1}{|\mcY|-1}
            \right)
            \\
        & = 
            - |\mcY| |\mcB_{y,l}|
            {\rho_{\mcZ}}^2
            \frac{|\mcY|}{|\mcY|-1}   
        \enspace.
    \end{align}
    This is the bound as stated in the lemma. 
    Herein, equality is attained if and only if equality is attained in \cref{lem:supcon_att} and \cref{lem:supcon_rep}.
    Since conditions \ref{con:supcon_att} and  \ref{con:supcon_rep_sum:2} are the same as condition \ref{con:supcon_inner_bound:collapse} and, additionally, since condition \ref{con:supcon_rep_sum:1} is the same as condition \ref{con:supcon_inner_bound:mean}, the lemma follows.
\end{proof}
\begin{lemma}
    \label{lem:supcon_final}
    Combining \cref{lem:supcon_outer_bound} and \cref{lem:supcon_inner_bound} implies that 
    the supervised contrasive loss $\LSC(Z;Y)$ is bounded from below by
    \begin{equation}
        \LSC(Z;Y) 
        \ge 
        \sum_{l =2}^{b} 
        l M_l
        \log 
        \left( 
            l - 1 + (b-l)
            \exp \left( 
                - \frac{|\mcY|}{|\mcY|-1} {\rho_{\mcZ}}^2                
            \right)
        \right)
        \enspace,
    \end{equation}
    where equality is attained if and only if there are $\zeta_1, \dots, \zeta_{|\mcY|} \in \R^h$ such that the following conditions hold:
    \begin{enumerate}[label={(C\arabic*)},labelindent=10pt,leftmargin=!,labelwidth=\widthof{\ref{con:supcon_final:2}}]
        \item\label{con:supcon_final:1} 
        $\forall n \in [N]: z_n = \zeta_{y_n}$
        \item\label{con:supcon_final:2} 
        $\{\zeta_y\}_{y\in \mcY}$ form a $\rho_{\mcZ}$-sphere-inscribed regular simplex
    \end{enumerate}
\end{lemma}
\begin{proof}
    We have that
    \begin{align}        
        \LSC(Z;Y)
        & \stackrel{\text{Lem.}~\ref{lem:supcon_outer_bound}\phantom{0}}{\ge}
            \sum_{l = 2}^b l M_l                           
            \log 
            \bigg( 
                l - 1 + (b-l)
                \exp \bigg( 
                    \frac {1} {M_l}
                    \sum_{y \in \mcY} \sum_ {\substack{ B \in \mcB\\\mult{\ymset(B)}(y) = l}}\!\!
                    S(Z;Y,B,y)
                \bigg)\!
            \bigg)
            \label{eq:lem:supcon_final:1}
            \\
        &\stackrel{\text{Lem.}~\ref{lem:supcon_inner_bound}}{\ge}
            \sum_{l = 2}^b l M_l                           
            \log 
            \left( 
                l - 1 + (b-l)
                \exp \left(
                    -\frac{|\mcY|}{|\mcY|-1}{\rho_{\mcZ}}^2 
                \right)\!
            \right)
        \enspace.
    \end{align}
    Equality holds if and only if the equality conditions of \cref{lem:supcon_outer_bound} and \cref{lem:supcon_inner_bound} are fulfilled, i.e. if and only if:
    \begin{enumerate}[labelindent=10pt,leftmargin=!,labelwidth=\widthof{\ref{con:supcon_inner_bound:mean}}]
        \item [\ref{con:supcon_outer_bound:intra}]
            There exists a constant $\alpha$, such that $\forall n,m\in [N]$, $y_n = y_m$ implies $\inprod{z_n}{z_m} = \alpha$\enspace.
        \item [\ref{con:supcon_outer_bound:inter}]
        There exists a constant $\beta$, such that $\forall n,m\in [N]$, $y_n \neq y_m$ implies $\inprod{z_n}{z_m} = \beta$\enspace.
        \item[\ref{con:supcon_inner_bound:collapse}] For every $n,m\in [N]$, $y_n = y_m$ implies $z_n = z_m$\enspace. 
        \item [\ref{con:supcon_inner_bound:mean}]
        $\sum_{n\in [N]} z_n = 0$ 
    \end{enumerate}

    \emph{
    Note that \cref{lem:supcon_inner_bound} does not hold for $l=b$, so the exponent in \cref{eq:lem:supcon_final:1} might differ in this case. However, this is irrelevant as, in this case, the factor $(b-l)$ in front of the exponential function vanishes.}

    To finish the proof, we need to show under the assumption $\mcZ = \bbS_{\rho_{\mathcal Z}}$, that these conditions are equivalent to that there are $\zeta_1,\dots,\zeta_{|\mcY|}$ such that
    \begin{enumerate}[ labelindent=10pt,leftmargin=!,labelwidth=\widthof{\ref{con:supcon_final:2}}]
        \item[\ref{con:supcon_final:1}]
        $\forall n \in [N]: z_n = \zeta_{y_n}$ and
        \item[\ref{con:supcon_final:2}]
        $\{\zeta_y\}_{y\in \mcY}$ form a $\rho_{\mcZ}$-sphere-inscribed regular simplex, i.e.,
        \begin{enumerate}[label=(S\arabic*)]
            \item\label{def:simplex:s1:sc} $\sum_{y \in \mcY} \zeta_y = 0$,
            \item\label{def:simplex:s2:sc} $\| \zeta_y \| = \rho_{\mathcal Z}$ for $y \in \mcY$,
            \item\label{def:simplex:s3:sc} $\exists d \in \R: d = \inprod{\zeta_y}{\zeta_{y'}}$ for $y,y'\in \mcY$ with $y\neq y'$. 
        \end{enumerate}
    \end{enumerate}
    Obviously, \ref{con:supcon_inner_bound:collapse} $\Longleftrightarrow$ \ref{con:supcon_final:1} and
    \ref{def:simplex:s2:sc} holds by assumption. Now, assuming that \ref{con:supcon_inner_bound:collapse} resp. \ref{con:supcon_final:1} already hold, we have
    \ref{con:supcon_inner_bound:mean} $\Longleftrightarrow$ \ref{def:simplex:s1:sc},
    \ref{con:supcon_outer_bound:inter} $\Longleftrightarrow$ \ref{def:simplex:s3:sc},
    and 
    \ref{con:supcon_outer_bound:intra} $\Longleftarrow$
    \ref{def:simplex:s2:sc}.
    All implications are straightforward. We state an exemplary proof for \ref{con:supcon_outer_bound:intra} $\Longleftarrow$
    \ref{def:simplex:s2:sc}.
    Let $n,m\in [N]$ such that $y=y_n = y_m$. By condition \ref{con:supcon_final:1}, $z_n = z_m = \zeta_y$, so by condition \ref{def:simplex:s2:sc}, $\inprod{z_n}{z_m} = \norm{\zeta_y}^2 = {\rho_{\mathcal Z}}^2$, which does not depend on $n$ and $m$.
\end{proof}
  \clearpage
\section{Proofs for \cref{subsection:analysis_ce}}
    
In this section, we will prove \cref{thm:ce_bound_frob} of the main manuscript and its corollaries. First, we recall the main definitions of the paper and introduce an auxiliary function.
\vskip1ex
Throughout this section the following objects will appear repeatedly an thus are introduced one-off:
\begin{itemize}
    \item $h, N, K \in \N$
    \item $\mcZ = \R^h$
    \item $\mcY = \{1, \dots, K\} = [K]$
\end{itemize}
We additionally assume $|\mcY| = K \leq h +1$.

\rest@def@ce*

\begin{definition}[Auxiliary function $S$]
    \label{def:ce_aux_s}
    Let $\mcZ = \R^h$, then we define
    \begin{align*}
        \SCE(\,\cdot\,,\,\cdot\,;\,Y): \mcZ^N \times \mcZ^K &\to \bbR \\
        (Z,W) &\mapsto 
        \frac 1 N \frac K {K-1}
                \sum_{n \in [N]}
                \inprod{z_n}{\bar w - w_{y_n}}
                \enspace,
    \end{align*}
    where $\bar{w} = \frac 1 {|\mcY|} \sum_{y\in \mcY} w_y$.
\end{definition}

\begin{lemma}
\label{lem:ce_bound_1}
    Let $h, K, N\in \N$, $\mcZ = \bbR^h$. Further, let
    \begin{alignat*}{2}    
    Z &= (z_n)_{i=n}^N &&\in \mcZ^N,\\
    W &= (w_y)_{y=1}^K &&\in \mcZ^K,\\
    Y &= (y_n)_{i=n}^N &&\in \mcY^N\enspace.
    \end{alignat*}
    It holds that 
    \[
    \LCE(Z,W;Y)
            \ge \log \Big(
                1 + (K-1) \exp \big(
                    S(Z, W;\,Y)
                \big)
            \Big)\enspace,
    \]
    where $S$ is as in \cref{def:ce_aux_s}. 
    Equality is attained if and only if the following conditions hold:
    \begin{enumerate}[label={(P\arabic*)},labelindent=10pt,leftmargin=!,labelwidth=\widthof{\ref{con:ce_bound:P1}}]
        \item $\forall n \in [N]$ $\exists M_n$ such that $\forall y\in \mcY\setminus\set{y_n}$ all inner products $\inprod{z_n}{w_y} = M_n$ are equal. 
        \label{con:ce_bound:P1}
        \item $\exists M\in \bbR$ such that $\forall n\in [N]$ it holds that $\sum_{\substack{y \in \mcY \\ y \neq y_n}} (\inprod{z_n}{w_y} - \inprod{z_n}{w_{y_n}}) = M$\enspace.
        \label{con:ce_bound:P2}
    \end{enumerate}

\end{lemma}
\begin{proof}
    Using the identities $\log(t) = - \log(1/t)$ and $\exp(a)/\exp(b)= \exp (a -b)$, rewrite the cross-entropy loss in the equivalent form
    \begin{equation}
        \LCE(Z,W;Y)
            = \frac 1 N \sum_{n \in [N]} \log \left(
                1 + \sum_{\substack{y \in \mcY \\ y \neq y_n}}
                \exp \left(
                    \inprod{z_n}{w_y} - \inprod{z_n}{w_{y_n}}
                \right)
            \right)
        \enspace.
    \end{equation}
    In order to bound $\LCE$ from below, we apply Jensen's inequality twice; first for the convex function $t \mapsto \exp(t)$ and then for the convex function $t \mapsto \log(1+\exp(t))$:
    \begin{align}
        \LCE(Z,W;Y)
            &\stackrel{\ref{con:ce_bound:P1}}{\ge} 
            \!\frac 1 N \sum_{n \in [N]} \! \log \left(\!
                1 + (K-1) \exp \left(
                    \frac {1}{K-1} 
                    \sum_{\substack{y \in \mcY \\ y \neq y_n}}
                    \left(\inprod{z_n}{w_y} - \inprod{z_n}{w_{y_n}}\right)
                \!\right)                 
            \!\right)
            \\
            &\stackrel{\ref{con:ce_bound:P2}}{\ge}
            \!\log \left(\!
                1 + (K-1) \exp \left(\frac 1 N \frac 1 {K-1} \sum_{n \in [N]}\sum_{\substack{y \in \mcY \\ y \neq y_n}}
                \left( \inprod{z_n}{w_y} - \inprod{z_n}{w_{y_n}} \right)
                \!\right)
            \!\right)
        \label{lem:ce_bound_1:eq1}
    \end{align}
    By the linearity of the inner product and as the addend for $y = y_n$ equals zero, the exponent in \cref{lem:ce_bound_1:eq1} is simply $S(Z,W;Y)$, which proves the bound. 
    
    The equality condition is obtained from the combination of the equality cases in both applications of Jensen's inequality. These are:
    \begin{enumerate}
        [label={(P\arabic*)},labelindent=10pt,leftmargin=!,labelwidth=\widthof{\ref{con:ce_bound:P2}}]
        \item[\ref{con:ce_bound:P1}] $\forall n \in [N]$ $\exists M_n$ such that $\forall y\in \mcY\setminus\set{y_n}$ all inner products $\inprod{z_n}{w_y} = M_n$ are equal.
        \item[\ref{con:ce_bound:P2}] $\exists M\in \bbR$ such that $\forall n\in [N]$ it holds that $\sum_{\substack{y \in \mcY \\ y \neq y_n}} (\inprod{z_n}{w_y} - \inprod{z_n}{w_{y_n}} )= M$.
    \end{enumerate}
\end{proof}

\begin{lemma}
    \label{lem:ce_bound_2}
    Let $h, K, N\in \N$, $\rho_{\mcZ}>0$ and $\mcZ = \{z \in \R^h: \|z\| \leq \rho_{\mcZ}\}$. Further, let 
    \begin{alignat*}{2}    
    Z &= (z_n)_{n=1}^N &&\in \mcZ^N\enspace,\\
    W &= (w_y)_{y=1}^K &&\in (\R^h)^K\enspace,\\
    Y &= (y_n)_{n=1}^N &&\in \mcY^N\enspace.
    \end{alignat*}
    
    If the class configuration $Y$ is balanced, i.e., for all ${y \in \mcY}$, 
    $N_y = \big|\{ i \in [N]: y_i = y\}\big| = \nicefrac{N}{K}$,
    then
    \begin{equation}
        \SCE(Z, W;\,Y)
        \geq 
        - \rho_{\mathcal Z} \frac{\sqrt{K}}{K-1} \|W\|_F\enspace,
    \end{equation}
    where $\norm{\cdot}_F$ denotes the Frobenius norm.
    We get equality if and only if the following conditions hold:
    \begin{enumerate}[label={(P\arabic*)},labelindent=10pt,leftmargin=!,labelwidth=\widthof{\ref{con:ce_bound:P6}}]
        \setcounter{enumi}{2}
        \item\label{con:ce_bound:P3} $\forall n\in [N]$ there $\exists \lambda_n \le 0$ such that $z_n = \lambda_n (\bar w  - w_{y_n})$
        \item\label{con:ce_bound:P4} $\forall n: \norm{z_n} = \rho_{\mathcal Z}$
        \item\label{con:ce_bound:P5} $\forall y \in \mcY$ the terms $\norm{\bar w}^2+ \norm{w_y}^2 - 2 \inprod{\bar w}{w_y}$ are equal
        \item\label{con:ce_bound:P6} $\bar w = 0$
    \end{enumerate}    
\end{lemma}

\begin{proof}
    We will bound the function $\SCE$ from Lemma \ref{lem:ce_bound_1}, using the norm constraint on each $z_n \in \mcZ$. In particular, 
    \begin{align*}
        \SCE(Z,W;Y)
            & = \frac 1 N \frac K {K-1}
            \sum_{n \in [N]}
            \inprod{z_n}{ \bar w - w_{y_n}} 
            \\
            &\stackrel{\ref{con:ce_bound:P3}}{\ge}  
            - \frac 1 N \frac K {K-1}
            \sum_{n \in[N]}
            \norm{z_n}\norm{\bar w - w_{y_n}}
            \\
            &\stackrel{\ref{con:ce_bound:P4}}{\geq} 
            - \frac 1 N \frac K {K-1}
            \sum_{n \in [N]}
            \rho_{\mathcal Z} \norm{\bar w - w_{y_n}}
            \\
            & = - \frac 1 N \frac K {K-1} \rho_{\mathcal Z}
            \sum_{y \in \mcY}                        
            \norm{\bar w - w_{y}}
            \left(
                \sum_{\substack{n \in [N] \\ y_n= y}}
                1
            \right)
            \\
            & = - \frac 1 N \frac K {K-1} \rho_{\mathcal Z}
            \sum_{y \in \mcY}                    
            \norm{\bar w - w_{y}}
            N_y
            \\
            &= - \frac 1 N \frac K {K-1} \rho_{\mathcal Z}
            \frac N K
            \sum_{y \in \mcY}
            \norm{\bar w - w_{y}}
            \tag{by assumption $N_y = \frac N K$}
            \\
            &= - \frac 1 {K-1} \rho_{\mathcal Z}
            \sum_{y\in\mcY}
            \sqrt{
                \norm{\bar w}^2 + \norm{w_y}^2 - 2  \inprod{\bar w}{w_y}
            }
            \\
            &\stackrel{\ref{con:ce_bound:P5}}{\ge} 
            - \rho_{\mathcal Z} \frac K  {K-1}
            \sqrt{ \frac 1 K
                \sum_{y \in \mcY} \left(
                    \norm{\bar w}^2 + \norm{w_y}^2 - 2  \inprod{\bar w}{w_y}
                \right)                
            }
            \\ 
            &= - \rho_{\mathcal Z} \frac K  {K-1}
            \sqrt{   \frac 1 K \left(
                K \norm{\bar w}^2 +\sum_{y \in \mcY} \norm{w_y}^2 - 2  \inprod{\bar w}{\sum_{y \in \mcY}w_y}
            \right)         
            }
            \\
            &=
            - \rho_{\mathcal Z} \frac K  {K-1}
            \sqrt{     \frac 1 K           
                \sum_{y \in \mcY} \norm{w_y}^2 -   \norm{\bar w}^2
            }
            \\
            & \stackrel{\ref{con:ce_bound:P6}}{\ge}
            - \rho_{\mathcal Z} \frac K {K-1} 
            \sqrt{  \frac 1 K              
                \sum_{y \in \mcY} \norm{w_y}^2
            }
            \\
            &=
            - \rho_{\mathcal Z} \frac{\sqrt{K}}{K-1} \|W\|_F                
            \enspace,
    \end{align*}
    where
    \begin{itemize}[label={(P\arabic*)},labelindent=10pt,leftmargin=!,labelwidth=\widthof{\ref{con:ce_bound:P6}}]
        \setcounter{enumi}{2}
        \item[\ref{con:ce_bound:P3}] follows from the Cauchy-Schwarz inequality with equality if and only if $\forall n\in [N]$ there $\exists \lambda_n \le 0$ such that $z_n = \lambda_n (\bar w  - w_{y_n})$,
        \item[\ref{con:ce_bound:P4}] follows from the assumption on the space $\mcZ$, with equality if and only if $\forall n$, $\norm{z_n} = \rho_{\mathcal Z}$ is maximal,
        \item[\ref{con:ce_bound:P5}] follows from Jensen's inequality for the convex function $t\mapsto - \sqrt t$ with equality if and only if $\forall y \in \mcY$ the terms $\norm{\bar w}^2+ \norm{w_y}^2 - 2 \inprod{\bar w}{w_y}$ are equal,
        \item[\ref{con:ce_bound:P6}]follows from the positivity of the norm, with equality if and only if $\bar w = 0$, i.e. $W$ is centered at the origin.
    \end{itemize}    
\end{proof}

\rest@thm@ce@bound@frob*
\vspace{-0.5cm}
    \begin{proof}
        To prove the bound, we consecutively leverage \cref{lem:ce_bound_1} and \cref{lem:ce_bound_2}. 
        \begin{align*}
            \LCE(Z, W;\,Y) 
            &\stackrel{\text{Lem.}~\ref{lem:ce_bound_1}}{\ge}
            \log 
            \Big(
            1 + (K-1) \exp 
            \big( S(Z, W;\,Y)\big)
            \Big)\\
            &\stackrel{\text{Lem.}~\ref{lem:ce_bound_2}}{\ge}
            \log 
            \left(
            1 + (K-1) \exp 
            \left( 
                - \rho_{\mathcal Z} \frac{\sqrt{K}}{K-1} \|W\|_F
            \right)
            \right)
            \enspace.
        \end{align*} 
    
        The application of \cref{lem:ce_bound_1} and \ref{lem:ce_bound_2} above also yields the sufficient and necessary conditions for equality, which are 
         \ref{con:ce_bound:P1}, \ref{con:ce_bound:P2}, \ref{con:ce_bound:P3}, \ref{con:ce_bound:P4}, \ref{con:ce_bound:P5} and \ref{con:ce_bound:P6}. 
        It remains to prove that those conditions are equivalent to \ref{thm:ce_bound_frob:c1}, \ref{thm:ce_bound_frob:c2}, \ref{thm:ce_bound_frob:c3}. 
        That is, we need to show that
        \begin{enumerate}[label={(P\arabic*)},labelindent=10pt,leftmargin=!,labelwidth=\widthof{\ref{con:ce_bound:P6}}]
            \item[\ref{con:ce_bound:P1}] $\forall n \in [N]$ $\exists M_n$ such that $\forall y\in \mcY\setminus\set{y_n}$ all inner products $\inprod{z_n}{w_y} = M_n$ are equal, 
            \item[\ref{con:ce_bound:P2}] $\exists M\in \bbR$ such that $\forall n\in [N]$ it holds that $\sum_{\substack{y \in \mcY \\ y \neq y_n}} \inprod{z_n}{w_y} - \inprod{z_n}{w_{y_n}} = M$.
            \item[\ref{con:ce_bound:P3}]$\forall n\in [N]$ there $\exists \lambda_n \leq 0$ such that $z_n = \lambda_n (\bar w  - w_{y_n})$,
            \item[\ref{con:ce_bound:P4}] $\forall n: \norm{z_n} = \rho_{\mathcal Z}$,
            \item[\ref{con:ce_bound:P5}] $\forall y \in \mcY$ the terms $\norm{\bar w}^2+ \norm{w_y}^2 - 2 \inprod{\bar w}{w_y}$ are equal,
            \item[\ref{con:ce_bound:P6}] $\bar w = 0$
        \end{enumerate}
        are equivalent to that the existence of $\zeta_1,\dots,\zeta_{|\mcY|}\in \bbR^h$ such that
        \begin{itemize}[labelindent=10pt,leftmargin=!,labelwidth=\widthof{\ref{thm:ce_bound_frob:c3}}]
            \item[\ref{thm:ce_bound_frob:c1}] $\forall n \in [N]: z_n = \zeta_{y_n}$ 
            \item[\ref{thm:ce_bound_frob:c2}]  $\{\zeta_y\}_{y\in \mcY}$ form a $\rho_{\mcZ}$-sphere-inscribed regular simplex, \ie, it holds that 
            \begin{itemize}
                \item[\ref{def:simplex:s1}] $\sum_{y \in \mcY} \zeta_y = 0$\enspace,
                \item[\ref{def:simplex:s2}] $\| \zeta_y \| = \rho_{\mcZ}$ for $y \in \mcY$\enspace,
                \item[\ref{def:simplex:s3}] $\exists d \in \R: d = \inprod{\zeta_y}{\zeta_{y'}}$ for $1 \leq y < y' \leq K$\enspace. 
            \end{itemize}
            \item[\ref{thm:ce_bound_frob:c3}] $\exists \rho_{\mcW} > 0: \forall y \in \mcY: w_{y} = \frac{\rho_{\mcW}}{\rho_{\mcZ}}  \zeta_{y}$\enspace. 
        \end{itemize}
        The arguments for the equivalencies are given below:

        First, we show (P1) - (P6) $\implies$ (C1) - (C3):
    
        \underline{Ad \ref{thm:ce_bound_frob:c1}}: We need to show that $\forall n \in [N]:~z_n=\zeta_{y_n}$.

        Let $n\in [N]$. Conditions \ref{con:ce_bound:P3} and \ref{con:ce_bound:P6} yield $ z_n = - \lambda_n w_{y_n}$ where $\lambda_n \leq 0$.
        If $w_{y_n}=0$, this immediately implies $\ref{thm:ce_bound_frob:c1}$ with $\zeta_{y_n} = 0$. If $w_{y_n}\neq 0$, we have $|\lambda| = \norm{z_n}/\norm{w_{y_n}}$, and thus
        by \ref{con:ce_bound:P4}
        \begin{equation}
            z_n = - 
            \left(-\frac{\|z_n\|}{\|w_{y_n}\|}\right) 
            w_{y_n}
            \stackrel{\ref{con:ce_bound:P4}}{=}
            \frac{\rho_{\mcZ}}{\|w_{y_n}\|} w_{y_n}
            \enspace.
            \label{eq:thm1:1}
        \end{equation}
        Consequently, condition $\ref{thm:ce_bound_frob:c1}$ is fulfilled with $\zeta_{y_n} = \rho_{\mathcal Z} \frac{w_{y_n}}{\norm{w_{y_n}}}$.

        \underline{Ad \ref{thm:ce_bound_frob:c3}}: We need to show that $\exists \rho_{\mcW} > 0$ such that $\forall y \in \mcY$ we have $w_{y} = \frac{\rho_{\mcW}}{\rho_{\mcZ}}  \zeta_{y}$\enspace.
        
        Since $Y$ is balanced, for every label $y\in \mcY$ we have that $N_y = N/K>0$ and so there exists $n\in [N]$ with $y_n = y$.
        Thus \cref{eq:thm1:1} implies for every $y\in \mcY$ that $\zeta_y = \rho_{\mathcal Z} \frac{w_{y}}{\norm{w_{y}}}$. Hence, condition \ref{thm:ce_bound_frob:c3} is fulfilled  with $\rho_{\mathcal W} = \norm{w_y}$ if all such norms $\norm{w_y}$ agree. 
        Indeed, by condition \ref{con:ce_bound:P5}, there is $C\in \bbR$ such that for each $y\in \mcY$
        \begin{equation*}
            C
            \stackrel{\ref{con:ce_bound:P5}}{=} 
            \norm{\bar w}^2 + \norm{w_y}^2 - 2 \inprod{\bar w}{w_y}
            \stackrel{\ref{con:ce_bound:P6}}{=}
            0 + \norm{w_y}^2 - 2 \cdot 0 
            = 
            \norm{w_y}^2
            \enspace.
        \end{equation*} 

        \underline{Ad \ref{thm:ce_bound_frob:c2}}: We need to show that $\{\zeta_y\}_{y \in \mcY}$ fulfill the requirements \ref{def:simplex:s1}, \ref{def:simplex:s2} and \ref{def:simplex:s3} of the regular simplex from \cref{def:simplex}. 

        From condition~\ref{thm:ce_bound_frob:c1} and  condition~\ref{thm:ce_bound_frob:c3}, we already know that 
        \begin{equation}
        \label{thm:ce_loss_frob:eq1}
            \frac{\rho_{\mcZ}}{\rho_{\mcW}} \cdot w_{y} = \zeta_{y} 
            \text{  for  } y \in \mcY
            \enspace,
        \end{equation}
        which we will use in the following. 

        \underline{Ad \ref{def:simplex:s1}}: We need to show that $\sum_{y \in \mcY} \zeta_y = 0$.
        
        This follows directly from \cref{thm:ce_loss_frob:eq1} and condition \ref{con:ce_bound:P6}, because
        \begin{equation}
        \sum\limits_{y \in \mcY}
        \zeta_y
        \stackrel{\text{Eq.~}\eqref{thm:ce_loss_frob:eq1}}{=}
        \frac{\rho_{\mcZ}}{\rho_{\mcW}}
        \sum\limits_{y \in \mcY}
        w_{y}
        \stackrel{\ref{con:ce_bound:P6}}{=} 0
        \enspace. 
        \end{equation}
        
        \underline{Ad \ref{def:simplex:s2}}: We need to show for every $y\in \mcY$ that $\| \zeta_y \| = \rho_{\mcZ}$.

        This follows directly from \cref{thm:ce_loss_frob:eq1} and the already proven condition \ref{thm:ce_bound_frob:c3}, because
        \begin{equation}
          \|\zeta_y\|  
          \stackrel{\text{Eq.~}\eqref{thm:ce_loss_frob:eq1}}{=}
           \|\frac{\rho_{\mcZ}}{\rho_{\mcW}} \cdot w_{y_n}  \| 
          \stackrel{\ref{thm:ce_bound_frob:c3}}{=}
          \frac{\rho_{\mcZ}}{\rho_{\mcW}} \cdot \rho_{\mcW}
          = \rho_{\mcZ}
          \enspace. 
        \end{equation}

        \underline{Ad \ref{def:simplex:s3}}: We need to show that for every $y,y'\in \mcY$ with $y\neq y'$ there $\exists d \in \R: d = \inprod{\zeta_y}{\zeta_{y'}}$\enspace.

        Let $y,y' \in \mcY$ with $y\neq y'$.
        Since $Y$ is balanced, we have that $N_{y'} = \nicefrac{N}{K}>0$. Hence, there exists $n\in [N]$ with $y' = y_n$ and so
        \begin{equation}  
            \label{eq:s3:1}
            \frac{\rho_{\mcW}}{\rho_{\mcZ}} \inprod{ \zeta_{y'}}{\zeta_y}
            =
            \inprod{ \zeta_{y_n}}{\frac{\rho_{\mcW}}{\rho_{\mcZ}}\zeta_y}
            \stackrel{\text{Eq.~}\eqref{thm:ce_loss_frob:eq1}}{=} 
            \inprod{\zeta_{y_n}}{w_y}
            \stackrel{\ref{thm:ce_bound_frob:c1}}{=}
            \inprod{z_n}{w_y}
            \stackrel{\ref{con:ce_bound:P1}}{=} M_n\enspace.
        \end{equation}
        Similarly,
        \begin{equation} 
            \label{eq:s3:2}
                \inprod{z_n}{w_{y_n}}       
                \stackrel{\ref{thm:ce_bound_frob:c1}}{=}
                \inprod{\zeta_{y_n}}{w_{y_n}}
                \stackrel{\text{Eq.~}\eqref{thm:ce_loss_frob:eq1}}{=}      
                \inprod{ \zeta_{y_n}}{\frac{\rho_{\mcW}}{\rho_{\mcZ}}\zeta_{y_n}}
                =
                \frac{\rho_{\mcW}}{\rho_{\mcZ}} \|\zeta_{y_n}\|^2
                \stackrel{\ref{def:simplex:s2}}{=}
                {\rho_{\mcW}}{\rho_{\mcZ}}
                \enspace.
        \end{equation}
        We leverage condition~\ref{con:ce_bound:P1} and condition~\ref{con:ce_bound:P2} to get that there exists $M\in \bbR$ such that
        \begin{align}
            M &
            \stackrel{\ref{con:ce_bound:P2}}{=}
            \sum_{\substack{\hat y \in \mcY \\ \hat y \neq y_n}}( \inprod{z_n}{w_{\hat y}} - \inprod{z_n}{w_{y_n}}) \\
            &\stackrel{\eqref{eq:s3:2}}{=}
            \Big( \sum_{\substack{\hat y \in \mcY \\ \hat y \neq y_n}} \inprod{z_n}{w_{\hat y}} \Big)
            - (K-1){\rho_{\mcW}}{\rho_{\mcZ}} \\
            &\stackrel{\ref{con:ce_bound:P1}}{=}
            (K-1) (M_n - {\rho_{\mcW}}{\rho_{\mcZ}})\\
            &\stackrel{\eqref{eq:s3:1}}{=}
            (K-1) (\frac{\rho_{\mcW}}{\rho_{\mcZ}}\inprod{ \zeta_{y'}}{\zeta_y} - {\rho_{\mcW}}{\rho_{\mcZ}})
            \enspace. 
        \end{align}
        Thus $\inprod{ \zeta_{y'}}{\zeta_y}=d$ is constant, and $d$ can be calculated by rearranging the equation above. 
       
        Next, we show (C1) - (C3) $\implies$ (P1) - (P6) :

        We assume that there exist $\zeta_1, \dots, \zeta_K \in \R^h$ such that conditions (C1) - (C3) are fulfilled.
    
        \underline{Ad \ref{con:ce_bound:P1}}: We need to show that $\forall n \in [N]$ $\exists M_n$ such that $\forall y\in \mcY\setminus\set{y_n}$ all inner products $\inprod{z_n}{w_y} = M_n$ are equal.

         Let $n\in [N]$ and $y \in \mcY \setminus\{y_n\}$, then 
        \begin{equation}     
            \label{eq:thm1:2}
            \inprod{z_n}{w_y} 
            \stackrel{ \ref{thm:ce_bound_frob:c1}}{=}
            \inprod{\zeta_{y_n}}{w_y}
            \stackrel{\ref{thm:ce_bound_frob:c3}}{=}
            \inprod{\zeta_{y_n}}{\frac{\rho_{\mcW}}{\rho_{\mcZ}} \zeta_{y}}
            \stackrel{\ref{def:simplex:s3}}{=}
            \frac{\rho_{\mcW}}{\rho_{\mcZ}} d
            \enspace,
        \end{equation}
        so condition \ref{con:ce_bound:P1} is fulfilled with $M_n = \frac{\rho_{\mcW}}{\rho_{\mcZ}} d$.
    
        \underline{Ad \ref{con:ce_bound:P2}}: We need to show that $\exists M$ such that $\forall n\in [N]$ it holds that $\sum_{\substack{y \in \mcY \\ y \neq y_n}} ( \inprod{z_n}{w_y} - \inprod{z_n}{w_{y_n}}) = M$.
        Let $n\in [N]$. From \cref{eq:thm1:2}, we already now that for $y\in \mcY\setminus\set{y}$ it holds that $\inprod{z_n}{w_y}=  \frac{\rho_{\mcW}}{\rho_{\mcZ}} d$.
        Similarly,
        \begin{equation}  
            \inprod{z_n}{w_{y_n}} 
            \stackrel{ \ref{thm:ce_bound_frob:c1}}{=}
            \inprod{\zeta_{y_n}}{w_y}
            \stackrel{\ref{thm:ce_bound_frob:c3}}{=}
            \inprod{\zeta_{y_n}}{\frac{\rho_{\mcW}}{\rho_{\mcZ}} \zeta_{y_n}}
            \stackrel{\ref{def:simplex:s2}}{=}
            {\rho_{\mcW}}{\rho_{\mcZ}}
            \enspace.
        \end{equation}
        Therefore,
        \begin{equation}
            \sum_{y \in \mcY\setminus\set{y_n}} \left( \inprod{z_n}{w_y} - \inprod{z_n}{w_{y_n}} \right)
            = (K-1) \left(  \frac{\rho_{\mcW}}{\rho_{\mcZ}} d - {\rho_{\mcW}}{\rho_{\mcZ}} \right)
        \end{equation}
        and condition~\ref{con:ce_bound:P2} is fulfilled with $M = (K-1) \left(  \frac{\rho_{\mcW}}{\rho_{\mcZ}} d - {\rho_{\mcW}}{\rho_{\mcZ}} \right)$.       
    
        \underline{Ad \ref{con:ce_bound:P4}}: We need to show that $\forall n: \norm{z_n} = \rho_{\mathcal Z}$.
        
        This follows immediately from condition~\ref{thm:ce_bound_frob:c1} and condition~\ref{def:simplex:s2}: 
        \begin{equation}
            \norm{z_n} 
            \stackrel{\ref{thm:ce_bound_frob:c1}}{=} 
            \norm{\zeta_{y_n}} 
            \stackrel{\ref{def:simplex:s2}}{=} 
            \rho_{\mcZ}
            \enspace.
        \end{equation}
    
        \underline{Ad \ref{con:ce_bound:P6}}: We need to show that $\bar w = 0$.
        
        This follows immediately from condition~\ref{thm:ce_bound_frob:c3} and condition~\ref{def:simplex:s1}: 
        \begin{equation}
            \bar{w}=\frac{1}{K} \sum\limits_{y \in \mcY} w_y 
            \stackrel{\ref{thm:ce_bound_frob:c3}}{=}
            \frac{1}{K} \frac{\rho_{\mcW}}{\rho_{\mcZ}} \sum\limits_{y \in \mcY}  \zeta_{y}
            \stackrel{\ref{def:simplex:s1}}{=}
            0
            \enspace.
        \end{equation}

        \underline{Ad \ref{con:ce_bound:P5}}: We need to show that $\forall y \in \mcY$ the terms $\norm{\bar w}^2+ \norm{w_y}^2 - 2 \inprod{\bar w}{w_y}$.

        This follows from conditions~\ref{thm:ce_bound_frob:c3} and \ref{def:simplex:s2}, such as the already proven condition~\ref{con:ce_bound:P6}.

         Let $y \in \mcY$ then 
        \begin{equation}
            \norm{\bar w}^2+ \norm{w_y}^2 - 2 \inprod{\bar w}{w_y}
            \stackrel{\ref{con:ce_bound:P6}}{=}
            0 + \norm{w_y}^2 - 2\cdot 0 
            \stackrel{\ref{thm:ce_bound_frob:c3}}{=} 
            \| \frac{\rho_{\mcW}}{\rho_{\mcZ}} \zeta_{y_n}\|^2
            \stackrel{\ref{def:simplex:s2}}{=}
            \rho_{\mcW}^2
            \enspace,
        \end{equation}
        which, indeed, does not depend on $y$.
    
        \underline{Ad \ref{con:ce_bound:P3}}: We need to show that $\forall n\in [N]$ there $\exists \lambda_n \leq 0$ such that $z_n = \lambda_n (\bar w  - w_{y_n})$.

        Let $n \in [N]$ and consider
        \begin{equation}
            z_n 
            \stackrel{\ref{thm:ce_bound_frob:c1}}{=} 
            \zeta_{y_n} 
            \stackrel{\ref{thm:ce_bound_frob:c3}}{=} 
            \frac{\rho_{\mcZ}}{\rho_{\mcW}} w_{y_n}
        \end{equation}
        Thus, from the already proven condition~\ref{con:ce_bound:P6}, it follows that
        \begin{equation}
            \bar w - w_{y_n} 
            \stackrel{\ref{con:ce_bound:P6}}{=} - w_{y_n}
            = - \frac{\rho_{\mcW}}{\rho_{\mcZ}} z_n
        \end{equation}
        and condition~\ref{con:ce_bound:P3} is fulfilled with $\lambda_n = - \frac{\rho_{\mcZ}}{\rho_{\mcW}} \le 0$.
\end{proof}

\rest@cor@ce@bound@r*
\begin{proof}
    By leveraging \cref{thm:ce_bound_frob}, 
    we get 
    \begin{align}
        \LCE(Z, W;\,Y) 
        &\stackrel{\text{Thm.~}\ref{thm:ce_bound_frob}}{\ge}
        \log \left(
            1 + (K-1) \exp \left( 
                - \rho_{\mathcal Z} \frac {\sqrt{K}}{K-1}
                \|W\|_F
            \right)
        \right)\\
        &=
        \log \left(
            1 + (K-1) \exp \left( 
                - \rho_{\mathcal Z} \frac {\sqrt{K}}{K-1}
                \sqrt{\sum\limits_{y\in \mcY} \|w_y\|^2}
            \right)
        \right)\\
        &\ge
        \log \left(
            1 + (K-1) \exp \left( 
                - \rho_{\mathcal Z} \frac {\sqrt{K}}{K-1}
                \sqrt{\sum\limits_{y\in \mcY} r_{\mcW}^2}
            \right)
        \right)\\
        &=
        \log \left(
            1 + (K-1) \exp \left( 
                - \rho_{\mathcal Z} \frac {K}{K-1}
                r_{\mcW}
            \right)
        \right)
        \enspace, 
    \end{align}
    where equality is attained if and only if the bound from \cref{thm:ce_bound_frob} is tight, i.e.,  conditions~\ref{thm:ce_bound_frob:c1}, \ref{thm:ce_bound_frob:c2}, \ref{thm:ce_bound_frob:c3} are fulfilled and, additionally,
    \begin{equation}
    \label{cor:ce_bound_r:eq1}
        r_{\mcW} = \|w_y\| \text{  for  }y \in \mcY
        \enspace. 
    \end{equation}
    It remains to show that if conditions \ref{thm:ce_bound_frob:c1} \ref{thm:ce_bound_frob:c2} are fulfilled, then the following equivalency holds:
    \begin{equation}
        \big(
        r_{\mcW} = \|w_y\| \text{  for  }y \in \mcY
        \;\land
        \;  \text{\ref{thm:ce_bound_frob:c3}}
        \big)
        \Longleftrightarrow 
        \text{\ref{cor_ce_bound_r:c3}}
        \enspace. 
    \end{equation}

    ``$\Longrightarrow$'': We need to show that $\forall y \in \mcY: w_y =\frac{r_{\mcW}}{\rho_{\mcZ}}\zeta_y$.

    So, let $y \in \mcY$.    
    By condition~\ref{thm:ce_bound_frob:c3}, there is $\rho_{\mcW} > 0$ such that
    \begin{equation} 
        \label{cor:ce_bound_r:eq2}
        w_{y} = \frac{\rho_{\mcW}}{\rho_{\mcZ}}  \zeta_{y}
        \enspace.
    \end{equation}
    Thus \ref{cor_ce_bound_r:c3} holds if $\rho_{\mathcal W} = r_{\mathcal W}$.
    Indeed,
    \begin{equation}
        r_{\mcW} 
        \stackrel{\eqref{cor:ce_bound_r:eq1}}{=} 
        \|w_y\| 
        \stackrel{\eqref{cor:ce_bound_r:eq2}}{=} 
        \frac{\rho_{\mcW}}{\rho_{\mcZ}} \|\zeta_y\|
        \stackrel{\ref{thm:ce_bound_frob:c2}}{=}
        \frac{\rho_{\mcW}}{\rho_{\mcZ}} \rho_{\mcZ}
        =
        \rho_{\mcW}
        \enspace. 
    \end{equation}     

    ``$\Longleftarrow$'': Follows immediately as we can choose $\rho_{\mcW}=r_{\mcW}>0$.
    
\end{proof}

\begin{lemma}
\label{lem:ce_bound_3}
    Let $\lambda$, $\rho_{\mcZ} > 0$, $K, h \in \N$ and $W \in (\R^h)^K$. The function
    \begin{equation}
    \label{lem:ce_bound_3:eq_1}
        f(x) = 
        \log
        \left(
        1 + (K-1)
        \exp
        \left(
            - \rho_{\mcZ} \frac{{K}}{K-1}x
        \right)
        \right)
        + \lambda K x^2
    \end{equation}
    is minimized by $x_0 = r_{\mathcal W}(\rho_{\mathcal Z},\lambda)>0$, \ie, the \emph{unique} solution to
    \begin{equation}
        0 = 
        K \left(2 \lambda  x-\frac{\rho_{\mathcal Z} }{e^{\frac{K \rho_{\mathcal Z}  x}{K-1}}+K-1}\right)
        \enspace.
    \end{equation}
\end{lemma}
\begin{proof}
    The first derivative of $f$ is given by
    \begin{equation}
        f'(x) = 
        K \left(2 \lambda  x-\frac{\rho_{\mathcal Z} }{e^{\frac{K \rho_{\mathcal Z}  x}{K-1}}+K-1}\right)
        \enspace.
    \end{equation}
    Note that $f'$ is strictly increasing. 
    Thus $f$ is strictly convex and has a unique minimum at the point $x_0$ where $f'(x_0) = 0$. 
    As $f'$ is continuous on $(0,\infty)$ with 
    \begin{equation}
        f'(0) = - \rho_{\mathcal Z}
        < 0
    \end{equation}
    and 
    \begin{equation}
        \lim\limits_{x \rightarrow \infty} f'(x)= \infty \enspace,
    \end{equation}
    the intermediate value theorem implies $0 < x_0 = r_{\mathcal W}(\rho_{\mcZ}, \lambda) < \infty$.
    
\end{proof}
\rest@cor@ce@bound@wd*
\begin{proof}
    By leveraging \cref{thm:ce_bound_frob} and \cref{lem:ce_bound_3} (with $x = {\norm{W}_F}/{\sqrt{K}}$), 
    we get 
    \begin{align*}
        &\LCE(Z, W;\,Y) + \lambda  \norm{W}_F^2\\
        &\stackrel{\text{Thm.~}\ref{thm:ce_bound_frob}}{\ge}
        \log \left(
            1 + (K-1) \exp \left( 
                - \rho_{\mathcal Z} \frac {\sqrt{K}}{K-1}
                \|W\|_F
            \right)
        \right)
        +
        {\lambda}  \norm{W}_F^2\\
        & \stackrel{\phantom{\text{Lem.~1}}}{=} 
        \log \left(
            1 + (K-1) \exp \left( 
                - \rho_{\mathcal Z} \frac {K}{K-1}
                x
            \right)
        \right)
        +
        {\lambda K}   x^2
        \tag{by setting $x = {\norm{W}_F}/{\sqrt{K}}$}\\
        &\stackrel{\text{Lem.~}\ref{lem:ce_bound_3}}{\ge}
        \log \left(
            1 + (K-1) \exp \left( 
                - \rho_{\mathcal Z} \frac {{K}}{K-1}
                r_{\mathcal W}(\rho_{\mcZ}, \lambda)
            \right)
        \right)
        +
        \lambda K r_{\mathcal W}(\rho_{\mcZ}, \lambda)^2
        \enspace, 
    \end{align*}
    where equality is attained if and only if 
    the bound from \cref{thm:ce_bound_frob} is tight, \ie, conditions~\ref{thm:ce_bound_frob:c1}, \ref{thm:ce_bound_frob:c2}, \ref{thm:ce_bound_frob:c3} are fulfilled and, additionally,
    \begin{equation}
    \label{cor:ce_bound_wd:eq1}
        \|W\|_F/\sqrt K = r_{\mathcal W}(\rho_{\mcZ}, \lambda)
        \enspace. 
    \end{equation}
    It remains to show that if \ref{thm:ce_bound_frob:c1} and \ref{thm:ce_bound_frob:c2} are fulfilled, it holds that 
    \begin{equation}
        \big(
            \|W\|_F/\sqrt K =  r_{\mathcal W}(\rho_{\mcZ}, \lambda)\;\land\;  \text{\ref{thm:ce_bound_frob:c3}}
        \big)
        \Longleftrightarrow 
        \text{\ref{cor_ce_bound_wd:c3}}
        \enspace. 
    \end{equation}
    ``$\Longrightarrow$``: We need to show for every $y \in \mcY$ that $w_y = \frac{r_{\mathcal W}(\rho_{\mcZ},\lambda)}{\rho_{\mcZ}}\zeta_y$.
    
    So let $y \in \mcY$. By condition~\ref{thm:ce_bound_frob:c3}, 
    there exists $\rho_{\mathcal W}>0$ such that  $w_y = \nicefrac{\rho_{\mcW}}{\rho_{\mcZ}} \zeta_y$.
    Thus, condition~\ref{cor_ce_bound_wd:c3} is fulfilled if $\rho_{\mathcal W} = r_{\mathcal W}(\rho_{\mcZ},\lambda)$.
    Indeed,
    \begin{align}
        r_{\mathcal W}(\rho_{\mcZ}, \lambda) 
        &\stackrel{\eqref{cor:ce_bound_wd:eq1}}{=} 
        \frac{\|W\|_F}{\sqrt{K}} 
        = 
        \sqrt{\frac 1 K \sum\limits_{y\in \mcY} \|w_y\|^2}
        \stackrel{\ref{thm:ce_bound_frob:c3}}{=} 
        \sqrt{\sum\limits_{y\in \mcY} \|\frac{\rho_{\mcW}}{\rho_{\mcZ}} \zeta_{y}\|^2}\\
        &\stackrel{\ref{thm:ce_bound_frob:c2}}{=} 
        \sqrt{\frac 1 K \sum\limits_{y \in \mcY} \rho_{\mcW}^2}
        = \rho_{\mcW} 
        \enspace. 
    \end{align}

    ``$\Longleftarrow$``: 

    Condition~\ref{thm:ce_bound_frob:c3} Is fulfilled as we can choose
    \begin{equation}
        \rho_{\mathcal W} = r_{\mathcal W}(\rho_{\mathcal Z},\lambda) \stackrel{\text{Lem.~}\ref{lem:ce_bound_3}}{>} 0 
        \enspace. 
    \end{equation}
    Finally, $r_{\mathcal W}(\rho_{\mathcal Z},\lambda) = \norm{W}_F / \sqrt{K}$, as
    \begin{align*}
        \|W\|_F 
        &= 
        \sqrt{\sum\limits_{y\in \mcY} \|w_y\|^2}
        \stackrel{\ref{cor_ce_bound_wd:c3}}{=}
        \sqrt{\sum\limits_{y\in \mcY} \|\frac{r_{\mathcal W}(\rho_{\mcZ}, \lambda)}{\rho_{\mcZ}} \zeta_{y}\|^2}\\
        &\stackrel{\ref{thm:ce_bound_frob:c2}}{=}
        \sqrt{\sum\limits_{y\in \mcY} r_{\mathcal W}(\rho_{\mcZ}, \lambda)^2}
        =
        \sqrt{K} r_{\mathcal W}(\rho_{\mathcal Z},\lambda)
        \enspace. 
    \end{align*}
\end{proof}
  \clearpage
\section{Additional Experiments}
\label{sec:suppmat_exp}

The experiments in \cref{subsection:theory_vs_practice} suggest that representations learned by minimizing the \textbf{SC} loss might arrange closer to the (theoretically optimal) simplex configuration, compared to representations learned by minimizing the \textbf{CE} loss.
To corroborate that this disparity is due to differing optimization dynamics of the loss functions, i.e., differing trajectories in the parameter space, and not an artifact of terminating the loss minimization to early, we repeat\footnote{
    i.e., the same setup and hyperparameters as in §\ref{subsection:theory_vs_practice}, except for the \emph{number of training iterations}
    }
these experiments when optimizing over \textbf{500k} SGD iterations instead of 100k. After every 10k iterations, we freeze the model, compute the class means of representation of the training data and evaluate two geometric properties on all of the training data: (1) the \emph{cosine similarity \textbf{across} class means} and (2) the \emph{cosine similarity \textbf{to} class means}\footnote{We omit the \emph{cosine similarity across weights} as, for \textbf{SC}, this requires to train an additional linear classifier each time.}, as illustrated in Figs. \ref{fig:suppmat_experiments_cifar10}, \ref{fig:suppmat_experiments_cifar100}.

\begin{figure}
    \captionsetup{width=.99\linewidth}
    \centering
    \begin{subfigure}[b]{0.99\textwidth}
        \centering
        \includegraphics[width=0.99\textwidth]{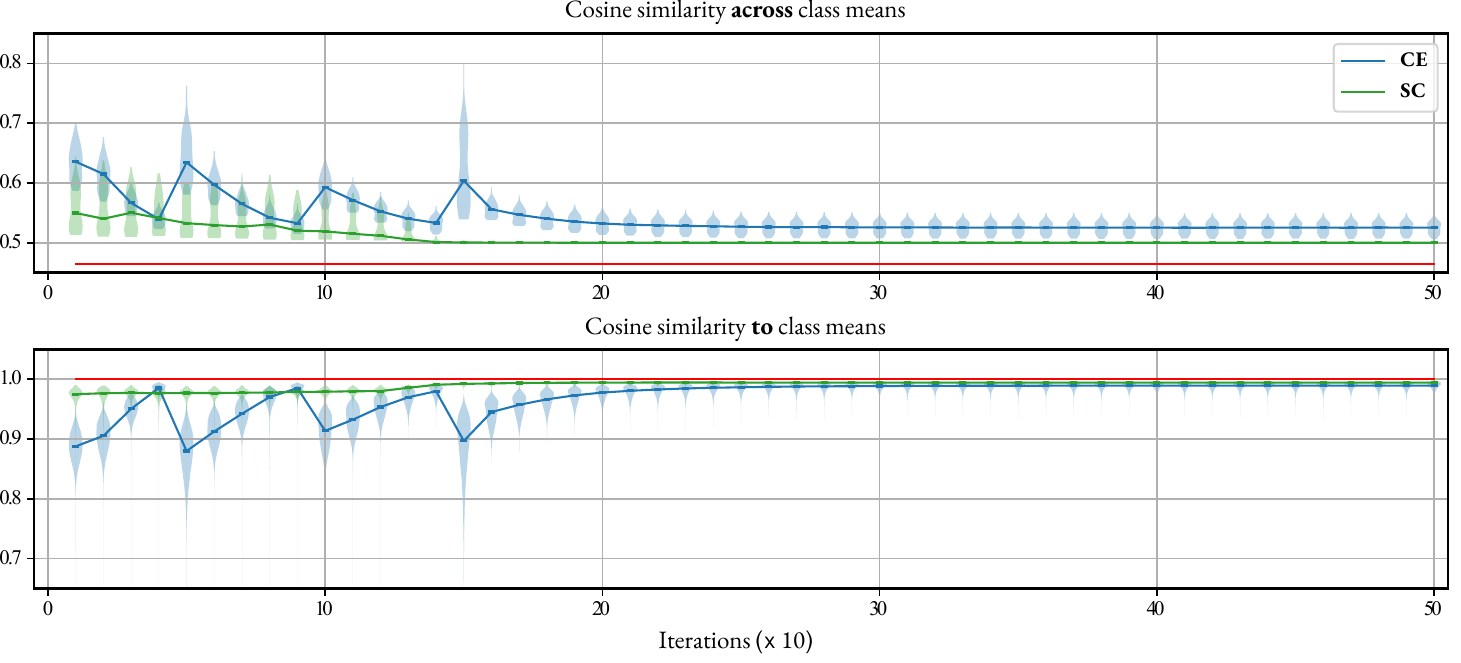}
        \caption{\textbf{CIFAR10} (\underline{without} augmentation)}
        \label{appendix:figure:experiment_cifar10_woaug}
    \end{subfigure}
    \vskip4ex
    \begin{subfigure}[b]{0.99\textwidth}
        \centering
        \includegraphics[width=0.99\textwidth]{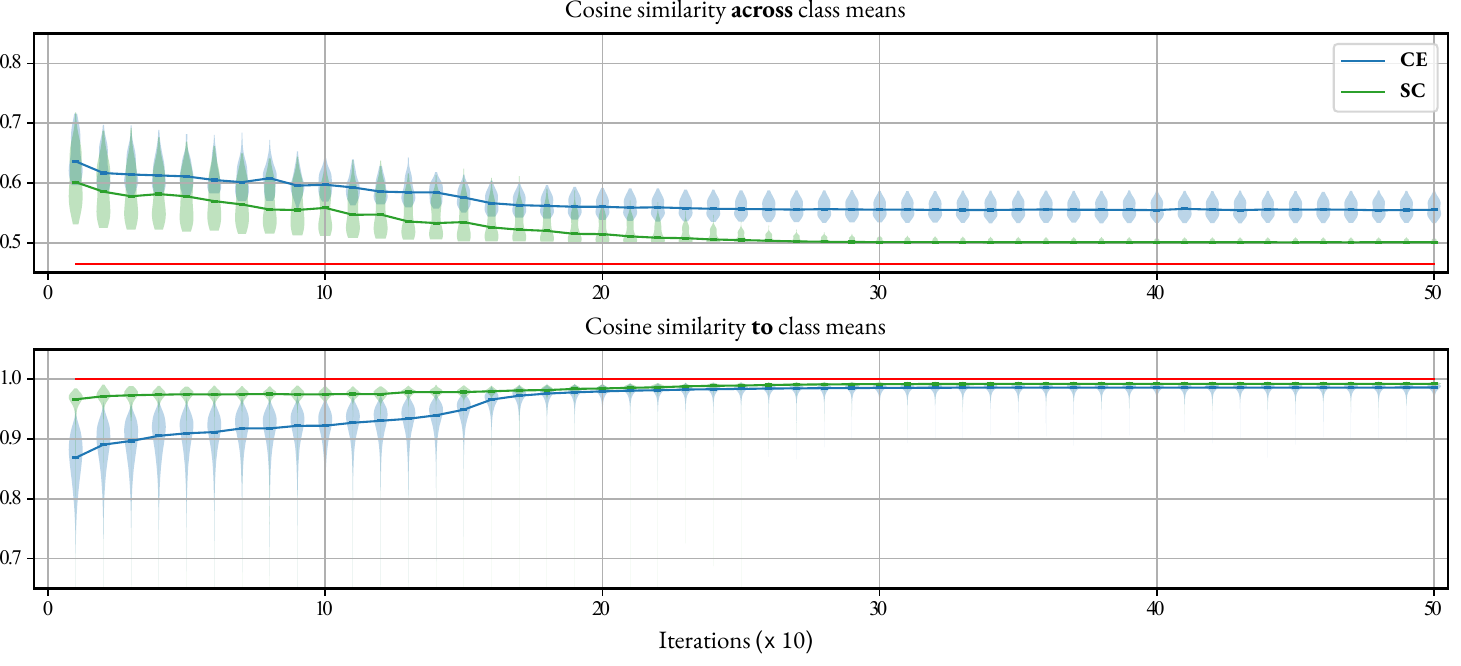}
        \caption{\textbf{CIFAR10} (\underline{with} augmentation)}
        \label{appendix:figure:experiment_cifar10_waug}
    \end{subfigure}
    \caption{
        Distribution of geometric properties of representations, $\enc_\theta(x_n)$, tracked during training. Representations are obtained from a ResNet-18 model trained \subref{appendix:figure:experiment_cifar10_waug} \underline{with}  and \subref{appendix:figure:experiment_cifar10_woaug} \underline{without} data augmentation on \textbf{CIFAR10}, with \textbf{CE} and \textbf{SC}, respectively. \textcolor{tabblue}{Blue} and \textcolor{tabgreen}{green} lines indicate the evolution of the medians over the iterations;
        \textcolor{tabred}{Red} lines indicate the \emph{sought-for} value at a regular simplex configuration.
        \label{fig:suppmat_experiments_cifar10}
        }
\end{figure}

The results reveal that (1) optimizing for 500k iterations improves convergence to the optimal state, yet at a very low speed, and (2) minimizing \textbf{SC} still yields representations closer to the simplex, compared to \textbf{CE}. The latter not only holds at the terminal stage of training, but at (almost) every evaluation step. 
Interestingly, on both datasets, the distributions of the computed properties obtained from the model trained via \textbf{CE} have notably more spread than the ones obtained from the model trained with \textbf{SC}.

Finally, we compare the geometric properties after training for 500k iteration with the ones from training over 100k iterations, i.e., Fig.~\ref{fig:geometry} in \cref{subsection:theory_vs_practice}. In case of \textbf{SC}, the distributions are roughly the same, whereas for \textbf{CE}, the distributions after 500k iterations are notably closer to the theoretical optimum than the ones after 100k iterations, particularly on the more complex CIFAR100 dataset. Once more, this highlights the faster convergence to the simplex arrangement via minimizing \textbf{SC}.
\vspace{0.2cm}

\begin{figure}
    \captionsetup{width=.99\linewidth}
    \centering
    \begin{subfigure}[b]{0.99\textwidth}
        \centering
        \includegraphics[width=0.99\textwidth]{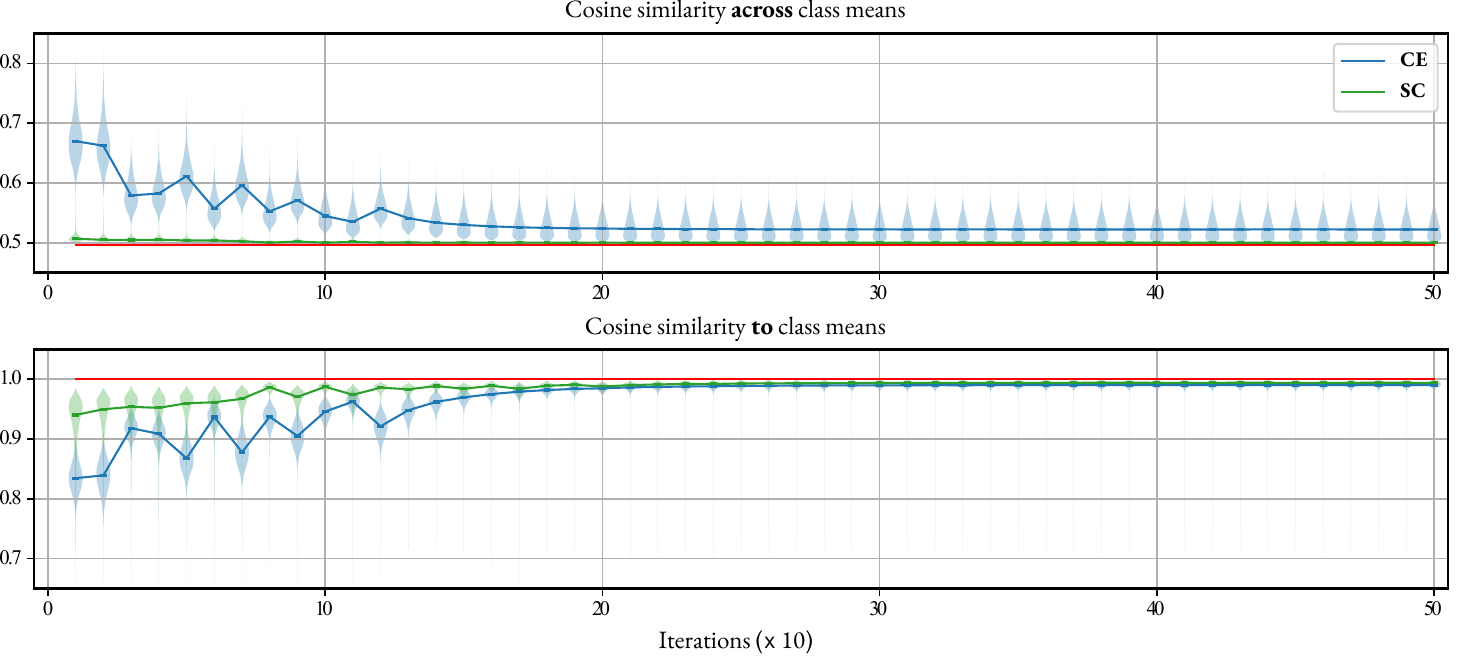}
        \caption{\textbf{CIFAR100} (\underline{without} augmentation)}
        \label{appendix:figure:experiment_cifar100_woaug}
    \end{subfigure}
    \vskip4ex
    \begin{subfigure}[b]{0.99\textwidth}
        \centering
        \includegraphics[width=0.99\textwidth]{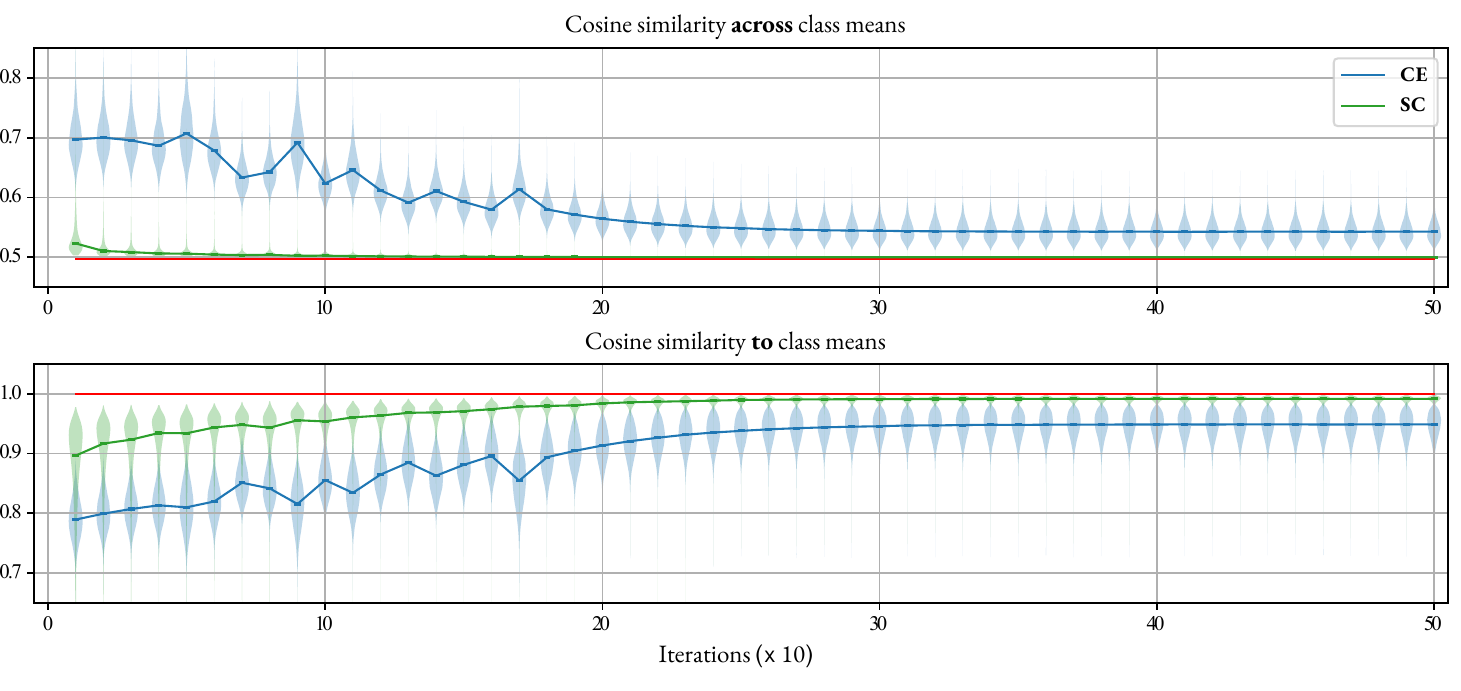}
        \caption{\textbf{CIFAR100} (\underline{with} augmentation)}
        \label{appendix:figure:experiment_cifar100_waug}
    \end{subfigure}
    \caption{
        Distribution of geometric properties of representations, $\enc_\theta(x_n)$, tracked during training. Representations are obtained from a ResNet-18 model trained \subref{appendix:figure:experiment_cifar100_waug} \underline{with}  and \subref{appendix:figure:experiment_cifar100_woaug} \underline{without} data augmentation on \textbf{CIFAR10}, with \textbf{CE} and \textbf{SC}, respectively. \textcolor{tabblue}{Blue} and \textcolor{tabgreen}{green} lines indicate the evolution of the medians over the iterations;
        \textcolor{tabred}{Red} lines indicate the \emph{sought-for} value at a regular simplex configuration.
    \label{fig:suppmat_experiments_cifar100}}
\end{figure}

\end{adjustwidth}

\end{document}